\documentclass[12pt,english]{article}

\newif\ifblinded
\blindedfalse

\usepackage[osf]{mathpazo}

\usepackage[T1]{fontenc}
\usepackage[utf8]{inputenc}
\usepackage{geometry}
\usepackage{verbatim}
\usepackage{amsmath}
\usepackage{setspace}

\usepackage{array}
\usepackage{float}
\usepackage{units}
\usepackage{multirow}
\usepackage{amsmath}
\usepackage{amssymb}
\usepackage{graphicx}

\usepackage[authoryear]{natbib}
\makeatletter

\usepackage{placeins}
\usepackage[T1]{fontenc}

\usepackage{booktabs}
\usepackage{ragged2e}
\newcolumntype{R}[1]{>{\RaggedRight\arraybackslash}p{#1}}
\usepackage{array}

\usepackage{caption}
\renewcommand{\thetable}{\arabic{table}}
\IfFileExists{lmodern.sty}{\usepackage{lmodern}}{}
\usepackage{placeins}\usepackage{relsize}%
\usepackage{xcolor}
\usepackage{sgame}
\usepackage{pdfpages}
\usepackage{float}
\usepackage{eurosym}%
\usepackage{longtable}
\usepackage[font=scriptsize]{caption}
\usepackage{amsthm}
\theoremstyle{plain}

\newtheorem{lemma}{Lemma}

\newtheorem{proposition}{Proposition}

\usepackage{accents}

\usepackage{latexsym,url,graphicx,dsfont, mathtools, amsmath,amssymb,setspace,amsfonts}
\usepackage{secdot}
\usepackage{float}
\usepackage{makeidx}
\usepackage{multirow}
\usepackage{graphicx}
\usepackage{natbib}

\usepackage{caption}
\usepackage{subcaption}

\usepackage{etoolbox}
\apptocmd{\sloppy}{\hbadness 10000\relax}{}{}

\usepackage{natbib}
 \bibpunct[, ]{(}{)}{,}{a}{}{,}%

\definecolor{hopkins-blue}{RGB}{0,45,114}
\definecolor{columbia-blue}{RGB}{185, 217, 235}
\definecolor{chicago-maroon}{RGB}{128,0,0}
\definecolor{northwestern-purple}{RGB}{82,0,99}
\definecolor{cornell-red}{RGB}{179,27,27}
\definecolor{lawngreen}{RGB}{0,250,154}
\definecolor{gray}{RGB}{192,192,192}

\usepackage{nicefrac}
\usepackage{xr-hyper}
\usepackage[colorlinks,citecolor=chicago-maroon,urlcolor=chicago-maroon,linkcolor=chicago-maroon]{hyperref}
\usepackage[nameinlink]{cleveref}
\crefname{assumption}{Assumption}{Assumptions}
\crefname{lemma}{Lemma}{Lemmas}
\crefname{theorem}{Theorem}{Theorems}
\crefname{corollary}{Corollary}{Corollaries}
\crefname{proposition}{Proposition}{Propositions}
\crefname{claim}{Claim}{Claims}
\crefname{subclaim}{Subclaim}{Subclaims}
\crefname{procedure}{Procedure}{Procedures}
\crefname{algorithm}{Algorithm}{Algorithms}
\crefname{example}{Example}{Examples}
\crefname{figure}{Figure}{Figures}
\crefname{section}{Section}{Sections}
\crefname{appendix}{Appendix}{Appendices}
\crefname{table}{Table}{Tables}
\crefname{assumption}{Assumption}{Assumptions}

\usepackage{xspace}

\allowdisplaybreaks

\normalfont

\usepackage{setspace}

\usepackage{tikz}
\usepackage{caption}
\usepackage{microtype}

\makeatother

\usepackage{babel}

\makeatletter
\newcommand*{\addFileDependency}[1]{%
\typeout{(#1)}%
\@addtofilelist{#1}
\IfFileExists{#1}{}{\typeout{No file #1.}}
}\makeatother

\definecolor{cyan}{gray}{0}
\definecolor{red}{gray}{0}
\definecolor{green}{gray}{0}
\definecolor{blue}{gray}{0}

\begin{document}

\thispagestyle{empty}
\begin{spacing}{1.0}          %

\ifblinded
  \vspace*{1.0in}
\else
  \vspace*{0.6in}
\fi

\begin{center}

{\huge Algorithm Design and Physician Liability}

\ifblinded
 \vspace{0.45in}
\else
  \vspace{0.45in}

  {\large
  Shujie Luan \qquad
  Shubhranshu Singh \qquad
  Tinglong Dai}%
  \renewcommand{\thefootnote}{\fnsymbol{footnote}}%
  \footnote{Shujie Luan is affiliated with The University of Western Australia, UWA Business School, Perth, Australia; Shubhranshu Singh and Tinglong Dai are affiliated with Carey Business School, Johns Hopkins University, Baltimore, Maryland 21202. The coauthors contributed equally to this work. Correspondence may be addressed to: \texttt{shujie.luan@uwa.edu.au} (SL); \texttt{shubhranshu.singh@jhu.edu} (SS); \texttt{dai@jhu.edu} (TD).}

  \vspace{0.20in}

  {\normalsize This version: August 12, 2026}

  \vspace{0.35in}
\fi

\end{center}

\begin{center}
{\scshape Abstract}
\end{center}

\vspace{-0.03in}

\begin{quote}
\noindent
A single clinical algorithm can deliver unequal accuracy across patient groups, and concern about such disparity has grown as artificial intelligence (AI) spreads through clinical decision-making. In response, a liability rule introduced in the United States holds healthcare providers responsible when their reliance on disparate algorithms contributes to erroneous clinical decisions. We examine how such liability considerations reshape (i)~an AI firm's algorithm design decisions that drive group-specific accuracy and (ii)~a physician's decisions to use AI in healthcare delivery. The AI firm designs an algorithm for two patient groups, and improving accuracy for the disadvantaged group is more costly. The physician (who remains the accountable decision-maker) then decides whether to consult AI, weighing the reduction in clinical uncertainty against expected liability exposure when AI errors disproportionately affect the disadvantaged group. We find the liability rule can induce disparate \textit{use} of AI: the physician may reduce AI use overall and, over an intermediate range of liability, rely on AI less for disadvantaged patients. The effect is non-monotone. As liability increases, the physician's use of AI for disadvantaged patients first declines, then rises as the firm reallocates investment toward reducing disparity or switches to an equal-accuracy design. Mandating equal algorithmic accuracy across patient groups can then inadvertently harm both groups, because a uniform accuracy requirement distorts the firm's investment incentives and the physician's equilibrium AI-use decisions.

\vspace{0.1in}

\noindent \textit{Key words:} clinical algorithms; product design; algorithmic disparity; liability; human--AI interaction
\end{quote}

\end{spacing}

\newpage   %

\setcounter{page}{1}          %
\renewcommand{\thefootnote}{\arabic{footnote}}%
\setcounter{footnote}{0}

\normalfont

\smallskip

\section{Introduction}\label{sec:intro}

\label{page:mkt-intro}Artificial intelligence (AI) is reshaping expert decision-making, yet most research treats the two sides of the AI lifecycle separately. One strand asks how firms design algorithms under technical and regulatory constraints \citep[e.g.,][]{diao2023consumer,israeli2018online,iyer2024competitive}; another asks how human experts adopt, trust, or strategically act on algorithmic advice \citep[e.g.,][]{dai2023artificial,dietvorst2018overcoming,mclaughlin2022algorithmic}. In practice, these margins are jointly determined: the rules that govern how practitioners use AI feed back into how firms build it. This interdependence is especially consequential when regulation targets the point of use, whereas the performance disparities that trigger liability are shaped upstream during design. A feedback loop arises in which (i) a firm  designs product quality for different customer segments and (ii) an expert intermediary, who faces policy constraints, decides whether the product is used.

Clinical decision support offers a paradigmatic setting in which this feedback loop is first-order. Although ``clinical algorithms'' once denoted transparent flow charts, advances in AI have pushed the frontier toward high-performing but often opaque systems \citep{Gottlieb2024,green1978clinical,Margolis1983}. As of early 2026, the U.S. Food and Drug Administration (FDA) has authorized over 1,300 AI-based medical devices \citep{FDA2024}. These AI tools can improve screening, diagnosis, and treatment decisions \citep{leong2023autonomous,rajpurkar2022ai,Topol2019}, yet their diffusion into routine care remains uneven \citep{abramoff2024scaling,Wu2023}. A central barrier to adoption is the concern regarding \emph{algorithmic disparity}---systematic differences in predictive performance across protected groups---which raises equity concerns and exposes clinicians and hospitals to significant legal risk \citep{mehrabi2021survey,Ziad2019Dissecting,obermeyer2021algorithmic}. Reflecting these concerns, the Centers for Medicare \& Medicaid Services (CMS) has extended Section~1557 nondiscrimination requirements to clinical decision support tools under \S~92.210. This rule prohibits discriminatory use and explicitly requires covered entities to make reasonable efforts to identify and mitigate disparity, thereby placing the burden of algorithmic accountability directly on providers \citep{cms2022nondiscrimination,cms2024nondiscrimination}. Because most healthcare providers in the U.S. context are physicians, we use the terms ``physician'' and ``provider'' interchangeably in the remainder of the paper.

Motivated by this policy shift, we study how deployment-facing nondiscrimination liability changes incentives in both the development and use of clinical AI when algorithms perform unevenly across patient groups. To isolate this mechanism, we abstract from broader liability channels---most notably malpractice \citep{price2019potential}---and focus on the margin where this regulation is most directly enforced: clinicians' reliance on decision-support tools in patient care. Our model links upstream technology design to downstream clinical delivery in a two-stage structure. First, an AI firm chooses accuracy for two patient groups: an advantaged group for whom performance gains are relatively inexpensive, and a disadvantaged group for whom gains are costlier because of data sparsity and measurement frictions \citep{chen2021why,mehrabi2021survey}. Second, the physician decides whether to use the tool in a given case and, conditional on use, whether to follow its recommendation. This setup allows us to study whether liability achieves its intended goal by inducing disparity-reducing investment, or instead weakens adoption and shifts investment in ways that can leave disadvantaged patients worse off.

We formalize this interaction as follows. When the algorithm is disparate, following an incorrect recommendation for a disadvantaged patient creates a specific liability exposure; when the algorithm is equal-accuracy, this disparity-contingent liability channel is removed. The physician's core trade-off is then clear: AI can reduce clinical uncertainty, but its use also brings utilization costs and expected legal risk. This structure captures a central regulatory tension: policies enforced at the point of use can feed back to the design stage, in which disparity itself is produced.

Our analysis yields three main results. First, liability tied to adverse outcomes from a disparate AI tool can induce disparate \emph{use} of AI. Even holding the algorithm's accuracy fixed, the physician consults AI for a narrower set of disadvantaged patients because expected legal exposure acts like an additional shadow cost of reliance for that group. A policy designed to shield disadvantaged patients from unequal algorithmic performance can thus end up limiting their access to AI altogether.

Second, the effect of liability on AI use for disadvantaged patients is non-monotone. For small increases in liability, the direct deterrence effect dominates and use declines. As liability rises further, however, the firm responds endogenously by reallocating investment toward disadvantaged-group accuracy and, beyond a threshold, by switching to an equal-accuracy design. These upstream responses reduce expected physician exposure and can restore utilization, or even increase it relative to intermediate-liability levels. This feedback loop means the same policy that suppresses downstream use at low levels can strengthen upstream incentives to reduce disparity at higher levels.

Third, we show mandating equal measured accuracy across groups does not necessarily improve welfare and can, under plausible conditions, reduce welfare for \textit{both} groups. A one-size-fits-all accuracy requirement distorts the firm's resource allocation under asymmetric improvement costs: in many cases, it lowers advantaged-group accuracy substantially while raising disadvantaged-group accuracy only modestly. When combined with reimbursement structures that reward AI use, this shift can also alter clinical deployment in ways that increase inappropriate use. Equalizing algorithmic performance and ensuring appropriate clinical reliance are distinct policy objectives. Liability standards aimed at the former may require complementary instruments---such as reimbursement design, utilization guidelines, or monitoring rules---to achieve the latter.

Growing evidence shows algorithmic performance can vary systematically across groups in ways that materially affect care and outcomes \citep{abramoff2022foundational,goodman2023clinical}. For example, \citet{Ziad2019Dissecting} show racial disparity in a widely used clinical risk algorithm led to fewer Black patients being identified for additional care despite greater illness severity. Although addressing disparity early in development is often more effective than retrofitting ex post \citep{Gichoya2023}, our analysis shows regulatory design remains crucial: the same legal objective can generate very different equilibrium outcomes depending on how incentives are structured. We do not argue against regulating algorithmic disparity; rather, we clarify the incentive trade-offs that must be addressed if liability standards are to work as intended.

More broadly, this paper treats medical AI as an input to expert decision-making, not a substitute for it. Even when AI is highly accurate, its deployment hinges on transparency, accountability, and the incentives of the human expert in the loop. Regulations such as \S~92.210 constrain reliance on disparate tools, implying that the welfare consequences of AI design are filtered through downstream adoption decisions. Accordingly, our model allows physicians to decide whether to use AI and, when they do, how much to rely on it; we study how these deployment decisions feed back into firms' design incentives, and how liability and reimbursement shape that feedback.

\section{Literature}\label{sec:literature}

Our work contributes to the expert-service literature originating from \citet{DarbyKarni1973}, who argue credence-goods providers such as physicians may overprovide expert services. A key theme in this literature is that liability shapes clinical behavior: empirical and experimental evidence confirms malpractice liability influences physician decision-making \citep{currie2008first,Dulleck2011}, and theoretical work shows liability can discipline providers much as reputational concerns do \citep{FONG2018} or induce appropriate treatment choices \citep{Chen2022LiZhang}.\footnote{The related literature on liability in other domains, such as product safety \citep{guan2024product, iyer2018voluntary}, offers valuable insights but does not directly apply to the expert-service context.} The introduction of AI complicates this picture. \citet{dai2023artificial}, for instance, explore physician behavior under emerging liability frameworks but focus on malpractice liability and on liability from disregarding an AI recommendation, without addressing the potential disparity of AI algorithms across patient groups. We model an AI-specific healthcare regulation and analyze its impact on AI firms, physicians, and patients.

Our work also relates to the literature on the role of AI in medical decision-making. Using AI to support clinical decisions connects to the literature on information acquisition. Prior work has examined how people interact with algorithms, including algorithm preference and algorithm aversion \citep[e.g.,][]{dietvorst2018overcoming, iyer2024competitive,leung2018man,Mohammadi2024%
}. The literature has also attempted to identify the causes of overuse/over-adherence of AI algorithms. \cite{mclaughlin2022algorithmic} provide an explanation that AI altering preferences can lead to over-adherence; for instance, a decision maker may view the algorithmic recommendation as a default action. In terms of the cause of AI underuse, \cite{dai2020conspicuous} develop a signaling model to show highly skilled physicians may underutilize diagnostic tests to signal their skills. \cite{balakrishnan2022improving} show humans over-adhere to an algorithm's predictions when their own private information, which the algorithm cannot access, is valuable, and under-adhere otherwise. We differ from this stream by not directly investigating the influence of human factors but examining the influence of potential liability. By connecting both the upstream and downstream of medical AI, we show a physician may underuse AI under relatively small liability, due to a dominant liability concern, and overuse AI under relatively large liability, due to its incentive for the upstream to invest in high algorithmic accuracy. %

Our work also contributes to the literature on disparate algorithm performance. Extensive research documents systematic differences in health and healthcare by race, gender, age, and other characteristics \citep[e.g.,][]{Heckler1985Report,nelson2002unequal}. As predictive algorithms diffuse across high-stakes domains, a central concern is that these tools can reproduce or even widen such gaps when training data and prediction targets are misaligned with clinical needs \citep[e.g.,][]{benjamin2016innovating,gianfrancesco2018potential}. For example, \citet{tipton2023impact} review evidence that standard ``fairness fixes'' (such as omitting race) can improve parity along one dimension while worsening broader health outcomes. Related work develops technical approaches to reducing disparate performance---by changing objectives \citep{samorani2022overbooked}, redefining targets \citep{Ziad2019Dissecting}, or incorporating group identity \citep{gillis2021fairness}---and, in parallel, economic models of algorithm design under data investment, disclosure, and fairness concerns \citep{diao2023consumer,li2023beating}. We add a complementary mechanism: deployment-facing regulation can narrow measured performance gaps yet induce unequal use in equilibrium; put differently, equalizing algorithmic accuracy need not equalize who benefits from AI.

\label{page:mkt-lit}This paper also contributes to the economics and marketing literature on product design under fairness/ethical or policy constraints \citep[e.g.,][]{diao2023consumer,israeli2018online,iyer2024competitive,Ke2023Privacy}. As in studies of how marketing instruments (for example, incentives, pricing, trust signals) influence intermediary adoption of new technologies \citep[e.g.,][]{lambrecht2024apparent,luo2019frontiers,zimmermann2024adoption}, we study how fairness constraints reshape physician--AI interaction and adoption. Relative to that work, which sets product quality for a buyer who decides whether to purchase, in our paper an expert intermediary, not the buyer, governs use, and we endogenize the quality of the expert's information technology that the credence-service literature takes as given. Equalizing measured quality then need not equalize realized access.   
Relatedly, our paper engages with the literature on the implications of algorithmic fairness. \cite{corbett2017algorithmic} reveal a tension between improving public safety and satisfying prevailing notions of algorithmic fairness when deciding whether to release pretrial defendants back into the community. \cite{shimao2022strategic} show fair algorithms can lead to different equilibrium behaviors among different groups of prediction subjects. %
\cite{corbett2018measure} find equal opportunity and demographic parity may harm the protected group due to heterogeneity across groups. \cite{liu2018delayed} show an overly aggressive fairness criterion may cause harm to the protected group in the long term, because giving too many loans to people in a protected group who cannot pay them back can hurt the group's credit scores on average. \cite{fu2022fair} show fair algorithms that require impact parity can make everyone worse off, including the protected group, because of the firm's strategic behavior of underinvesting in learning. Our mechanism is distinct: rather than relying on heterogeneity in patient characteristics across groups, we show mandating equal accuracy can harm disadvantaged patients even when the two groups differ only in the cost of improving algorithmic performance. The welfare loss arises from the physician's dual objective of representing patients while also pursuing AI reimbursement, which distorts usage decisions under a one-size-fits-all accuracy constraint.

\section{Model}\label{sec:model}

Consider a physician who selects a treatment for a patient from two possible options, $T_1$ and $T_2$. One option is \textit{appropriate} for the patient and the other is \textit{inappropriate}. The benefit of the appropriate treatment for any patient is $b$, where $b>0$, whereas the benefit of the inappropriate treatment is $0$.

To reflect the health disparities often observed across demographic groups \citep{Ziad2019Dissecting}, we consider two patient types, \(t \in \{x, y\}\). Type-\(x\) patients represent an advantaged group, such as White patients, whereas type-\(y\) patients represent a disadvantaged group, such as Black patients. Following the protected characteristics outlined in Section~1557 of the Affordable Care Act \citep{cms2024nondiscrimination}, we assume a patient's type, \(t\), is observable. For simplicity, we normalize the mass of type-\(x\) patients to one and that of type-\(y\) patients to $\beta$, where $0<\beta \le 1$, to capture that disadvantaged patients are often a minority group.

The physician holds a prior belief that treatment $T_1$ is appropriate
for a patient with probability
$\alpha$, whereas treatment $T_2$ is appropriate with probability 
\label{page:alpha-def}$1-\alpha$, where \(\alpha \sim U[0,1]\).\footnote{The assumption of type-independent priors helps isolate the incremental role of algorithmic disparity and the specific anti-discrimination liability channel we study in this paper. Holding the physician's priors the same for both patient types ensures any cross-group differences in outcomes in our analysis arise from the AI's differential accuracy and the physician's endogenous AI-use decision under liability.} We interpret $\alpha$ as the physician's case-specific assessment after her usual initial examination of the patient and before she consults the AI; it is not a population base rate, and its distribution captures across-case heterogeneity in baseline diagnostic uncertainty rather than any group-specific bias. We study the case of type-dependent priors in \cref{sec:type-specific-priors}. We use $a_t$ to denote the appropriate treatment for a type-$t$ patient. The physician has access to an AI-powered clinical algorithm (hereafter ``AI''). The AI generates a treatment signal, which the physician uses to update her belief about the appropriate treatment. We denote the AI signal for a type-$t$ patient by $s_t$ and define AI accuracy for type $t$ as $\rho_t$, where $t=x,y$. Specifically, $P_{1|1}:=\mathcal P(s_t=T_1|a_t=T_1)=\rho_t$ and $P_{2|1}:=\mathcal P(s_t=T_2|a_t=T_1)=1-\rho_t$. Similarly, $P_{2|2}:=\mathcal P(s_t=T_2|a_t=T_2)=\rho_t$ and $P_{1|2}:=\mathcal P(s_t=T_1|a_t=T_2)=1-\rho_t$. AI recommendations are informative but imperfect, that is, $1/2<\rho_t<1$. 
The physician observes the algorithm's accuracy for each patient type, $\rho_t$, through her clinical expertise and experience with the tool. We call an algorithm \emph{equal-accuracy} if $\rho_x=\rho_y$, and in that case we write the common accuracy as $\rho$ (dropping the subscript $t$). An algorithm with $\rho_x>\rho_y$ delivers lower accuracy for type-$y$ patients and is therefore disparate for type-$y$ patients.

With the AI signal, the physician updates her belief about the appropriate treatment using Bayes' rule. The physician's posterior belief that $T_1$ is appropriate is
\begin{align*}
    Q_{1|1}&:=\mathcal{P}(a_t=T_1|s_t=T_1)=\frac{\alpha \rho_t}{\alpha \rho_t+(1-\alpha) (1-\rho_t)},\\
    Q_{1|2}&:=\mathcal{P}(a_t=T_1|s_t=T_2)=\frac{\alpha (1-\rho_t)}{\alpha (1-\rho_t)+(1-\alpha) \rho_t}.
\end{align*} 
The physician believes $T_2$ to be appropriate with the complementary probability, that is, $Q_{2|1}:=\mathcal{P}(a_t=T_2|s_t=T_1)=1-Q_{1|1}$ and $Q_{2|2}:=\mathcal{P}(a_t=T_2|s_t=T_2)=1-Q_{1|2}$.

\label{page:legal-mapping}Consistent with the Section~1557 clinical algorithm provision (\S~92.210), we model deployment-facing liability as follows. If the physician follows the signal of a disparate clinical algorithm for a type-$y$ patient and the resulting treatment is inappropriate, she faces a liability cost $\ell$ \citep{cms2022nondiscrimination}. This channel is specific to type-$y$ patients in our setting; we abstract from other sources of legal exposure (for example, malpractice) to isolate the incentive effects of the provision. Reflecting that the provision governs the use of decision-support tools and applies to covered entities rather than to developers \citep{mello2024antidiscrimination}, we assign liability to the physician rather than to the AI firm. When the algorithm has equal accuracy across patient types ($\rho_x=\rho_y$), this disparity-triggered liability channel does not apply.

When the physician deploys the AI tool in a patient's case, the patient incurs a cost $c>0$. We model $c$ as a reduced-form burden that can include out-of-pocket expenses and non-monetary disutility---for example, psychological discomfort, perceived risk, or distrust when an algorithm is involved in diagnosis or treatment \citep{longoni2019resistance}. Importantly, $c$ is not meant to give patients control over AI use: although opt-out is feasible in some settings, clinical practice and accountability assign the deployment decision to the physician, and we therefore assume the physician retains full discretion over whether to use the tool.

We also allow the physician to receive a per-use benefit $r>0$ when she deploys the tool. This term captures reimbursement for AI-assisted services---for example, payment pathways enabled by AI-specific Current Procedural Terminology (CPT) codes and New Technology Add-On Payments (NTAP)---and broader private returns from using AI \citep{parikh2022paying}. Depending on the setting, $r$ may also reflect reputational or professional gains from being perceived as technologically capable or clinically sophisticated \citep{liaw2022competencies,schubert2025ai,schuitmaker2025physicians}.\footnote{Several U.S.\ providers now charge patients directly for AI-assisted reads; for example, RadNet's imaging centers offer an optional AI review of a screening mammogram for an out-of-pocket fee, which a substantial share of patients elect, illustrating a patient-side cost $c>0$ and, where the provider bills for the read, a per-use revenue $r>0$. On the patient side, $c>0$ is also consistent with documented resistance to medical AI \citep{longoni2019resistance}.}\textsuperscript{,}\footnote{\label{footnote:r-negative} In the special case in which $r<0$, possibly due to a strong negative reputational effect of using AI, our main results on the physician's reduced AI use for disadvantaged patients and the non-monotone effect of liability on the physician's AI use for disadvantaged patients continue to hold. However, in this case, disadvantaged patients are better off, whereas advantaged patients are worse off as a result of the equal-accuracy mandate. Because the physician has no incentive to overuse AI in the $r<0$ case, the disadvantaged patients cannot be worse off.}

When the physician decides whether and how to use AI, she cares about both the patient's health outcome (that is, treatment benefit net of AI-use cost) and her own nonclinical objectives (that is, revenue and liability). We assume the physician assigns a weight $\theta\ge 0$ to her nonclinical objectives. A physician with $\theta=0$ is altruistic and cares only about the patient's health outcome, whereas a physician with $\theta>0$ is \textit{impurely} altruistic and also cares about her nonclinical objectives. Unless otherwise noted, we assume $\theta>0$; the altruistic case $\theta=0$ serves as a benchmark for comparison.

Next, we introduce a profit-maximizing firm that supplies an AI-powered clinical algorithm with accuracy $\rho_t$ for patient type $t\in\{x,y\}$. Consistent with fee-for-service arrangements in which AI use is reimbursed at predetermined rates, we assume the firm receives a payment $f$ each time the physician deploys the tool in a patient's case \citep{Abrmoff2022,abramoff2024scaling}.\footnote{In a model extension presented in \cref{sec:pricing_extn}, we assume the AI firm sets a profit-maximizing price $f$ and find all our main insights continue to hold qualitatively under the endogenous pricing assumption.} Achieving accuracy $\rho_t$ for type $t$ entails a development cost $\kappa_t(\rho_t-\tfrac12)^2$, where $\kappa_t>0$ is type-specific. To capture the limited availability and higher acquisition cost of training data for disadvantaged patients, we assume $\kappa_x<\kappa_y$. We focus on settings in which the firm trains a new model on a fixed historical dataset, rather than incrementally updating an existing system, and thus treat the cost coefficients as exogenous and independent of downstream usage. \label{page:cost-asymmetry}The cost asymmetry $\kappa_x<\kappa_y$ reflects the empirically common case in which improving performance for the disadvantaged group is harder.\footnote{If the asymmetry were reversed ($\kappa_y<\kappa_x$), the algorithmic accuracy for the disadvantaged group would not be harder to improve and the disparity-contingent liability channel would be weak or inactive; in an application in which the type-$y$ group is the lower-cost group, the labels $x$ and $y$ can be exchanged.} The separable quadratic structure is a parsimonious way to capture the practical fact that improving performance for a harder-to-predict subgroup typically requires targeted effort (for example, subgroup-specific tuning or additional data), even when the deployed model is unified.

Given physician demand $(d_x^{\mathrm D},d_y^{\mathrm D})$ under a disparate design, the firm's profit is
\begin{align}\label{equ:developer}
\pi^{\mathrm D}(\rho_x,\rho_y):=(d_x^{\mathrm D}+d_y^{\mathrm D})f-\kappa_x(\rho_x-\tfrac12)^2-\kappa_y(\rho_y-\tfrac12)^2,
\end{align}
where $d_x^{\mathrm D}$ and $d_y^{\mathrm D}$ denote the volumes of type-$x$ and type-$y$ cases in which the physician deploys the tool (expressions are given in \cref{equ:AI demand}). When the firm supplies an equal-accuracy design, $\rho_x=\rho_y\equiv\rho$, profit can be written as
\begin{align}\label{equ:optimal_rho_fair}
\pi^{\mathrm E}(\rho):=(d_x^{\mathrm E}+d_y^{\mathrm E})f-\kappa_x(\rho-\tfrac12)^2-\kappa_y(\rho-\tfrac12)^2,
\end{align}
where $(d_x^{\mathrm E},d_y^{\mathrm E})$ are the corresponding demands under equal accuracy, given in \cref{equ:AI demand_undisparity}.

We assume $c>\theta r$, which eliminates cases in which the AI firm develops an arbitrarily bad algorithm, but the physician still uses AI for all patients. In other words, we do not consider cases in which the physician uses AI solely for private gain from reimbursement.  In addition, although parameters $c$, $r$, and $f$ are likely interrelated in practice, they do not reflect a direct transfer of funds. Instead, funds typically flow from patients and insurers to providers (physician, hospital, or health system), who then pay AI firms. For clarity, we summarize the meaning, examples, and payment flow for parameters $c$, $r$, and $f$ in \cref{tab:parameters}.
To maintain tractability and focus on core strategic trade-offs, our baseline model treats parameters $c$, $r$, and $f$ as independent. Nonetheless, in \cref{sec:pricing_extn}, we explore an alternative model in which the patient's AI-use cost is the same as the firm's price ($c = f$), and we confirm our main results remain robust under this specification. 

\vspace{0.1in}
\begin{table}[th]
\centering
\caption{Meaning, Examples, and Payment Flow for Parameters $c$, $r$, and $f$}
\label{tab:parameters}
\small
\setstretch{1.1}
\begin{tabular*}{\textwidth}{@{\extracolsep{\fill}} c R{2.6cm} R{6.5cm} l @{}}
\toprule
 & \textbf{Meaning} & \textbf{Example} & \textbf{Payment Flow} \\
\midrule
$c$ & Patient's cost of AI use &
  Out-of-pocket expenses (copayments, deductibles, coinsurance)
  or disutility (distrust, privacy concerns) &
  Borne by patient \\[10pt]
$r$ & Provider's per-use revenue &
  Insurer reimbursement (for example, CMS, private insurers)
  and reputational benefit, if any, from AI use &
  Accrued by provider \\[10pt]
$f$ & AI firm's per-use fee &
  Payment from provider, hospital, or health system
  to AI firm for technology use or licensing &
  Provider $\rightarrow$ AI firm \\
\bottomrule
\vspace{0.03in}
\end{tabular*}
{\footnotesize\textit{Notes.} %
The parameter $c$ captures both monetary costs (for example, copayments) and non-monetary disutility borne by the patient. The parameter $r$ captures both financial reimbursement (for example, CMS payments via CPT codes) and
non-monetary gains such as perceived competence or technological savviness. Only $f$ represents a direct inter-party transfer.}
\vspace{0.1in}
\end{table}

\vspace{-0.1in}
Next, we describe the patient's expected utility. First, consider the case in which the physician decides not to use AI. In this case, the physician follows her own prior belief to treat the patient. Given the physician's prior belief $\alpha$ of treatment $T_1$ being appropriate, the patient's expected utility from treatment $T_1$ is $\alpha b$, and from treatment $T_2$ is $(1-\alpha) b$. Now consider the case in which the physician uses AI. In this case, the patient incurs an AI-use cost $c$. The patient's expected benefit from treatment depends on the AI signal and the physician's treatment choice. Specifically, if the physician prescribes treatment $T_j$ after observing signal $T_i$, the patient's expected benefit is $Q_{j|i} b$, where $i,j \in \{1,2\}$. We assume the patient's expected utility is determined by her realized health outcome and any incurred cost. We do not include any gain from lawsuit payouts, to avoid unrealistic scenarios in which the physician might intentionally harm the patient to generate a financial transfer to the patient. \label{page:health-surplus}Equivalently, expected patient welfare for each type, denoted $W_x$ and $W_y$ and defined formally in \cref{sec:welfare}, measures patient health surplus (expected treatment benefit net of the AI-use cost) and excludes litigation compensation. If litigation compensation enters only as an ex post accounting transfer that no decision maker internalizes, equilibrium behavior is unaffected, because patients choose neither AI use nor litigation effort and the provider's exposure is already summarized by $\ell$; the transfer then raises compensation-inclusive type-$y$ surplus and cancels in total surplus apart from deadweight legal costs. If instead the physician internalizes the patient's compensation through patient utility, the behavioral model changes, and we analyze that formulation in \cref{sec:welfare}, where the weight on the liability term becomes $\theta-1$. Our welfare comparisons should therefore be read as comparisons of ex ante health surplus.
  The patient's expected utility is summarized in \cref{table:payoff_AI}.

\begin{table}[!th]
\vspace{0.15in}
\centering
\caption{The Patient's Expected Utility When the Physician Uses AI}
\label{table:payoff_AI}
\medskip
\begin{tabular}{lcc}
\toprule
 & \multicolumn{2}{c}{Physician's decision} \\
\cmidrule(lr){2-3}
AI signal & $T_1$ & $T_2$ \\
\midrule
$s_t = T_1$ & $Q_{1|1}\cdot b - c$ & $Q_{2|1}\cdot b - c$ \\[6pt]
$s_t = T_2$ & $Q_{1|2}\cdot b - c$ & $Q_{2|2}\cdot b - c$ \\
\bottomrule
\end{tabular}
\vspace{0.15in}
\end{table}

We now describe the physician's expected payoff from treating an individual patient. If the physician uses AI, she receives a per-use benefit $r$ and may face liability. For type-$y$ patients, liability arises only when the physician follows the AI recommendation and that recommendation is incorrect. Accordingly, following signal $T_1$ leads to liability with probability $Q_{2|1}$, whereas following signal $T_2$ leads to liability with probability $Q_{1|2}$. We interpret $\ell$ as a reduced-form expected exposure borne by the provider rather than a literal per-case statutory fine. Section~92.210 does not impose strict liability for every adverse outcome; it prohibits discriminatory use of decision-support tools and requires covered entities to make reasonable efforts to identify and mitigate disparity. Accordingly, $\ell$ captures the expected cost of scrutiny, enforcement, settlement, and remediation conditional on relying on a disparate tool in a type-$y$ case that yields an inappropriate treatment \citep{mello2024antidiscrimination}. Because the physician finds following the signal optimal whenever she uses AI, this event is equivalently described in terms of the realized treatment. The physician values both the patient's health outcome net of cost and her own nonclinical payoff, placing weight $\theta$ on the latter. \Cref{table:physician_payoff} summarizes the physician's payoff, where $\mathbf{1}_{t=y}$ equals 1 if $t=y$ and 0 otherwise.

\vspace{0.15in}
\begin{table}[H]
\centering
\caption{The Physician's Expected Payoff When Treating an Individual Patient Using AI}
\label{table:physician_payoff}
\medskip
\begin{tabular}{lcc}
\toprule
 & \multicolumn{2}{c}{Physician's decision} \\
\cmidrule(lr){2-3}
AI signal & $T_1$ & $T_2$ \\
\midrule
$s_t = T_1$ & $Q_{1|1}\cdot b - c + \theta(r - Q_{2|1}\cdot \mathbf{1}_{t=y}\,\ell)$ & $Q_{2|1}\cdot b - c + \theta r$ \\[6pt]
$s_t = T_2$ & $Q_{1|2}\cdot b - c + \theta r$ & $Q_{2|2}\cdot b - c + \theta(r - Q_{1|2}\cdot \mathbf{1}_{t=y}\,\ell)$ \\
\bottomrule
\end{tabular}
\vspace{0.15in}
\end{table}

\cref{fig:discrimination_time_sequence} summarizes the timing. First, the AI firm chooses whether to offer a disparate design or an equal-accuracy design and selects the corresponding accuracy level(s) $\rho_t$ for each patient type. Second, the physician decides whether to use AI for a given patient. If she uses AI, the algorithm generates a treatment signal and the physician then decides whether to follow it; if she does not use AI, she selects treatment based only on her prior belief. Finally, the patient's health outcome is realized and provider liability is assessed under the clinical algorithm provision. We solve the game by backward induction. Throughout, we assume $\kappa_y<\frac{f\beta (b+\theta \ell)^2}{b(2c-2\theta r+\theta \ell)}$, which ensures that, in equilibrium, the physician uses AI for a strictly positive measure of type-$y$ patients (that is, ${d_y^{\mathrm D}}^*>0$).

\vspace{0.2in}
\begin{figure}[H]
    \centering
    \includegraphics[scale=0.8]{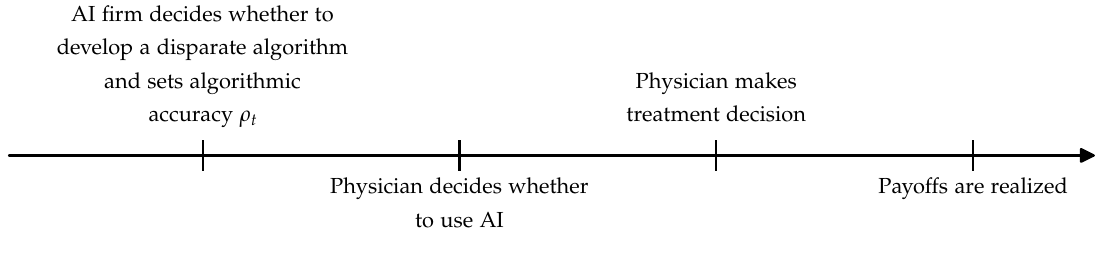}
    \caption{Timing of the Game. The AI firm's first-stage choice is between a disparate design ($\rho_x>\rho_y$) and an equal-accuracy design ($\rho_x=\rho_y$).}
    \label{fig:discrimination_time_sequence}
\end{figure}

\section{Analysis}\label{sec:analysis}
In this section, we first analyze in \cref{sec:downstream} the physician's decision of whether and how to use AI, taking AI accuracy as given. We then analyze in \cref{sec:upstream} how the AI firm chooses AI accuracy.%

\subsection{Downstream Implications}\label{sec:downstream}

We analyze the physician's decision of whether to use a disparate AI algorithm with $\rho_x> \rho_y>1/2$ for type-$x$ and type-$y$ patients. The ordering $\rho_x>\rho_y$ here defines the disparate-design \emph{subgame} whose downstream behavior we characterize; it is not an equilibrium restriction. \Cref{prop:developer_rho*} shows that whenever the firm finds a disparate design optimal, $\rho_x^*>\rho_y^*$ emerges endogenously, so the ordering is derived rather than assumed.
Note that the analysis of the physician's decision to use an equal-accuracy algorithm ($\rho_x=\rho_y>1/2$) is analogous to the analysis for type-$x$ patients under a disparate algorithm.

If the physician does not use AI, then by  comparing  the physician's payoff from prescribing treatment  $T_1$ (that is, $\alpha b$) and $T_2$ (that is, $(1-\alpha) b$), we know the physician prescribes treatment  $T_1$ if $\alpha>1/2$ and prescribes treatment  $T_2$ if $\alpha\le 1/2$. We can summarize the physician's expected payoff when she does not use AI as follows:
{
\begin{align}\label{equ:U}
    U=&
    \begin{cases}
(1-\alpha) b, & \text{if }\alpha\le 1/2 \\
\alpha b, & \text{if } \alpha  > 1/2.
\end{cases}
\end{align}
}

Now suppose the physician uses AI. In this case, the probability of the AI signal suggesting treatment $T_1$ is $\mathcal{P}(s_t=T_1)=\alpha \cdot \rho_t+ (1-\alpha)\cdot (1-\rho_t)$ and  treatment $T_2$ is $\mathcal{P}(s_t=T_2)=\alpha \cdot (1-\rho_t)+ (1-\alpha)\cdot \rho_t$. Next, we analyze the physician's decision of whether to use AI.

\subsubsection{Type-$x$ Patient}

Suppose the physician consults AI for a type-$x$ patient. If the signal is $s_x=T_1$, she compares the payoffs from prescribing $T_1$ and $T_2$, namely $Q_{1|1}b-c+\theta r$ and $Q_{2|1}b-c+\theta r$ (see \cref{table:physician_payoff}). She therefore prescribes $T_1$ if and only if $\alpha>1-\rho_x$, and prescribes $T_2$ otherwise. If the signal is $s_x=T_2$, she compares $Q_{1|2}b-c+\theta r$ and $Q_{2|2}b-c+\theta r$, and prescribes $T_1$ if and only if $\alpha>\rho_x$, and $T_2$ otherwise. Because $1/2<\rho_x<1$, we have $1-\rho_x<\rho_x$, so the prior space is partitioned into three regions. The physician's expected payoff from AI use for a type-$x$ patient is
\begin{align}\label{equ:Ux}
U_x=
\begin{cases}
\mathcal{P}(s_x=T_1)Q_{2|1}b+\mathcal{P}(s_x=T_2)Q_{2|2}b-c+\theta r, & \text{if }\alpha\le 1-\rho_x,\\
\mathcal{P}(s_x=T_1)Q_{1|1}b+\mathcal{P}(s_x=T_2)Q_{2|2}b-c+\theta r, & \text{if }1-\rho_x<\alpha\le\rho_x,\\
\mathcal{P}(s_x=T_1)Q_{1|1}b+\mathcal{P}(s_x=T_2)Q_{1|2}b-c+\theta r, & \text{if }\alpha>\rho_x.
\end{cases}
\end{align}

Next, we present the physician's decision of whether and how to use AI for type-$x$ patients. A comparison of the physician's expected payoff when using AI and not using AI for type-$x$ patients reveals the following lemma. (All proofs are in the Appendix.)

\begin{lemma}\label{lem:AI_decision_imaltruistic_xpatient}
For type-$x$ patients, the physician uses AI if and only if
\[
\frac{(1-\rho_x)b+c-\theta r}{b}<\alpha<\frac{\rho_x b-c+\theta r}{b}.
\]
Conditional on using AI, the physician follows the AI signal.
\end{lemma}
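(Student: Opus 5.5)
We compare the physician's payoff from using AI, given region by region in \cref{equ:Ux}, with her payoff from not using AI in \cref{equ:U}. First we simplify \cref{equ:Ux}. Since $\mathcal{P}(s_x=T_i)Q_{j|i}$ is the joint probability that the signal is $T_i$ and $T_j$ is appropriate, each branch becomes linear in $\alpha$:
\begin{align*}
U_x=\begin{cases}
(1-\alpha)b-c+\theta r, & \alpha\le 1-\rho_x,\\
\rho_x b-c+\theta r, & 1-\rho_x<\alpha\le\rho_x,\\
\alpha b-c+\theta r, & \alpha>\rho_x.
\end{cases}
\end{align*}
The middle branch uses $\alpha\rho_x+(1-\alpha)\rho_x=\rho_x$. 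In the two outer regions, the physician's posterior-optimal action ignores the signal, so using AI yields exactly the no-AI payoff minus $c-\theta r$. By the maintained assumption $c>\theta r$, this is strictly worse than not using AI. Hence AI is never used when $\alpha\le 1-\rho_x$ or $\alpha>\rho_x$.

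Next, we treat the middle region, where following the signal is optimal: prescribe $T_1$ after $s_x=T_1$ when $\alpha>1-\rho_x$, and prescribe $T_2$ after $s_x=T_2$ when $\alpha\le\rho_x$. For $\alpha\le 1/2$, using AI beats not using it if and only if $\rho_x b-c+\theta r>(1-\alpha)b$, that is, $\alpha>\frac{(1-\rho_x)b+c-\theta r}{b}$. For $\alpha>1/2$, the condition is $\rho_x b-c+\theta r>\alpha b$, that is, $\alpha<\frac{\rho_x b-c+\theta r}{b}$. Because $c-\theta r>0$, the lower threshold exceeds $1-\rho_x$ and the upper threshold is below $\rho_x$. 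So the use region lies inside the middle region, and the outer-region exclusion does not conflict with it.

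The two conditions then combine into the single interval in the lemma. For $\alpha\le1/2$, the upper inequality holds automatically whenever the lower one does, because $\frac{(1-\rho_x)b+c-\theta r}{b}<\alpha\le1/2$ forces $\rho_x b-c+\theta r>b/2\ge\alpha b$. The symmetric argument covers $\alpha>1/2$. This gives the ``if and only if'' over the whole prior space; if the interval is empty, the physician never uses AI for type-$x$ patients.

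The statement that she follows the signal is then immediate: every $\alpha$ in the use interval lies in $(1-\rho_x,\rho_x]$, where the posterior comparison gives $T_1$ after $s_x=T_1$ and $T_2$ after $s_x=T_2$. The step most prone to error is the algebraic simplification of the products $\mathcal{P}(s)Q$, together with the boundary and tie conventions ($\le$ versus $<$ at $\alpha=1/2$ and at the thresholds). These involve measure-zero sets under $\alpha\sim U[0,1]$, and I would note explicitly that they do not affect the result.
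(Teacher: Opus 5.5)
Your proposal is correct and follows essentially the same route as the paper. The paper likewise computes $U_x-U$ piecewise: it equals $\theta r-c<0$ on the outer regions, and $(\rho_x+\alpha-1)b-c+\theta r$ or $(\rho_x-\alpha)b-c+\theta r$ on the two sides of $\alpha=1/2$. It then solves for the two thresholds and reads off signal-following from the middle region.

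The only minor difference concerns whether the interval is nonempty. The paper invokes positive type-$y$ use and $\rho_x>\rho_y$ to guarantee $c-\theta r<(\rho_x-\tfrac12)b$. You instead observe that the equivalence holds vacuously when the interval is empty, which makes that appeal unnecessary.
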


The physician's prior belief $\alpha$ represents the probability that treatment $T_1$ is appropriate before observing any AI recommendation. 
Without AI, the physician follows this prior and prescribes $T_1$ when $\alpha > 1/2$. 
When AI is used, the physician updates her belief using Bayes' rule, and the signal realization moves her posterior toward or away from $T_1$ by an amount that increases in the accuracy $\rho_x$. 
The use of AI creates value only when its signal can change the physician's treatment decision.

Intuitively, when $\alpha$ is very low, the physician already believes $T_2$ is likely appropriate, and AI's informational value cannot justify the additional patient cost $c$. 
When $\alpha$ is very high, the physician is confident in $T_1$ and does not expect AI to improve her decision. 
Only when $\alpha$ lies in the intermediate range does AI meaningfully reduce uncertainty, making its use optimal. 
Moreover, because AI is informative ($1/2 < \rho_x < 1$), whenever the physician chooses to use it, she rationally follows its recommendation.

\subsubsection{Type-$y$ Patient}

Now suppose the physician  uses AI to generate the signal for a type-$y$ patient. If the AI signal is $T_1$, the physician compares her payoffs from prescribing treatments $T_1$ and $T_2$, which are $Q_{1|1}\cdot  b-Q_{2|1}\cdot \theta \ell-c+\theta r$ and  $Q_{2|1}\cdot b-c+\theta r$ (see \cref{table:physician_payoff}), respectively, and prescribes treatment $T_1$ if $\alpha>\frac{(1-\rho_y) (b+\theta \ell)}{b+(1-\rho_y)\theta \ell}$ and treatment $T_2$ if $\alpha\le \frac{(1-\rho_y) (b+\theta \ell)}{b+(1-\rho_y)\theta \ell}$. 
However, 
if the AI signal is $T_2$, by comparing her payoffs from prescribing treatments  $T_1$ and $T_2$, which are $Q_{1|2}\cdot b-c+\theta r$ and    $Q_{2|2}\cdot  b-Q_{1|2} \cdot \theta \ell-c+\theta r$ (see \cref{table:physician_payoff}), respectively,   
the physician prescribes treatment $T_1$ if $\alpha>\frac{b \rho_y}{b+(1-\rho_y)\theta \ell}$ and  treatment  $T_2$ if $\alpha\le \frac{b \rho_y}{b+(1-\rho_y)\theta \ell}$. 
Given the assumption that the physician uses AI for at least some type-$y$ patients, we have $\rho_y>\frac{b + 2 c + 2 \theta \ell - 2 \theta r}{2 b + 2 \theta \ell}$. In addition, because $\frac{b + 2 c + 2 \theta \ell - 2 \theta r}{2 b + 2 \theta \ell}>\frac{b + \theta \ell}{2 b + \theta \ell}$, it follows that  $\rho_y>\frac{b + \theta \ell}{2 b + \theta \ell}$. It is straightforward that  
$\frac{(1-\rho_y) (b+\theta \ell)}{b+(1-\rho_y)\theta \ell}<\frac{1}{2}<\frac{b \rho_y}{b+(1-\rho_y)\theta \ell}$. Therefore, if the physician uses AI for a type-$y$ patient, her expected payoff is given by 

{\scriptsize
\begin{align}\label{equ:Uy}
    U_y=&
    \begin{cases}
\mathcal P(s_y=T_1)\cdot Q_{2|1}\cdot b+ \mathcal P(s_y=T_2)\cdot \left(Q_{2|2}\cdot  b-Q_{1|2}\theta \ell\right)-c+\theta r, & \text{if }\alpha\le \frac{(1-\rho_y) (b+\theta \ell)}{b+(1-\rho_y)\theta \ell} \\
\mathcal P(s_y=T_1)\cdot \left(Q_{1|1}\cdot b-Q_{2|1} \theta \ell\right)+ \mathcal P(s_y=T_2)\cdot 
\left(Q_{2|2}\cdot  b-Q_{1|2}\theta \ell\right)-c+\theta r, & \text{if } \frac{(1-\rho_y) (b+\theta \ell)}{b+(1-\rho_y)\theta \ell} < \alpha \le \frac{b \rho_y}{b+(1-\rho_y)\theta \ell}\\
\mathcal P(s_y=T_1)\cdot \left(Q_{1|1}\cdot b-Q_{2|1} \theta \ell\right)+\mathcal  P(s_y=T_2)\cdot Q_{1|2}\cdot  b-c+\theta r, & \text{otherwise.}
\end{cases}
\end{align}
} 
The following lemma describes the physician's decision of whether to use AI for type-$y$ patients.

\begin{lemma}\label{lem:AI_decision_imaltruistic_ypatient}
For type-$y$ patients, the physician uses AI if and only if
\[
\frac{(1-\rho_y)(b+\theta \ell)+c-\theta r}{b}<\alpha<\frac{\rho_y b-(1-\rho_y)\theta \ell-c+\theta r}{b}.
\]
Conditional on using AI, the physician follows the AI signal.
\end{lemma}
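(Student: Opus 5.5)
The plan is to compare the physician's payoff with AI, $U_y$ in \cref{equ:Uy}, against her payoff without AI, $U$ in \cref{equ:U}, region by region in $\alpha$, mirroring the argument for \cref{lem:AI_decision_imaltruistic_xpatient}. The first step is to simplify each branch of $U_y$ by noting that $\mathcal P(s_y=T_i)Q_{j|i}$ is simply a joint probability. Specifically,
\[
\mathcal P(s_y=T_1)Q_{1|1}=\alpha\rho_y,\qquad \mathcal P(s_y=T_1)Q_{2|1}=(1-\alpha)(1-\rho_y),
\]
\[
\mathcal P(s_y=T_2)Q_{2|2}=(1-\alpha)\rho_y,\qquad \mathcal P(s_y=T_2)Q_{1|2}=\alpha(1-\rho_y).
\]
With these substitutions every branch of $U_y$ is affine in $\alpha$. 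In the middle region, where the physician follows the signal, we get
\[
U_y^{\mathrm{mid}}=\rho_y b-(1-\rho_y)\theta\ell-c+\theta r .
\]
This value is constant in $\alpha$, because the probability of an erroneous followed signal is $1-\rho_y$ regardless of the prior. In the outer regions the physician ignores the signal and prescribes the treatment favored by her prior, so $U_y$ reduces to $(1-\alpha)b-c+\theta r$ in the low region and to $\alpha b-c+\theta r$ in the high region. The liability terms vanish there, since she never follows an incorrect signal; one should double-check against \cref{equ:Uy} that the liability on the followed branch cancels exactly in this way.

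The second step is to observe that in the outer regions $U_y-U=-c+\theta r<0$ by the maintained assumption $c>\theta r$. The physician therefore never consults AI there: she would pay the cost and then disregard the signal. This gives the second claim of the lemma, that conditional on using AI she follows the signal. In the middle region, I would compare the constant $U_y^{\mathrm{mid}}$ with the V-shaped $U$. For $\alpha\le 1/2$, the condition $U_y^{\mathrm{mid}}>(1-\alpha)b$ is equivalent to $\alpha>\frac{(1-\rho_y)(b+\theta\ell)+c-\theta r}{b}$. For $\alpha>1/2$, the condition $U_y^{\mathrm{mid}}>\alpha b$ is equivalent to $\alpha<\frac{\rho_y b-(1-\rho_y)\theta\ell-c+\theta r}{b}$. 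These are the two bounds in the statement.

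The main obstacle is consistency: the interval defined by these two bounds must lie inside the middle region $\bigl(\frac{(1-\rho_y)(b+\theta\ell)}{b+(1-\rho_y)\theta\ell},\frac{b\rho_y}{b+(1-\rho_y)\theta\ell}\bigr]$, and its lower bound must fall below $1/2$ and its upper bound above $1/2$. For the first part, I would check directly that $\frac{(1-\rho_y)(b+\theta\ell)}{b}\ge \frac{(1-\rho_y)(b+\theta\ell)}{b+(1-\rho_y)\theta\ell}$, which holds because the denominators satisfy $b\le b+(1-\rho_y)\theta\ell$, and then note that adding $(c-\theta r)/b>0$ keeps the inequality strict. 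The upper bound is handled symmetrically: one shows $\rho_y-\frac{(1-\rho_y)\theta\ell}{b}\le \frac{b\rho_y}{b+(1-\rho_y)\theta\ell}$, which reduces to $\rho_y\le 1+\frac{(1-\rho_y)\theta\ell}{b}$ and is true. For the second part, the two bounds straddle $1/2$ exactly when $\rho_y>\frac{b+2c+2\theta\ell-2\theta r}{2b+2\theta\ell}$, which is the condition already derived from the assumption that AI is used for some type-$y$ patients. If that inequality fails, the interval is empty, and the statement holds vacuously.
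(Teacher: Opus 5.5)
Your plan is the paper's proof: compute $U_y-U$ on the three regions of \cref{equ:Uy}, show it is negative on the two outer regions, and solve the two linear inequalities on the middle region, where $U_y=\rho_y b-(1-\rho_y)\theta\ell-c+\theta r$. One step is wrong as written, however: the liability terms do not vanish on the outer regions. On the low region the physician prescribes $T_2$ after either signal. So after $s_y=T_2$ she \emph{is} following the signal, and \cref{equ:Uy} charges $\mathcal P(s_y=T_2)Q_{1|2}\theta\ell=\alpha(1-\rho_y)\theta\ell$. Symmetrically, on the high region she follows $s_y=T_1$ and bears $(1-\alpha)(1-\rho_y)\theta\ell$. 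The correct differences are therefore $U_y-U=\theta r-c-\alpha(1-\rho_y)\theta\ell$ and $\theta r-c-(1-\alpha)(1-\rho_y)\theta\ell$, as in the paper, not $\theta r-c$. The slip is harmless, because the omitted terms are nonpositive and only reinforce $U_y-U<0$. Once you replace the formula, your conclusions go through unchanged: there is no AI use on the outer regions, and hence the physician follows the signal whenever she uses AI. The economic reading should also be amended. On the outer regions the physician pays $c$ and takes the action she would have taken anyway. In addition, she bears liability on the signal realization her action happens to match.

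Your explicit check is a step the paper leaves implicit, so that part is an improvement. You verify that $\bigl((1-\rho_y)(b+\theta\ell)+c-\theta r\bigr)/b$ and $\bigl(\rho_y b-(1-\rho_y)\theta\ell-c+\theta r\bigr)/b$ lie strictly inside the middle region. You also show they straddle $1/2$ under a single condition, since the two bounds sum to one.

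Finally, your ``vacuous'' case relies on the three-region form of \cref{equ:Uy}, which requires $\rho_y>\frac{b+\theta\ell}{2b+\theta\ell}$. For smaller $\rho_y$ the two cutoffs reverse, and the middle region becomes one where the physician acts against the signal, with payoff $(1-\rho_y)b-c+\theta r<U$. AI is still never used there, but that case needs its own line. The paper sidesteps it through the standing assumption that some type-$y$ patients receive AI.
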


As in the type-$x$ case, the physician uses AI for type-$y$ patients only when the informational value of consultation is sufficiently high. Equivalently, AI use occurs over an intermediate range of $\alpha$, in which clinical uncertainty about the appropriate treatment is greatest. For an existing algorithm (that is, exogenous $\rho_y$), the liability rule unambiguously makes AI less attractive for type-$y$ patients. Holding $\rho_y$ fixed, an increase in liability $\ell$ contracts the set of priors for which AI is used.

Because liability is triggered when a physician follows an erroneous recommendation from a disparate algorithm in type-$y$ cases, one might expect strategic rejection of AI advice. Our model shows otherwise: conditional on consulting AI, the physician optimally follows the signal. The intuition is straightforward. If she consults AI but ignores the signal regardless of its realization, she incurs the consultation cost $c$ without obtaining informational benefit; with $c>\theta r$, she is strictly better off not consulting at all. If she follows only when the signal is $T_1$ (or only when it is $T_2$), her behavior is equivalent to always choosing $T_1$ (or always choosing $T_2$), which is again weakly dominated by making that fixed treatment choice without AI. Hence, when the physician uses AI for type-$y$ patients, she follows the AI signal.

\label{page:usegap-implication}Next, we examine how the regulation affects the physician's relative use of AI for type-$x$ and type-$y$ patients. Comparing the AI-use intervals in \cref{lem:AI_decision_imaltruistic_xpatient,lem:AI_decision_imaltruistic_ypatient} shows that, in the presence of liability, the physician is (weakly) less likely to use AI for type-$y$ patients than for type-$x$ patients.

Under the Section~1557 clinical algorithm provision, a physician is exposed to liability when her reliance on a disparate AI tool leads to an inappropriate treatment for a type-$y$ patient. This liability functions as an added expected cost of AI use for type-$y$ cases, making adoption and reliance more selective for that group---even when the same tool is available for type-$x$ patients. By contrast, type-$x$ cases do not activate this liability channel; therefore, AI use for type-$x$ patients is governed only by the accuracy--cost--incentive trade-off in \cref{lem:AI_decision_imaltruistic_xpatient}. The resulting asymmetry in expected liability generates unequal utilization in equilibrium, with lower AI use among disadvantaged patients---the very group the policy is intended to protect.

\subsection{Upstream AI Firm's Accuracy Decision}\label{sec:upstream}

Anticipating the physician's downstream use of the algorithm, the AI firm chooses whether to offer a disparate design or an equal-accuracy design, and selects the associated accuracy levels $\rho_t^*$ for each patient type. By \cref{lem:AI_decision_imaltruistic_xpatient,lem:AI_decision_imaltruistic_ypatient}, under a disparate algorithm, the volumes of type-$x$ and type-$y$ patients for whom the physician uses AI are
\begin{align}\label{equ:AI demand}
d_x^{\mathrm D} &= \frac{(2\rho_x-1)b - 2c + 2\theta r}{b} \text{ and } d_y^{\mathrm D} = \beta \cdot \frac{(2\rho_y-1)b - 2(1-\rho_y)\theta \ell - 2c + 2\theta r}{b}.
\end{align}
Here $0<\beta\le 1$ scales the mass of type-$y$ patients relative to type-$x$ patients, and the superscript $\mathrm D$ denotes the disparate design. The firm then chooses $(\rho_x,\rho_y)$ to maximize expected profit in \cref{equ:developer}.

Next, consider the equal-accuracy design, under which the algorithm attains the same accuracy for both patient types, $\rho_x=\rho_y\equiv \rho$. The resulting volumes of AI use for type-$x$ and type-$y$ patients are
\begin{align}\label{equ:AI demand_undisparity}
d_x^{\mathrm E} = \frac{(2\rho-1)b - 2c + 2\theta r}{b}  \text{ and } 
d_y^{\mathrm E} = \beta \cdot \frac{(2\rho-1)b - 2c + 2\theta r}{b},
\end{align}
where the superscript $\mathrm E$ denotes the equal-accuracy design. The firm then chooses $\rho$ to maximize expected profit in \cref{equ:optimal_rho_fair}.

For ease of presentation, we define a threshold $\widetilde \ell$ such that the AI firm's expected profit for a disparate algorithm (which is strictly decreasing in $\ell$) exceeds its expected profit for an equal-accuracy algorithm when $\ell<\widetilde\ell$. Otherwise, the AI firm develops an equal-accuracy algorithm in equilibrium. The following proposition presents the equilibrium AI accuracy set by the firm.

\begin{proposition}\label{prop:developer_rho*}
Suppose $\beta(\beta+2)\kappa_x > \kappa_y$ and  \( \max\{\frac{2\beta(c-\theta r)\kappa_x(\kappa_x+\kappa_y)}{b(\beta(\beta+2)\kappa_x-\kappa_y)}, \frac{b \kappa_y (2c-2\theta r+\theta \ell)}{\beta (b+\theta \ell)^2}\} < f < \min\{\frac{\kappa_x}{2}, \frac{b \kappa_y}{2\beta (b+\theta \ell)}\} \). There exists a unique cutoff $\widetilde{\ell}$ in the feasible region such that:
\begin{enumerate}
\item[(a)] if \( \ell < \widetilde{\ell} \), the AI firm develops a disparate algorithm with \( \rho_x^*=\frac{1}{2}+\frac{f}{\kappa_x} \) and \( \rho_y^*=\frac{1}{2}+\frac{\beta f (b+\theta \ell)}{b\kappa_y} \), and \( \rho_x^*>\rho_y^* \);
\item[(b)] if \( \ell \ge \widetilde{\ell} \), the AI firm develops an equal-accuracy algorithm with \( \rho^*=\frac{1}{2}+\frac{(1+\beta)f}{\kappa_x+\kappa_y} \).
\end{enumerate}

\end{proposition}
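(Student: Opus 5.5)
The plan is to solve the firm's two design problems separately, using the demand expressions in \cref{equ:AI demand,equ:AI demand_undisparity} implied by \cref{lem:AI_decision_imaltruistic_xpatient,lem:AI_decision_imaltruistic_ypatient}, and then compare the optimized profits. Under the disparate design, $\pi^{\mathrm D}$ is additively separable and strictly concave in $(\rho_x,\rho_y)$ on the region where both demands are interior. The relevant coefficients are $\partial d_x^{\mathrm D}/\partial\rho_x=2$ and $\partial d_y^{\mathrm D}/\partial\rho_y=2\beta(b+\theta\ell)/b$. The first-order conditions give $\rho_x^*=\tfrac12+f/\kappa_x$ and $\rho_y^*=\tfrac12+\beta f(b+\theta\ell)/(b\kappa_y)$. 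I then check feasibility using the stated bounds on $f$:
\begin{itemize}
\item $f<\kappa_x/2$ and $f<b\kappa_y/(2\beta(b+\theta\ell))$ give $\rho_x^*,\rho_y^*<1$.
\item $f>b\kappa_y(2c-2\theta r+\theta\ell)/(\beta(b+\theta\ell)^2)$ gives $d_y^{\mathrm D}>0$, i.e.\ $\rho_y^*$ exceeds the threshold $\frac{b+2c+2\theta\ell-2\theta r}{2b+2\theta\ell}$. The same bound also yields $d_x^{\mathrm D}>0$.
\item The demands stay below the population masses (width $<1$), which follows from $\rho<1$ and $c>\theta r$.
\end{itemize}
The ordering $\rho_x^*>\rho_y^*$ amounts to $\kappa_y b>\beta\kappa_x(b+\theta\ell)$. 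I will derive this from the feasibility region rather than assume it. Combining $f<\kappa_x/2$ with the lower bound on $f$ gives $\kappa_y(2c-2\theta r+\theta\ell)b/(\beta(b+\theta\ell)^2)<\kappa_x/2$. Some care is needed here, and I may instead need to argue that in the relevant range $\ell<\widetilde\ell$ the ordering holds. Alternatively, I can note that if $\rho_x^*\le\rho_y^*$ the design would not be disparate, so the firm is effectively comparing within the disparate class.

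Under the equal-accuracy design, the total demand is $(1+\beta)[(2\rho-1)b-2c+2\theta r]/b$. Profit is concave in $\rho$ with first-order condition $2(1+\beta)f=2(\kappa_x+\kappa_y)(\rho-\tfrac12)$, so $\rho^*=\tfrac12+(1+\beta)f/(\kappa_x+\kappa_y)$. Interiority follows from $f<\kappa_x/2$ together with $\kappa_x<\kappa_y$, which give $\rho^*<1$, and from the lower bound on $f$ involving $\beta(\beta+2)\kappa_x-\kappa_y$, which guarantees positive demand. Substituting back, the optimized profits are
\begin{align*}
\pi^{\mathrm D*}(\ell)&=\frac{f^2}{\kappa_x}+\frac{\beta^2f^2(b+\theta\ell)^2}{b^2\kappa_y}-\frac{2f(1+\beta)(c-\theta r)}{b}-\frac{2\beta f\theta\ell}{b}\cdot(\text{const}),\\
\pi^{\mathrm E*}&=\frac{(1+\beta)^2f^2}{\kappa_x+\kappa_y}-\frac{2(1+\beta)f(c-\theta r)}{b},
\end{align*}
and I will write these out exactly. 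Note that $\pi^{\mathrm E*}$ does not depend on $\ell$.

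The main obstacle is establishing a unique crossing $\widetilde\ell$. By the envelope theorem, $d\pi^{\mathrm D*}/d\ell=f\,\partial d_y^{\mathrm D}/\partial\ell=-2\beta f\theta(1-\rho_y^*)/b<0$, so $\pi^{\mathrm D*}$ is strictly decreasing in $\ell$, consistent with the text. Uniqueness therefore reduces to two boundary checks. At $\ell=0$ we need $\pi^{\mathrm D*}(0)>\pi^{\mathrm E*}$; this holds because the equal design is a constrained version of the disparate problem without liability, and the constraint $\rho_x=\rho_y$ binds strictly since $\kappa_x\ne\kappa_y$ (or $\beta<1$). At the upper end of the feasible $\ell$-region we need $\pi^{\mathrm D*}<\pi^{\mathrm E*}$. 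Here the condition $\beta(\beta+2)\kappa_x>\kappa_y$ and the first lower bound on $f$ are presumably what deliver the sign. My first attempt will be to evaluate the difference where $d_y^{\mathrm D}$ hits zero, i.e.\ where the lower bound on $f$ binds in $\ell$. If that sign check fails, I will fall back to defining $\widetilde\ell$ as the supremum of the feasible region, with the convention that case (b) is then vacuous.
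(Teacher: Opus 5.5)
Your skeleton matches the paper's: first-order conditions for each design, the envelope identity $d\pi^{\mathrm D*}/d\ell=-2\beta f\theta(1-\rho_y^*)/b<0$ against a flat $\pi^{\mathrm E*}$, strict dominance of the disparate design at $\ell=0$, and a sign check further out. The gap is that the ordering $\rho_x^*>\rho_y^*$ is left unproved, and your first suggested route cannot work. The hypotheses hold pointwise in $\ell$ and do not imply $b\kappa_y>\beta\kappa_x(b+\theta\ell)$. Take $b=\theta=\beta=\kappa_x=1$, $\kappa_y=1.5$, $c-\theta r=0.01$, $f=0.39$, $\ell=0.9$: every displayed bound holds, yet the formulas give $\rho_y^*\approx0.994>\rho_x^*=0.89$. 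The ordering holds only because $\ell<\widetilde\ell$, and the missing idea is the paper's. At $\ell_0:=b(\kappa_y-\beta\kappa_x)/(\beta\kappa_x\theta)$ the two first-order accuracies coincide at $\bar\rho=\rho_x^*<1$. Offering $\bar\rho$ as an equal-accuracy design costs the same but deletes the liability term from type-$y$ demand, so $\pi^{\mathrm D*}(\ell_0)=\pi^{\mathrm E}(\bar\rho)-2\beta f\theta\ell_0(1-\bar\rho)/b<\pi^{\mathrm E*}$. Hence $\widetilde\ell<\ell_0$, and since $\rho_y^*$ rises in $\ell$ while $\rho_x^*$ does not, $\rho_x^*>\rho_y^*$ on $[0,\widetilde\ell)$. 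Saying that a design with $\rho_x^*\le\rho_y^*$ ``would not be disparate'' does not show that the first-order point lies in the disparate region, which (a) asserts. You also need to rule out reverse-disparate designs, as the paper does. There no liability applies, and the no-liability optimum $(\tfrac12+f/\kappa_x,\tfrac12+\beta f/\kappa_y)$ lies in $\{\rho_x>\rho_y\}$, so by concavity the best design with $\rho_y\ge\rho_x$ sits on the diagonal.

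Your upper-end check also targets the wrong object. The bound $f>b\kappa_y(2c-2\theta r+\theta\ell)/(\beta(b+\theta\ell)^2)$ is exactly $\pi_y(\rho_y^*)=\beta^2f^2(b+\theta\ell)^2/(b^2\kappa_y)-\beta f(2c-2\theta r+\theta\ell)/b>0$, i.e., the interior $\rho_y^*$ beats abandoning type $y$. By contrast, $d_y^{\mathrm D}>0$ needs only half that bound. Where the bound binds, $\pi^{\mathrm D*}=\pi_x(\rho_x^*)=f^2/\kappa_x-2f(c-\theta r)/b$. Then $\pi^{\mathrm E*}-\pi_x(\rho_x^*)=f\bigl[f(\beta(\beta+2)\kappa_x-\kappa_y)/(\kappa_x(\kappa_x+\kappa_y))-2\beta(c-\theta r)/b\bigr]>0$ is precisely the first pair of hypotheses; this is the paper's existence step. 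But that point need not exist inside the region $\rho_y^*<1$. In the example above, $\pi_y(\rho_y^*)$ is still about $0.007$ when $\rho_y^*$ reaches $1$ at $\ell\approx0.92$, so your check has nothing to evaluate. Your fallback of declaring (b) vacuous would then be false, since the actual cutoff is $\widetilde\ell\approx0.056$. The two checks together close the argument:
\begin{enumerate}
\item The $\ell_0$ comparison places the crossing below $\ell_0$, where $\rho_y^*<\rho_x^*<1$, so the envelope derivative is strictly negative on $[0,\widetilde\ell)$ and uniqueness follows.
\item The hypothesis $\pi^{\mathrm E*}>\pi_x(\rho_x^*)$ gives $\pi_y(\rho_y^*)=\pi^{\mathrm E*}-\pi_x(\rho_x^*)>0$ at the crossing, so the firm switches to equal accuracy rather than to serving type $x$ alone.
\end{enumerate}
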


Recall that the AI firm earns a per-use fee \(f\) whenever the physician consults AI and incurs cost \( \kappa_t(\rho_t-\tfrac{1}{2})^2 \) to deliver accuracy \( \rho_t \) for type-\(t\) patients. Because improving accuracy for type-$y$ patients is more costly (\( \kappa_y>\kappa_x \)) and demand from that segment is smaller (\( \beta\le 1 \)), the firm optimally allocates more performance to type-$x$ patients in the disparate regime, so \( \rho_x^*>\rho_y^* \).

As liability \( \ell \) rises, physician demand for AI in type-$y$ cases becomes more sensitive to expected legal exposure, reducing the profitability of a disparate design. Once \( \ell \) exceeds \( \widetilde{\ell} \), the firm optimally switches to an equal-accuracy design, which removes the disparity-triggered liability channel and helps preserve overall use. Although \( \rho_y^* \) increases with \( \ell \) within the disparate regime, it does not overtake \( \rho_x^* \); before that point, the firm prefers to switch to an equal-accuracy design.

 \cref{prop:developer_rho*} highlights a key trade-off: stronger liability pushes design toward fairness, but potentially at the cost of aggregate efficiency in accuracy investment. Weak liability sustains a profit-driven disparate design, whereas strong liability induces convergence to equal accuracy.

\section{Managerial and Policy Insights}\label{sec:result}

We now examine how the liability rule influences AI-use disparity, appropriate AI use, and patient welfare.

\subsection{AI-Use Disparity}\label{sec:disparity}

Absent differential costs or constraints, one would expect physicians to rely on AI at similar rates across patient types. In practice, however, AI use can differ systematically across groups. We refer to such differences as \emph{AI-use disparity}---the gap between the share of type-$x$ cases and the share of type-$y$ cases in which the physician uses AI.

AI-use disparity may arise under the Section~1557 rule. Importantly, under certain conditions, the rule may fail to mitigate disparities originating from algorithm development and may instead reduce physicians' incentives to use AI for disadvantaged patients. As a result, the patients whom the rule is designed to protect may be less likely to benefit from AI-assisted decision-making. This raises a key question: when does the rule exacerbate AI-use disparity, and under what conditions can such disparity be narrowed? The following proposition provides insight into this question. For ease of presentation, we define a threshold $f_l$ in the proof of \cref{prop:disparity}.

\begin{proposition}\label{prop:disparity}
Suppose $\beta (\beta+2) \kappa_x > \kappa_y$ and 
$f_l
< f< \min \left\{
\frac{\kappa_x}{2},  \frac{\kappa_y}{4\beta}\right\}.
$
   Then, type-$y$ AI use is non-monotone in liability: as $\ell$ increases, the physician uses AI for fewer type-$y$ patients when $\ell<\frac{b(\kappa_y-4\beta f)}{4\theta\beta f}$ and for more type-$y$ patients when $\frac{b(\kappa_y-4\beta f)}{4\theta\beta f}\le \ell<\widetilde{\ell}$. When $\ell>\widetilde{\ell}$, the firm switches to an equal-accuracy design, in which case type-$y$ AI use does not depend on $\ell$.
\end{proposition}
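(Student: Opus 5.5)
The plan is to substitute the disparate-regime equilibrium accuracy from \cref{prop:developer_rho*}(a) into the type-$y$ demand in \cref{equ:AI demand}, and then study the resulting function of $\ell$ directly.

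\textbf{Step 1 (reduced form of demand).} Write $g(\ell):=\rho_y^*-\tfrac12=\frac{\beta f(b+\theta\ell)}{b\kappa_y}$. Then $2\rho_y^*-1=2g$ and $1-\rho_y^*=\tfrac12-g$, so
\begin{align*}
\frac{b}{\beta}\,{d_y^{\mathrm D}}^*
&=2gb-(1-2g)\theta\ell-2c+2\theta r\\
&=\frac{2\beta f(b+\theta\ell)^2}{b\kappa_y}-\theta\ell-2c+2\theta r .
\end{align*}
This is a strictly convex quadratic in $\ell$. The direct deterrence effect of liability enters through the linear term $-\theta\ell$. The firm's upstream reallocation toward $\rho_y$ enters through the quadratic term.

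\textbf{Step 2 (monotonicity).} Differentiate in $\ell$:
\[
\frac{b}{\beta}\,\frac{\partial {d_y^{\mathrm D}}^*}{\partial\ell}=\theta\left(\frac{4\beta f(b+\theta\ell)}{b\kappa_y}-1\right).
\]
This derivative is negative exactly when $b+\theta\ell<\frac{b\kappa_y}{4\beta f}$, that is, when $\ell<\frac{b(\kappa_y-4\beta f)}{4\theta\beta f}$. It is nonnegative otherwise. The hypothesis $f<\kappa_y/(4\beta)$ guarantees that this turning point is strictly positive, so the decreasing branch is nonempty.

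\textbf{Step 3 (regime and equal-accuracy case).} I will invoke \cref{prop:developer_rho*} to confirm that the disparate design with the stated $\rho_y^*$ is the equilibrium for $\ell<\widetilde\ell$. For $\ell\ge\widetilde\ell$ the equilibrium is the equal-accuracy design with $\rho^*=\tfrac12+\frac{(1+\beta)f}{\kappa_x+\kappa_y}$. Plugging this into \cref{equ:AI demand_undisparity} gives a $d_y^{\mathrm E}$ that contains no $\ell$, so type-$y$ use does not depend on liability in that regime.

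\textbf{Main obstacle.} The delicate part is checking parameter consistency, and I expect this is what the threshold $f_l$ is designed for. Concretely, it must be verified that:
\begin{itemize}
\item[(i)] the hypotheses of \cref{prop:developer_rho*} hold on the relevant range of $\ell$; in particular, the bounds on $f$ involving $\ell$ must be satisfied, and I would take $f_l$ large enough to dominate the lower bounds there;
\item[(ii)] the turning point $\frac{b(\kappa_y-4\beta f)}{4\theta\beta f}$ lies below $\widetilde\ell$, so that the increasing branch is actually realized before the switch.
\end{itemize}
For (ii), I would use the characterization of $\widetilde\ell$ as the root of $\pi^{\mathrm D}(\rho_x^*,\rho_y^*;\ell)=\pi^{\mathrm E}(\rho^*)$. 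Since $\pi^{\mathrm D}$ is strictly decreasing in $\ell$, it suffices to show that $\pi^{\mathrm D}$ evaluated at the turning point still exceeds $\pi^{\mathrm E}$. That comparison reduces to an explicit polynomial inequality in $f$, which defines $f_l$.
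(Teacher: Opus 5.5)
Your Steps 1--2 and your plan for (ii) match the paper's argument. With $\bar\ell:=\frac{b(\kappa_y-4\beta f)}{4\theta\beta f}$, one gets $\rho_y^*=3/4$ at $\bar\ell$, and $[\pi^{\mathrm D}-\pi^{\mathrm E}]_{\ell=\bar\ell}=Af^2+\beta f-\frac{3\kappa_y}{16}$ with $A=\frac{\kappa_y-\beta(\beta+2)\kappa_x}{\kappa_x(\kappa_x+\kappa_y)}<0$. This is positive iff $f_a<f<f_b$, and because the quadratic is positive at $f=\kappa_y/(4\beta)$, only $f>f_a$ binds.

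The gap is your item (i): the $\ell$-dependent hypotheses of \cref{prop:developer_rho*} cannot be absorbed into a fixed lower threshold $f_l$.
\begin{itemize}
\item The participation bound $\frac{b\kappa_y(2c-2\theta r+\theta\ell)}{\beta(b+\theta\ell)^2}$ has supremum over $\ell\ge 0$ strictly above $\kappa_y/(4\beta)$ (given $c>\theta r$). Dominating it uniformly therefore empties the parameter region.
\item Dominating it only on $[0,\widetilde\ell)$ gives an implicit condition, since $\widetilde\ell$ depends on $f$. It does not produce a threshold of the form $f>f_l$.
\item The interiority requirement $f<\frac{b\kappa_y}{2\beta(b+\theta\ell)}$ tightens as $\ell$ rises and is not a lower bound at all.
\end{itemize}

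The missing idea is that both constraints hold automatically throughout the disparate regime once the $\ell$-free lower bound of \cref{prop:developer_rho*} is imposed. Write $\pi_y(\rho_y^*)=[\pi^{\mathrm D}-\pi^{\mathrm E}]+[\pi^{\mathrm E}-\pi_x(\rho_x^*)]$. The first bracket is positive whenever the disparate design is optimal. The second is positive exactly when $\beta(\beta+2)\kappa_x>\kappa_y$ and $f>\frac{2\beta(c-\theta r)\kappa_x(\kappa_x+\kappa_y)}{b(\beta(\beta+2)\kappa_x-\kappa_y)}$. Interiority then follows from $\rho_y^*<\rho_x^*<1$ in that regime.

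The same decomposition shows that, as $\ell$ rises past $\bar\ell$, the design switch binds strictly before type-$y$ participation fails. This is what guarantees that the increasing branch actually extends up to $\widetilde\ell$. It also guarantees that beyond $\widetilde\ell$ the firm moves to equal accuracy rather than abandoning type-$y$ patients. That is why the paper's threshold is $f_l=\max\{\frac{2\beta(c-\theta r)\kappa_x(\kappa_x+\kappa_y)}{b(\beta(\beta+2)\kappa_x-\kappa_y)},\,f_a\}$, with no $\ell$-dependent term.
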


\begin{figure}[htbp]
    \centering
    \includegraphics[scale=0.65]{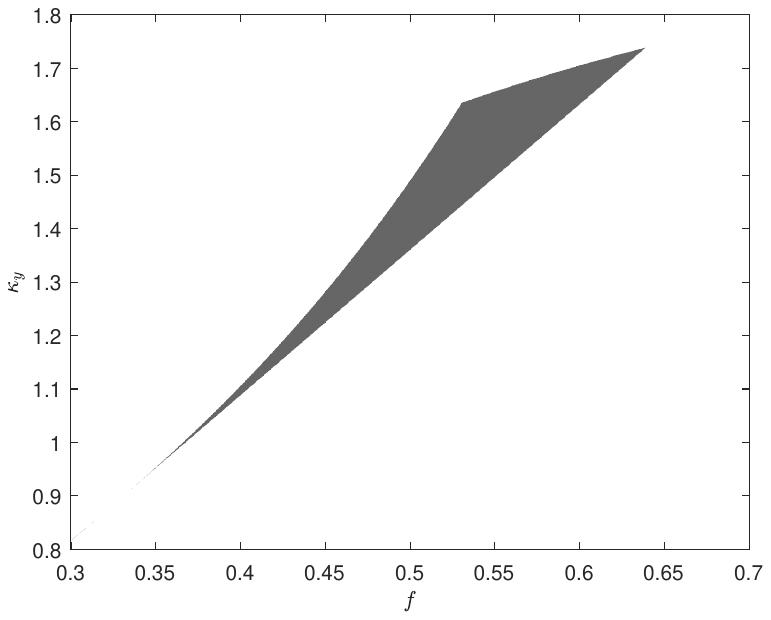}
    \caption{Parameter region in which type-$y$ AI use is non-monotone in liability (\cref{prop:disparity}).}
    \label{fig:regionProp3}
\end{figure}

Since CMS solicited comments on the proposed Section~1557 clinical algorithm provision in August 2022, a recurring concern has been that an increase in liability $\ell$ could induce physicians to pull back from clinical algorithms and thereby widen disparities in care \citep[see, e.g.,][]{goodman2023clinical}. \cref{prop:disparity} sharpens this point: an increase in liability $\ell$ can either exacerbate or mitigate AI-use disparity, depending on its level. The mechanism is a tug-of-war between two forces. Downstream, higher liability discourages the physician's AI use for type-$y$ patients by directly raising the physician's expected liability cost. Upstream, higher liability increases the firm's incentive to improve accuracy for type-$y$ patients, which makes AI more attractive to the physician and can offset (or reverse) the deterrence effect.

When liability $\ell$ is small, an increase in its level causes a limited reduction in the physician's AI use for type-$y$ patients. In this case, the firm has a small incentive to invest in type-$y$ accuracy. Therefore, the direct deterrence channel dominates, and the physician's AI use for type-$y$ patients falls as $\ell$ rises, thus amplifying the AI-use disparity. When liability $\ell$ is large, the contraction in the physician's AI use for type-$y$ patients becomes salient for the firm, prompting greater investment in type-$y$ accuracy. Because higher accuracy reduces the likelihood of inappropriate treatment and therefore reduces the physician's expected liability cost, the physician's AI use for type-$y$ patients can increase as $\ell$ rises, narrowing AI-use disparity.

The bounds on $f$ in \cref{prop:disparity} delimit the range over which the two forces are comparable in magnitude. When $f$ is small, the per-use revenue does not justify the higher cost of type-$y$ accuracy ($\kappa_x<\kappa_y$), and the firm supplies an equal-accuracy design, under which type-$y$ use does not vary with $\ell$. When $f$ is large, $\rho_y^*$ responds strongly to $\ell$, and the resulting accuracy gains outweigh the deterrence effect at every liability level, so type-$y$ use increases monotonically in $\ell$. Only over the intermediate range does the firm both prefer a disparate design and respond moderately enough for deterrence to dominate at low liability, which is what produces the turning point. \cref{fig:regionProp3} shows the parameter region in which non-monotone AI use occurs.

\subsection{(In)Appropriate AI Use}\label{sec:appropriate}

Consider an altruistic physician whose only concern is a patient's health outcome and costs (that is, $\theta=0$). From \cref{equ:Ux,equ:Uy}, the expected payoff of an altruistic physician who uses AI for an individual type-$t$ patient is given as follows:

{\footnotesize
\begin{align}\label{eq:Ual}
    U^{\text{al}}=&
    \begin{cases}
\mathcal P(s_t=T_1)\cdot Q_{2|1}\cdot b+ \mathcal P(s_t=T_2)\cdot Q_{2|2}\cdot  b-c, & \text{if }\alpha\le 1-\rho_t \\
\mathcal P(s_t=T_1)\cdot Q_{1|1}\cdot b+ \mathcal P(s_t=T_2)\cdot Q_{2|2}\cdot  b-c, & \text{if } 1-\rho_t < \alpha \le \rho_t \\
\mathcal P(s_t=T_1)\cdot Q_{1|1}\cdot b+ \mathcal P(s_t=T_2)\cdot Q_{1|2}\cdot  b-c, & \text{otherwise}
\end{cases}
\end{align}
}
A comparison of the altruistic physician's expected payoffs from following and disregarding AI's recommendation leads to the following lemma.

\begin{lemma}\label{lem:AI_decision_altruistic}
Suppose $c<(\rho_t-\tfrac12)b$ for $t\in\{x,y\}$. For type-$t$ patients, an altruistic physician uses AI if and only if
\[
\frac{(1-\rho_t)b+c}{b}<\alpha<\frac{\rho_t b-c}{b}.
\]
Conditional on using AI, the physician follows the AI signal.
\end{lemma}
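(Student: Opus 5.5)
The plan is to treat \cref{lem:AI_decision_altruistic} as the special case $\theta=0$ of \cref{lem:AI_decision_imaltruistic_xpatient}, and then redo the short direct argument so that nothing depends on the standing assumption $c>\theta r$. With $\theta=0$ the liability term disappears, so type-$x$ and type-$y$ patients face the same payoff structure \cref{eq:Ual}, differing only through $\rho_t$. It therefore suffices to carry out one computation for a generic $\rho_t\in(1/2,1)$.

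\textbf{Step 1: optimal use of the signal.} Conditional on consulting AI, the physician's treatment choice is given by the posterior comparisons. She chooses $T_1$ after signal $T_1$ iff $Q_{1|1}>1/2$, i.e., $\alpha>1-\rho_t$. She chooses $T_1$ after signal $T_2$ iff $Q_{1|2}>1/2$, i.e., $\alpha>\rho_t$. This produces the three regions in \cref{eq:Ual}. Multiplying out $\mathcal P(s_t=T_i)Q_{j|i}$ gives the values on each region:
\begin{itemize}
\item always $T_2$ gives $(1-\alpha)b-c$;
\item following the signal gives $\big(\alpha\rho_t+(1-\alpha)\rho_t\big)b-c=\rho_t b-c$;
\item always $T_1$ gives $\alpha b-c$.
\end{itemize}

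\textbf{Step 2: compare with the no-AI payoff.} Compare these values with $U=\max\{\alpha,1-\alpha\}b$ from \cref{equ:U}.
\begin{itemize}
\item On the outer regions, the AI payoff equals a fixed-treatment payoff minus $c$. Since $c>0$, it is strictly below $U$, so AI is not used there.
\item On the middle region $1-\rho_t<\alpha\le\rho_t$, AI is used iff $\rho_t b-c>\max\{\alpha,1-\alpha\}b$. This is equivalent to $\alpha<(\rho_t b-c)/b$ together with $1-\alpha<(\rho_t b-c)/b$, i.e., $\alpha>((1-\rho_t)b+c)/b$.
\end{itemize}

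\textbf{Step 3: check that the interval is nonempty and sits inside the middle region.} The hypothesis $c<(\rho_t-\tfrac12)b$ is what makes the lower bound strictly smaller than the upper bound, since the two bounds are symmetric about $1/2$. Because $c>0$, we also get $((1-\rho_t)b+c)/b>1-\rho_t$ and $(\rho_t b-c)/b<\rho_t$. So the derived interval lies within $(1-\rho_t,\rho_t]$, which shows that the middle-region restriction is automatically satisfied and the ``if and only if'' holds. Finally, in that region the optimal action is to follow the signal, which proves the second claim.

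The main point needing care is Step 3: confirming that the stated condition on $c$ is exactly what guarantees a nonempty use region strictly inside the middle region. A secondary point is stating the tie-breaking convention at $\alpha=\rho_t$ consistently with \cref{equ:Ux}. The remaining algebra is routine.
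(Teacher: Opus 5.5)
Your proposal is correct and follows essentially the same route as the paper: compute $U^{\text{al}}-U$ on the regions $\alpha\le 1-\rho_t$, $1-\rho_t<\alpha\le\rho_t$, and $\alpha>\rho_t$; discard the outer regions because the difference there is $-c<0$; solve the linear inequalities in the middle region; and read off that the physician follows the signal there. The only differences are cosmetic: you treat the middle region through $\max\{\alpha,1-\alpha\}b$ rather than splitting it at $\alpha=1/2$, and you state explicitly (using $c>0$) that the use interval lies strictly inside $(1-\rho_t,\rho_t)$, which the paper leaves implicit.
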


We define a physician's AI use for a specific patient type $t$ as \emph{appropriate} if it matches the behavior of an altruistic physician (with $\theta=0$). If the physician uses AI for more patients than an altruistic physician would, we refer to this as \emph{overuse} of AI; if the physician uses AI for fewer patients, we refer to it as \emph{underuse} of AI. The following proposition compares AI use under the two physician types.  We define a threshold $\ell_s$ (expression provided in the Appendix) at which the physician's AI use for type-$y$ patients switches from overuse to underuse.

\begin{proposition} The physician's use of AI relative to the clinically appropriate benchmark satisfies the following. \label{prop:fairness_appropriateUse}
\begin{itemize}
\item[(a)] When the AI firm supplies an equal-accuracy algorithm, the physician may overuse AI for both patient types.

\item[(b)] Suppose the interior disparate design is optimal at $\ell$, with $\pi_y(\rho_y^*)>0$ and $0<r/\ell<1/2$. If $\frac{1}{2}-\frac{\beta f(b+\theta \ell)}{b\kappa_y}=\frac{r}{\ell}$, then $\rho_y^*=1-\frac{r}{\ell}$ and the physician uses AI appropriately for type-$y$ patients. In this case, $\rho_x^*=\frac{1}{2}+\frac{f}{\kappa_x}$ and the physician overuses AI for type-$x$ patients.

\item[(c)] Suppose $\pi_y(\rho_y^*)>0$ and a disparate design is optimal at $\ell=\ell_s$ (specifically, conditions \eqref{eq:pos_profit_y_ls} and \eqref{eq:optimal_bias_ls} in the Appendix are satisfied). If $\beta (\beta+2) \kappa_x > \kappa_y$ and  $\frac{2 \beta (c -  \theta r) \kappa_x (\kappa_x+\kappa_y)} {b (\beta (\beta+2) \kappa_x-\kappa_y)} <f < \min\{\frac{\kappa_x}{2},  \frac{ b \kappa_y + 4 \kappa_y \theta r}{2 b \beta} - 
   \frac{\kappa_y \sqrt{2b   \theta r+ 4  \theta^2 r^2}}{b \beta}\}$, as liability $\ell$
increases from a low level, the physician's AI use for type-$y$ patients is non-monotone: she overuses AI for small $\ell$, underuses AI for intermediate $\ell$, and overuses AI again for large $\ell$.\footnote{\cref{fig:regionProp4} provides a numerical illustration of the parameter region in which tri-phase non-monotone AI use occurs.}

\label{page:prop3-end}\end{itemize}
\end{proposition}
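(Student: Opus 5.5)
The plan is to compare the AI-use interval in \cref{lem:AI_decision_imaltruistic_xpatient} or \cref{lem:AI_decision_imaltruistic_ypatient} with the altruistic interval in \cref{lem:AI_decision_altruistic}, both evaluated at the equilibrium accuracies from \cref{prop:developer_rho*}. Since $\alpha\sim U[0,1]$, each interval is symmetric about $1/2$. Comparing the lower endpoints is therefore enough. Under type-$x$ or equal accuracy, the nonclinical lower endpoint is $((1-\rho)b+c-\theta r)/b$, while the altruistic one is $((1-\rho)b+c)/b$. With $\theta r>0$ the former is strictly smaller, so the physician uses AI on a strictly larger set of priors. This is overuse.

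For part (a), I will take $\rho^*$ from \cref{prop:developer_rho*}(b). Because the liability channel is inactive under equal accuracy, both types face the type-$x$ threshold, and overuse follows at once for both types. I will also check that the altruistic interval is nonempty, i.e.\ $c<(\rho^*-\tfrac12)b$, so that the comparison is not vacuous. If the altruistic interval is empty while the physician's is nonempty, that is still overuse, so part (a) holds in either case.

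For type $y$ under the disparate design, the difference between the altruistic lower endpoint and the physician's lower endpoint is
\begin{align*}
\frac{\theta r-(1-\rho_y)\theta\ell}{b}.
\end{align*}
Hence the physician overuses AI iff $r>(1-\rho_y)\ell$, uses it appropriately iff $\rho_y=1-r/\ell$, and underuses it iff $r<(1-\rho_y)\ell$. For part (b), I will substitute $\rho_y^*=\tfrac12+\beta f(b+\theta\ell)/(b\kappa_y)$. The hypothesis $\tfrac12-\beta f(b+\theta\ell)/(b\kappa_y)=r/\ell$ is exactly $1-\rho_y^*=r/\ell$, which gives appropriate use. The condition $r/\ell<1/2$ keeps $\rho_y^*$ in $(1/2,1)$. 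The type-$x$ claim then follows from the first paragraph with $\rho_x^*=\tfrac12+f/\kappa_x$.

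For part (c), I will define $g(\ell)=r-(1-\rho_y^*(\ell))\ell$. Substituting $\rho_y^*$ gives
\begin{align*}
g(\ell)=r-\frac{\ell}{2}+\frac{\beta f\ell(b+\theta\ell)}{b\kappa_y},
\end{align*}
a convex quadratic with $g(0)=r>0$. Its discriminant condition for two positive roots is
\begin{align*}
\left(\tfrac12-\tfrac{\beta f}{\kappa_y}\right)^2>\frac{4\beta f\theta r}{b\kappa_y}.
\end{align*}
Solving this for $f$, together with $\beta f/\kappa_y<1/2$, should reproduce the stated upper bound $\frac{b\kappa_y+4\kappa_y\theta r}{2b\beta}-\frac{\kappa_y\sqrt{2b\theta r+4\theta^2r^2}}{b\beta}$. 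I will verify that this is the smaller root in $f$. Under that bound, $g>0$ on $[0,\ell_1)$, $g<0$ on $(\ell_1,\ell_2)$, and $g>0$ beyond $\ell_2$. This is overuse, then underuse, then overuse, with $\ell_s=\ell_1$.

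The main obstacle is ensuring that the disparate regime persists, i.e.\ $\ell<\widetilde\ell$, through the underuse interval. If it does not, the third phase must instead come from the switch to equal accuracy at $\widetilde\ell$. In that case part (a) yields overuse for large $\ell$ regardless, and the tri-phase pattern still holds. I would present the argument this way, so that only $\ell_s<\widetilde\ell$ is needed, which is what the Appendix conditions at $\ell_s$ guarantee. The lower bound on $f$ and $\beta(\beta+2)\kappa_x>\kappa_y$ are inherited from \cref{prop:developer_rho*} to keep the interior disparate solution valid.
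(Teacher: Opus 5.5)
Your proposal is correct and takes essentially the same route as the paper: you compare the AI-use cutoffs from the three lemmas to get the criterion $r \gtrless (1-\rho_y^*)\ell$; you solve the resulting quadratic in $\ell$ to obtain the roots $\ell_s<\ell_h$, with the bound on $f$ ensuring $\ell_s>0$; and you use the same two-case argument for the third overuse phase, according to whether the switch to equal accuracy occurs before or after $\ell_h$. The differences are cosmetic: you work with the convex function $r-(1-\rho_y^*)\ell$ rather than the paper's concave left-hand side, and you explicitly check the sum-of-roots condition $\beta f/\kappa_y<1/2$, which the paper leaves implicit.
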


\begin{figure}[htbp]
    \centering
    \includegraphics[scale=0.65]{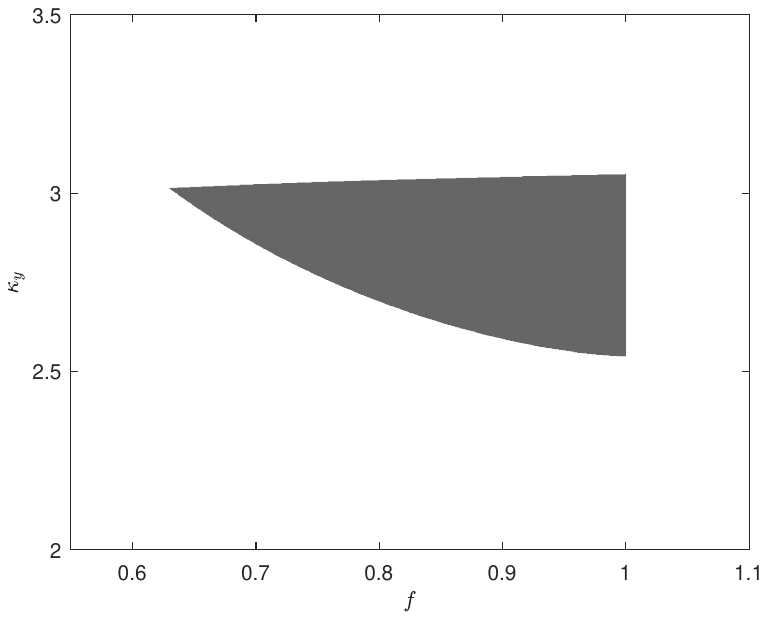}
    \caption{Parameter region in which type-$y$ AI use switches between overuse and underuse (\cref{prop:fairness_appropriateUse}(c)).}
    \label{fig:regionProp4}
\end{figure}

\cref{prop:fairness_appropriateUse}(a) highlights a basic tension. An equal-accuracy design removes the type-$y$-specific liability channel in our setup, but it does not remove the physician's private incentive to use the tool when use is rewarded. When reimbursement $r$ remains in place, the physician's marginal calculus can tilt toward reliance even when AI is not clinically warranted, leading to overuse for both patient types. The broader point is that equalizing algorithmic performance does not, by itself, align deployment incentives; if liability no longer disciplines use under equal accuracy, reimbursement can become the dominant force shaping utilization. This suggests a role for complementary instruments that better tie payment and accountability to clinical value.

\cref{prop:fairness_appropriateUse}(b) shows appropriate use for disadvantaged patients can arise even when the supplied algorithm has unequal accuracy across groups, provided reimbursement exactly offsets the physician's expected liability exposure from relying on AI in type-$y$ cases. The margin condition is $r=(1-\rho_y^*)\ell$. Using the equilibrium expression for $\rho_y^*$, this is equivalent to $\frac{r}{\ell}=\frac{1}{2}-\frac{\beta f(b+\theta \ell)}{b\kappa_y}$, which characterizes the combinations of liability and reimbursement that restore appropriate use for type-$y$ patients.

We illustrate \cref{prop:fairness_appropriateUse}(c) in \cref{fig:backfire_underuse_c}. The solid lines plot the belief cutoffs in $\alpha$ that delimit when the physician uses AI for type-$y$ patients; the dash-dot lines report the corresponding cutoffs for an altruistic physician ($\theta=0$). As liability $\ell$ rises, equilibrium use for type-$y$ patients can be non-monotone: overuse at low $\ell$, underuse at intermediate $\ell$, and overuse again when $\ell$ is large. This pattern contrasts with the view that liability primarily discourages reliance and leads clinicians to abandon AI \citep[see, e.g.,][]{goodman2023clinical}. The mechanism is a shifting balance between downstream deterrence and upstream design responses. For small $\ell$, reimbursement dominates expected liability, so the physician relies on the tool too often. For intermediate $\ell$, liability becomes first-order and suppresses reliance, generating underuse. For sufficiently large $\ell$, the firm is induced to supply an equal-accuracy design, which relaxes the liability channel for type-$y$ patients and restores the reimbursement-driven incentive to use AI, again producing overuse. Notably, in the low-$\ell$ region in which the physician overuses a disparate tool for type-$y$ patients, increasing $\ell$ can widen utilization differences across patient types even as the policy is intended to protect the disadvantaged group. The lesson is that equalizing measured performance is not an appropriate-use guarantee; policy must also address the incentives governing clinicians' reliance decisions.

\begin{figure}[htbp]
    \centering
    \includegraphics[scale=0.65]{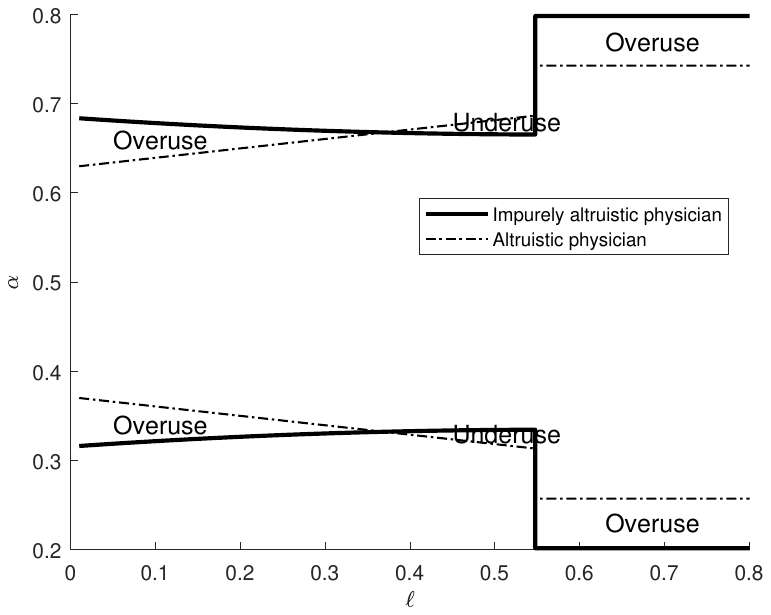}
    \caption{AI use for type-$y$ patients: for a given $\ell$, the physician uses AI when $\alpha$ lies between the cutoffs. Dash-dot lines correspond to an altruistic physician ($\theta=0$); solid lines correspond to an impurely altruistic physician.}
    \label{fig:backfire_underuse_c}
\end{figure}

\subsection{Effect of Mandating Equal Accuracy}\label{sec:welfare}

Recent policy initiatives have placed growing weight on eliminating performance differences across protected groups \citep[e.g.,][]{WhiteHouse2023}. In our setting, however, requiring equal accuracy across patient types is not innocuous: it reshapes both the firm's investment incentives and physicians' deployment incentives, with direct consequences for patient welfare. This section therefore asks a simple question: holding fixed the liability framework, what are the welfare effects of imposing an equal-accuracy requirement?

We consider two policy routes to equal accuracy. The first is a direct mandate that forces equal accuracy in cases in which the firm would otherwise choose a disparate design. The second is an indirect approach that raises physician liability so that the firm endogenously prefers an equal-accuracy design. Because liability does not affect the firm's accuracy choice conditional on an equal-accuracy design, these two routes deliver the same equilibrium accuracy and thus the same expected patient welfare. We therefore focus, without loss of generality, on the direct mandate and characterize how imposing equal accuracy changes equilibrium accuracy and welfare.

Let $W_x(\rho_x^*)$ and $W_y(\rho_y^*)$ denote expected welfare for type-$x$ and type-$y$ patients, respectively, under the disparate design evaluated at the firm's optimal accuracies $(\rho_x^*,\rho_y^*)$. Let $\rho^m$ denote the accuracy level chosen under an equal-accuracy requirement, even in cases in which the firm would otherwise prefer a disparate design. Define $W_x(\rho^m)$ and $W_y(\rho^m)$ as the corresponding welfare levels under the equal-accuracy design. Closed-form expressions for $W_x(\rho_x^*)$, $W_y(\rho_y^*)$, $W_x(\rho^m)$, and $W_y(\rho^m)$ are provided in the proof of \cref{prop:mandate_fairness_worse}. For ease of exposition, we define a threshold $\bar f$ by the indifference condition $W_y(\rho^m)=W_y(\rho_y^*)$ evaluated at $f=\bar f$.

\begin{proposition}\label{prop:mandate_fairness_worse} When equal accuracy is mandated: 
\begin{itemize}
\item[(a)] The optimal AI accuracy  satisfies \( \rho_y^* \leq \rho^m \leq \rho_x^* \).  

\item[(b)] The expected patient surplus remains unchanged for both type-$x$ and type-$y$ patients when the physician is fairly certain about the appropriate treatment (that is, when $\alpha$ is close to 0 or 1). It worsens (improves) for type-$x$ (type-$y$) patients when the physician is highly uncertain (that is, when $\alpha$ is close to $1/2$). Otherwise, that is, when the uncertainty is moderate, type-$x$ (type-$y$) patients are better (worse) off.\footnote{The expressions for thresholds (which are different for type-$x$ and type-$y$ patients) that specify the range of $\alpha$ in which patient welfare improves, worsens, and remains unchanged are provided in the proof of \cref{prop:mandate_fairness_worse}. }  

\item[(c)]  Suppose $c <\frac{\kappa_y ((2 + \beta) \kappa_x + \kappa_y) \theta r}{\kappa_y (2 \kappa_x + \kappa_y)-\beta \kappa_x^2}$ and $b > \frac{4 c \kappa_y (\kappa_x + \kappa_y)}{\kappa_x ( \kappa_y + \beta (\kappa_x +2 \kappa_y))}$. When $\ell$ is small, mandating equal accuracy yields three regimes. (1) 
If $\frac{b \kappa_y (2c-2\theta r+\theta \ell)}{\beta (b+\theta \ell)^2}<f<\frac{2 c \kappa_x (\kappa_x +\kappa_y)}{b ((2 + \beta) \kappa_x + \kappa_y)}$, then $W_x(\rho^m)>W_x(\rho_x^*)$ and $W_y(\rho^m)<W_y(\rho_y^*)$.  (2) If $\frac{2 c \kappa_x (\kappa_x +\kappa_y)}{b ((2 + \beta) \kappa_x + \kappa_y)} < f< \bar f$, then $W_x(\rho^m)<W_x(\rho_x^*)$ and $W_y(\rho^m)<W_y(\rho_y^*)$; at $f=\frac{2 c \kappa_x (\kappa_x +\kappa_y)}{b ((2 + \beta) \kappa_x + \kappa_y)}$, type-$x$ welfare is unchanged, and at $f=\bar f$, type-$y$ welfare is unchanged. (3) Finally, if $\bar f<f<\min\{\frac{\kappa_x}{2}, \frac{b \kappa_y}{2\beta (b+\theta \ell)}\}$, then $W_x(\rho^m)<W_x(\rho_x^*)$ and $W_y(\rho^m)>W_y(\rho_y^*)$.%
\footnote{Comparisons of patient welfare over a wide range of $(\beta,\ell,f)$ values are provided in Section~\ref{sec:OA_welfare} of the Online Appendix.}
 
\label{page:prop4-end}\end{itemize} \end{proposition}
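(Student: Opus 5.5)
Part (a) can be read off the closed forms in \cref{prop:developer_rho*}. Write $\rho^m=\tfrac12+\frac{(1+\beta)f}{\kappa_x+\kappa_y}$.
\begin{itemize}
\item[(i)] For $\rho^m\le\rho_x^*$, we need $(1+\beta)\kappa_x\le\kappa_x+\kappa_y$, i.e.\ $\beta\kappa_x\le\kappa_y$. This holds because $\beta\le1$ and $\kappa_x<\kappa_y$.
\item[(ii)] For $\rho_y^*\le\rho^m$, we need $\frac{\beta(b+\theta\ell)}{b\kappa_y}\le\frac{1+\beta}{\kappa_x+\kappa_y}$. At small $\ell$ this is $\beta(\kappa_x+\kappa_y)\le(1+\beta)\kappa_y$, i.e.\ $\beta\kappa_x\le\kappa_y$ again.
\item[(iii)] For general $\ell$ in the disparate regime, I would argue from the switching cutoff. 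By \cref{prop:developer_rho*}, $\rho_y^*<\rho_x^*$ throughout that regime, and the firm switches to equal accuracy before $\rho_y^*$ exceeds the pooled optimum. Concretely, I would verify that the inequality is implied by the upper bound $f<\frac{b\kappa_y}{2\beta(b+\theta\ell)}$ together with $\ell<\widetilde\ell$. If that fails, I would restrict (a) to the parameter region of \cref{prop:developer_rho*}.
\end{itemize}

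For (b), I would first write patient welfare as a function of $\alpha$ and accuracy.
\begin{itemize}
\item[(i)] Without AI the patient gets $\max\{\alpha,1-\alpha\}b$.
\item[(ii)] With AI, following the signal gives $\rho_t b-c$, since $\mathcal P(s=T_1)Q_{1|1}+\mathcal P(s=T_2)Q_{2|2}=\rho_t$.
\item[(iii)] By \cref{lem:AI_decision_imaltruistic_xpatient,lem:AI_decision_imaltruistic_ypatient}, the use region is an interval symmetric about $1/2$. Its half-width is $\rho_x-\tfrac12-(c-\theta r)/b$ for type $x$, and $\rho_y-\tfrac12-(1-\rho_y)\theta\ell/b-(c-\theta r)/b$ for disparate type $y$ (no $\ell$ term under equal accuracy).
\end{itemize}
Hence, for each $\alpha$, welfare is piecewise linear: $\rho b-c$ inside the interval and $\max\{\alpha,1-\alpha\}b$ outside.

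Comparing under the mandate:
\begin{itemize}
\item[(i)] Type $x$: accuracy falls and the interval shrinks. Near $1/2$, both regimes use AI and welfare drops by $(\rho_x^*-\rho^m)b$. In the moderate annulus that falls outside the new interval but inside the old one, the patient switches to no AI. Because of overuse, $\rho_x^* b-c<\max\{\alpha,1-\alpha\}b$ near the edges of the old interval, so welfare rises there. Far from $1/2$ welfare is unchanged.
\item[(ii)] Type $y$: the mirror argument applies. Accuracy rises and the interval widens, since the liability term also disappears. Near $1/2$ welfare improves. In the added annulus, reimbursement-driven overuse ($c>\theta r$ but $r>0$) makes AI worse than the prior-based choice, so welfare falls there. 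Far from $1/2$ it is unchanged.
\end{itemize}
The thresholds are the interval endpoints together with the points where $\rho b-c=\max\{\alpha,1-\alpha\}b$.

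For (c), I would integrate over $\alpha\sim U[0,1]$ with $\ell$ small. I would set $\ell=0$ (or expand to first order in $\ell$) and write $W_t(\rho)$ as a quadratic in $\rho$: the no-AI baseline plus the integral of $(\rho b-c-\max\{\alpha,1-\alpha\}b)$ over the use interval. Letting $h=\rho-\tfrac12-(c-\theta r)/b$ be the half-width, this gain is $2h(\rho b-c-b/2)-bh^2$, up to arrangement.

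\begin{itemize}
\item[(i)] Type $x$: substitute $\rho_x^*$ and $\rho^m$, and take the difference $W_x(\rho^m)-W_x(\rho_x^*)$. It factors as $(\rho^m-\rho_x^*)$, which is negative, times a term linear in $f$. The sign change should occur at $f=\frac{2c\kappa_x(\kappa_x+\kappa_y)}{b((2+\beta)\kappa_x+\kappa_y)}$. I would confirm this by checking that the linear factor is $(\rho^m+\rho_x^*)b$ minus a constant depending on $c$ and $\theta r$.
\item[(ii)] Type $y$: the analogous difference, $(\rho^m-\rho_y^*)$ times a linear factor, defines $\bar f$.
\item[(iii)] The conditions on $c$ and $b$ are what guarantee the ordering of the cutoffs. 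They should ensure that $\bar f$ exceeds the $x$-cutoff and that both cutoffs lie within the feasible $f$-range of \cref{prop:developer_rho*}.
\end{itemize}

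The main obstacle is (c): the algebra of these quadratic welfare differences, and verifying the ordering $\frac{2c\kappa_x(\kappa_x+\kappa_y)}{b((2+\beta)\kappa_x+\kappa_y)}<\bar f<\min\{\cdot\}$ from the stated hypotheses. Continuity in $\ell$ then extends the result from $\ell=0$ to small $\ell$.
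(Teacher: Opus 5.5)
\textbf{Gap in (a)(iii).} Your proposal does not prove $\rho_y^*\le\rho^m$ at an arbitrary $\ell$ in the disparate regime. You assert that the firm switches to equal accuracy before $\rho_y^*$ overtakes $\rho^m$, but that is the whole content of the claim, and the route you sketch cannot deliver it.

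\begin{itemize}
\item $\rho_y^*=\rho^m$ holds exactly at $\widehat\ell=\frac{b(\kappa_y-\beta\kappa_x)}{\beta\theta(\kappa_x+\kappa_y)}$. This level does not involve $f$, so the interiority bound $f<\frac{b\kappa_y}{2\beta(b+\theta\ell)}$ is beside the point.
\item Knowing $\rho_y^*<\rho_x^*$ does not help, because $\rho^m<\rho_x^*$ as well.
\item The condition $\ell<\widetilde\ell$ is useful only once you know $\widetilde\ell\le\widehat\ell$. Restricting to the parameter region of \cref{prop:developer_rho*} does not supply this.
\end{itemize}

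The missing idea is a profit comparison at $\widehat\ell$. There the type-$y$ accuracies, and hence costs, coincide. Only the type-$x$ misallocation loss and the type-$y$ liability term remain. Writing $\Delta=\pi^{\mathrm D}(\rho_x^*,\rho_y^*)-\pi^{\mathrm E}(\rho^m)$,
\[
\Delta(\widehat\ell)=\kappa_x(\rho_x^*-\rho^m)^2-\frac{2\beta f\theta\widehat\ell}{b}(1-\rho^m)=\frac{f(\kappa_y-\beta\kappa_x)\bigl[f\bigl((2+\beta)\kappa_x+\kappa_y\bigr)-\kappa_x(\kappa_x+\kappa_y)\bigr]}{\kappa_x(\kappa_x+\kappa_y)^2}.
\]
The bracket is increasing in $f$ and equals $\tfrac{\kappa_x}{2}(\beta\kappa_x-\kappa_y)<0$ at $f=\kappa_x/2$. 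Hence $\Delta(\widehat\ell)<0$ for $0<f<\kappa_x/2$.

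The disparate-design profit is decreasing in $\ell$ (pointwise in $(\rho_x,\rho_y)$, hence at the optimum), while $\pi^{\mathrm E}$ does not depend on $\ell$. So $\Delta(\ell)<0$ for all $\ell\ge\widehat\ell$. The disparate design is therefore chosen only when $\ell<\widehat\ell$, where $\rho_y^*<\rho^m$. This is the paper's argument.

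The gap also affects your (b). The type-$y$ nesting $\alpha_m^-<\alpha_y^-$ and the welfare gain near $\alpha=1/2$ both use $\rho_y^*\le\rho^m$ at the prevailing $\ell$. Without the step above, (b) is established only for small $\ell$, where your (a)(ii) plus continuity suffices.

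\textbf{Parts (b) and (c).} Apart from this, your plan follows the paper's route.
\begin{itemize}
\item For (b), you compare welfare pointwise on nested use intervals, with each patient-indifference point lying $\theta r/b$ inside the corresponding use cutoff.
\item For (c), you use closed-form welfare at $\ell\to0$ and extend by continuity.
\end{itemize}

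Your half-width computation makes (c) especially clean. With $u=\rho-\tfrac12$, the gain from AI is $b\bigl(u-\tfrac{c-\theta r}{b}\bigr)\bigl(u-\tfrac{c+\theta r}{b}\bigr)$. Each welfare difference (for type $y$, at $\ell=0$ and up to the factor $\beta$) is then $(\rho^m-\rho_t^*)\bigl[(\rho^m+\rho_t^*)b-b-2c\bigr]$. The $\theta r$ terms cancel, so the constant is $b+2c$, not something involving $\theta r$. This factorization yields $f_x$ and the $\ell\to0$ limit of $\bar f$.

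The ordering of the two thresholds needs none of the hypotheses on $c$ and $b$: it is equivalent to $\rho_x^*>\rho_y^*$ at $\ell=0$. Those hypotheses serve only to make regimes (1) and (3) nonempty. The condition on $c$ ensures $f_x$ exceeds the participation bound $\frac{2\kappa_y(c-\theta r)}{\beta b}$, and the condition on $b$ ensures the type-$y$ threshold lies below $\kappa_x/2$.
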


\cref{prop:mandate_fairness_worse}(a) shows imposing an equal-accuracy requirement reshuffles performance across groups: relative to the firm's preferred disparate design, accuracy falls for type-$x$ patients (so $\rho^m\le \rho_x^*$) and rises for type-$y$ patients (so $\rho^m\ge \rho_y^*$). A natural first reaction is therefore ``type-$y$ gains, type-$x$ loses.'' \cref{prop:mandate_fairness_worse}(b) cautions against that conclusion. Whether a given patient benefits depends not only on accuracy but also on how the requirement changes the physician's use of AI along the margin.

The mechanism works through utilization. By raising accuracy for type-$y$ patients and lowering it for type-$x$ patients, the equal-accuracy requirement expands AI use for some type-$y$ patients and contracts AI use for some type-$x$ patients. For the newly treated type-$y$ margin, welfare can fall when physician uncertainty is moderate: the incremental health improvement from relying on AI is then small, while the fixed cost of using AI, $c$, is paid whenever the physician deploys the tool. For the dropped type-$x$ margin, welfare can rise for the symmetric reason: these patients forgo a modest accuracy benefit but avoid incurring the cost $c$. In short, an equal-accuracy requirement can move patients onto (or off) AI precisely where the accuracy gains are too small (or too large) relative to the use cost.

Having established that welfare can move in either direction at the utilization margin, we turn to aggregate welfare by patient type. \cref{prop:mandate_fairness_worse}(c) shows that when $\ell$ is small, an equal-accuracy requirement can reduce aggregate welfare for both groups when the per-use payment $f$ lies in an intermediate range. Two forces drive this outcome. First, the accuracy adjustment required to equalize performance is inherently asymmetric: because improving type-$y$ accuracy is more expensive, parity is achieved primarily by cutting type-$x$ accuracy, so $\rho_x^*-\rho^m>\rho^m-\rho_y^*$. Second, the requirement shifts utilization incentives. For type-$y$ patients, higher accuracy raises welfare directly but also expands AI use, which can push deployment beyond the clinical margin; both effects strengthen with $f$, and the overuse channel can dominate when $f$ is moderate. For type-$x$ patients, reduced use can mitigate overuse, but the loss from lower accuracy becomes increasingly important as $f$ induces larger cuts. When $f$ is neither too small to matter nor large enough to generate substantial type-$y$ accuracy gains, the combination of a sizeable decline in type-$x$ accuracy and expanded (and potentially excessive) use for type-$y$ patients can leave both groups worse off.

When $f$ is sufficiently large, the balance shifts. The equal-accuracy requirement then induces a substantial reallocation of accuracy from type-$x$ to type-$y$, yielding the intuitive aggregate ordering: $W_x(\rho^m)<W_x(\rho_x^*)$ while $W_y(\rho^m)>W_y(\rho_y^*)$. At the same time, as \cref{prop:mandate_fairness_worse}(b) emphasizes, these aggregate comparisons can mask heterogeneity: some type-$y$ patients can still be worse off under the requirement because expanded reliance on AI occurs precisely where the clinical gains are small relative to the use cost.

Finally, when $f$ is sufficiently small, welfare differences are driven less by accuracy and more by utilization. In this region, both gaps $\rho_x^*-\rho^m$ and $\rho^m-\rho_y^*$ are small, so the equal-accuracy requirement mainly shifts the frequency of use: it reduces AI use for type-$x$ patients and increases it for type-$y$ patients. Type-$x$ patients can therefore benefit from fewer exposures to a low-accuracy tool on the margin, whereas type-$y$ patients can be harmed because the increase in use is not accompanied by a commensurate improvement in accuracy. In this case, we may have $W_x(\rho^m)>W_x(\rho_x^*)$ yet $W_y(\rho^m)<W_y(\rho_y^*)$.

Next, we turn to a brief discussion of an alternative setting in which patient utility accounts for potential liability transfers.\label{page:transfer-surplus} Under this formulation, the patient's utility and the physician's payoff are updated according to \cref{table:patient_utility_withl,table:physician_payoff_withl}.

\vspace{0.15in}
\begin{table}[H]
\centering
\caption{The Patient's Expected Utility When the Physician Uses AI and Liability Transfers Are Included}
\label{table:patient_utility_withl}
\medskip
\begin{tabular}{lcc}
\toprule
 & \multicolumn{2}{c}{Physician's decision} \\
\cmidrule(lr){2-3}
AI signal & $T_1$ & $T_2$ \\
\midrule
$s_t = T_1$ & $Q_{1|1}\cdot b - c + Q_{2|1}\cdot \mathbf{1}_{t=y}\,\ell$ & $Q_{2|1}\cdot b - c $ \\[6pt]
$s_t = T_2$ & $Q_{1|2}\cdot b - c$ & $Q_{2|2}\cdot b - c +  Q_{1|2}\cdot \mathbf{1}_{t=y}\,\ell$ \\
\bottomrule
\end{tabular}
\vspace{0.15in}
\end{table}

\vspace{0.15in}
\begin{table}[H]
\centering
\caption{The Physician's Expected Payoff When Treating an Individual Patient Using AI and Liability Transfers Are Included}
\label{table:physician_payoff_withl}
\medskip
\begin{tabular}{lcc}
\toprule
 & \multicolumn{2}{c}{Physician's decision} \\
\cmidrule(lr){2-3}
AI signal & $T_1$ & $T_2$ \\
\midrule
$s_t = T_1$ & $Q_{1|1}\cdot b - c + \theta r -(\theta-1) Q_{2|1}\cdot \mathbf{1}_{t=y}\,\ell$ & $Q_{2|1}\cdot b - c + \theta r$ \\[6pt]
$s_t = T_2$ & $Q_{1|2}\cdot b - c + \theta r$ & $Q_{2|2}\cdot b - c + \theta r - (\theta-1)Q_{1|2}\cdot \mathbf{1}_{t=y}\,\ell$ \\
\bottomrule
\end{tabular}
\vspace{0.15in}
\end{table}

Note that if $\theta < 1$, the physician 
may use AI solely to exploit the patient's financial gain 
from the liability transfer, rather than to improve clinical outcomes. 
To rule out this uninteresting regime, we impose the parametric 
restriction $\theta > 1$. Under this condition, the physician's payoff 
matrix in \cref{table:physician_payoff_withl} is structurally isomorphic 
to that in \cref{table:physician_payoff}, with the weight on the 
liability term updated from $\theta$ to $\theta-1$.\label{page:transfer-theta} Consequently, \cref{lem:AI_decision_imaltruistic_ypatient} 
continues to hold directly by updating the liability-adjusted selfishness 
parameter to $\theta - 1$, and the subsequent analytical insights 
follow analogously.

\section{Model Extensions}\label{sec:extensions}
In this section, we present three extensions to our base model. In the first extension, the AI firm also chooses the per-use price in addition to the two accuracy levels. In the second extension, we explore the liability level that maximizes aggregate patient welfare. Finally, we consider patient-type-dependent priors for the physician. Proofs of the results in this section are in the Online Appendix (Sections~\ref{sec:OA_pricing}, \ref{sec:OA_welfare_max}, and~\ref{sec:OA_priors}).

\subsection{AI Firm's Pricing Decision}\label{sec:pricing_extn}

This section endogenizes the per-use fee $f$ in an extension in which patients pay for AI directly, so $c=f$.\label{page:cf-pricing} This structure captures emerging usage-based arrangements in which providers charge an out-of-pocket fee for AI-enhanced services. For example, RadNet (a large diagnostic imaging company) launched its Enhanced Breast Cancer Detection program in 2023 at a \$60 out-of-pocket fee for an AI mammography read and has since reduced the fee to \$40 \citep{Cheatham2024Who}. The timing parallels the baseline model: the firm first chooses accuracy levels; it then sets the fee $f$; the physician observes $(\rho_x,\rho_y,f)$ when deciding whether to use AI for a given patient. All other assumptions are unchanged.
We summarize the core implications here and defer the technical derivations and additional comparative-statics details to Section~\ref{sec:OA_pricing} of the Online Appendix.

\begin{proposition}\label{prop:rho_f}
Let $\ell^\dagger$ denote the design-switch cutoff at which the firm is indifferent between supplying a disparate design and an equal-accuracy design. Define
\[
D(\ell):=2 b^2\!\left(\beta^2 \kappa_x + \kappa_y \right)-4b \kappa_x \!\left( (\beta + 1)\kappa_y - \beta^2 \theta \ell \right) + 2 \beta^2 \kappa_x \theta^2 \ell^2
\]
and the common factors $\mathcal{T}:={\theta\left(\beta \ell - 2(\beta + 1) r\right)}/{D(\ell)}$ and $\mathcal{S}:={\theta r}/\!\left(2(\kappa_x + \kappa_y) - b(\beta + 1)\right)$.
 There exist parameter values such that the equilibrium accuracies are interior, $1/2<\rho^*,\rho_x^*,\rho_y^*<1$, and the following holds: 

(a) If $\ell<\ell^\dagger$, the firm chooses a disparate design with
\[
\rho_x^*  = \frac{1}{2} + b \kappa_y \mathcal{T},
\qquad
\rho_y^*  = \frac{1}{2} + \beta \kappa_x (b + \theta \ell)\, \mathcal{T},
\qquad
f^{\mathrm D}  = b \kappa_x \kappa_y \mathcal{T}.
\]
Moreover, $\rho_x^*>\rho_y^*$, and there are parameter values for which both $f^{\mathrm D}$ and $\rho_x^*$ are non-monotone in $\ell$.

(b) If $\ell\ge \ell^\dagger$, the firm supplies an equal-accuracy design with
\[
\rho^* = \frac{1}{2} + (\beta + 1)\,\mathcal{S}
\qquad\text{and}\qquad
f^{\mathrm E} = (\kappa_x + \kappa_y)\,\mathcal{S}.
\]
\vspace{-22pt}%
\end{proposition}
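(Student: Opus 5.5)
We will solve the extension by backward induction, exactly as in the baseline. In the last stage the physician's behaviour is still given by \cref{lem:AI_decision_imaltruistic_xpatient,lem:AI_decision_imaltruistic_ypatient}, with $c$ replaced by $f$. The demands are therefore \cref{equ:AI demand} and \cref{equ:AI demand_undisparity} with $c=f$.

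\emph{Pricing stage.} Write $u:=\rho_x-\tfrac12$ and $v:=\rho_y-\tfrac12$. Under a disparate design, $(1-\rho_y)=\tfrac12-v$, so total demand is $(A-2(1+\beta)f)/b$ with
\[
A:=2bu+2\beta(b+\theta\ell)v+2(1+\beta)\theta r-\beta\theta\ell ,
\]
which is affine in $(u,v)$. Revenue $f(A-2(1+\beta)f)/b$ is concave in $f$. Its maximizer is $f=A/(4(1+\beta))$, and the maximized revenue is $A^2/(8(1+\beta)b)$.

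\emph{Accuracy stage, disparate design.} The firm maximizes
\[
A^2/(8(1+\beta)b)-\kappa_x u^2-\kappa_y v^2 .
\]
Because $A$ is affine, the first-order conditions are linear in $(u,v)$:
\[
u=\frac{A}{8(1+\beta)\kappa_x},\qquad v=\frac{\beta(b+\theta\ell)A}{8(1+\beta)b\kappa_y}.
\]
So $u:v=b\kappa_y:\beta\kappa_x(b+\theta\ell)$. Substituting both back into the definition of $A$ gives a single linear equation for $A$. Solving it, and then clearing denominators, should produce the denominator $D(\ell)$ and the common factor $\mathcal T$. This yields $\rho_x^*$, $\rho_y^*$ and $f^{\mathrm D}=A/(4(1+\beta))$ in the stated form.

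For this to be the optimum, the objective must be concave. We impose this via negative definiteness of the Hessian, which amounts to a sign condition tied to $D(\ell)$. We also need $\mathcal T>0$ and the interiority bounds $\rho<1$. Together these define the parameter set asserted to exist.

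Given $\mathcal T>0$, the ordering $\rho_x^*>\rho_y^*$ reduces to $b\kappa_y>\beta\kappa_x(b+\theta\ell)$. This holds for $\ell$ below $b(\kappa_y-\beta\kappa_x)/(\beta\kappa_x\theta)$, which is positive because $\kappa_y>\kappa_x\ge\beta\kappa_x$. We will restrict to parameters for which this threshold exceeds $\ell^\dagger$.

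\emph{Accuracy stage, equal-accuracy design.} Here $A=(1+\beta)(2bu+2\theta r)$. The objective $(1+\beta)(bu+\theta r)^2/(2b)-(\kappa_x+\kappa_y)u^2$ is concave iff $2(\kappa_x+\kappa_y)>b(1+\beta)$. Its first-order condition gives $u=(1+\beta)\mathcal S$. Then $f^{\mathrm E}=(bu+\theta r)/2$, and a one-line simplification shows this equals $(\kappa_x+\kappa_y)\mathcal S$. This establishes part~(b), conditional on the cutoff.

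\emph{The cutoff $\ell^\dagger$.} The equal-accuracy profit does not depend on $\ell$. By the envelope theorem, the disparate profit has derivative
\[
\frac{A}{4(1+\beta)b}\cdot\frac{\partial A}{\partial\ell}=\frac{A}{4(1+\beta)b}\cdot\bigl(-\beta\theta(1-2v)\bigr)<0 ,
\]
since $A>0$ and $v<\tfrac12$. So the disparate profit is strictly decreasing in $\ell$. At $\ell=0$ the disparate optimum weakly dominates, because equal accuracy is a constrained choice. A single crossing then delivers a unique $\ell^\dagger$, provided the disparate profit falls below the equal-accuracy profit before the interior region is exited. We ensure this by an explicit parameter choice.

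\emph{Main obstacle.} The hardest step is the non-monotonicity claim for $f^{\mathrm D}$ and $\rho_x^*$ in part~(a). Both are proportional to $\mathcal T(\ell)$, whose numerator is linear in $\ell$ while $D(\ell)$ is quadratic in $\ell$. So $\mathrm d\mathcal T/\mathrm d\ell$ has the sign of a quadratic in $\ell$. We would first compute this quadratic, then locate an interior critical point below $\ell^\dagger$. Finally we would exhibit one concrete parameter vector (for instance $\beta=1$, small $\theta r$, $\kappa_x<\kappa_y$ near $b$) that satisfies all of the following at once: the concavity and interiority conditions, $\mathcal T>0$, $\rho_x^*>\rho_y^*$ on $[0,\ell^\dagger)$, and a sign change of $\mathcal T'$ on that interval. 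Checking all of these simultaneously is a numerical verification and is where the bulk of the work lies.
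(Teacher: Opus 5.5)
Your plan follows the paper's proof step for step, and several pieces match it exactly: your best-response price $f=A/(4(1+\beta))$ is the paper's $f(\rho_x,\rho_y)$, concavity reduces to $D(\ell)<0$, and your envelope derivative equals the paper's $-2\beta\theta f^{\mathrm D}(1-\rho_y^*)/b$. The ordering also reduces to the sign of $b\kappa_y-\beta\kappa_x(b+\theta\ell)$, non-monotonicity is read off a concave quadratic (the paper's $E_1$), and your equal-accuracy computation is correct. However, the disparate-design first-order conditions are off by a factor of two. Since $\partial A/\partial u=2b$ and $\partial A/\partial v=2\beta(b+\theta\ell)$, differentiating $A^2/(8(1+\beta)b)$ gives $u=A/(4(1+\beta)\kappa_x)=f^{\mathrm D}/\kappa_x$ and $v=\beta(b+\theta\ell)A/(4(1+\beta)b\kappa_y)=\beta(b+\theta\ell)f^{\mathrm D}/(b\kappa_y)$. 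These are the baseline conditions evaluated at $f=f^{\mathrm D}$, as the envelope theorem requires. With your versions, substituting into the definition of $A$ yields the coefficient $4(1+\beta)b\kappa_x\kappa_y-b^2\kappa_y-\beta^2\kappa_x(b+\theta\ell)^2$, which is not proportional to $D(\ell)$. So the step that ``should produce $D(\ell)$'' fails, and you would obtain $\rho_x^*-\tfrac12=f^{\mathrm D}/(2\kappa_x)$, contradicting the statement. With the corrected conditions, $A=4(1+\beta)f^{\mathrm D}$ becomes $-D(\ell)\,f^{\mathrm D}=b\kappa_x\kappa_y\theta\bigl(2(1+\beta)r-\beta\ell\bigr)$, i.e., $f^{\mathrm D}=b\kappa_x\kappa_y\mathcal T$, and the stated $\rho_x^*$, $\rho_y^*$ follow.

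Second, the feasibility check you defer to the numerical step omits conditions the paper imposes, and your suggested parameter region would not produce the sign change. Because $c=f$, the standing assumption $c>\theta r$ becomes endogenous. The demand formulas you import from the lemmas hold only if $f^{\mathrm D}>\theta r$ (the paper's $\mathcal T>\theta r/(b\kappa_x\kappa_y)$) and $f^{\mathrm E}>\theta r$ (i.e., $b(1+\beta)>\kappa_x+\kappa_y$). You also need positive type-$y$ demand (the paper checks $\pi_y(\rho_y^*,f^{\mathrm D})>0$) and $\rho^*<1$. For non-monotonicity, $\mathcal T'(\ell)=2\beta\theta E_1(\ell)/D(\ell)^2$ with $E_1(\ell)=E_1(0)+\beta\kappa_x\theta^2\ell\,(4(\beta+1)r-\beta\ell)$. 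On the range $\ell<2(\beta+1)r/\beta$ where $\mathcal T>0$, this rises by at most $4(\beta+1)^2\kappa_x\theta^2r^2$. So you need $E_1(0)=b^2(\beta^2\kappa_x+\kappa_y)-2b(\beta+1)\kappa_x(\kappa_y-2\beta\theta r)$ to be negative but within that amount of zero. With $\beta=1$, $\kappa_x,\kappa_y$ near $b$ and small $\theta r$, you get $E_1(0)\approx-2b^3$, so $\mathcal T$, $\rho_x^*$ and $f^{\mathrm D}$ are monotone decreasing there. With small $\theta r$ you must instead sit near the concavity boundary, $D(0)\approx-8b\beta(\beta+1)\kappa_x\theta r$. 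Finally, your restriction that $b(\kappa_y-\beta\kappa_x)/(\beta\kappa_x\theta)$ exceed $\ell^\dagger$ is harmless but unnecessary. Where $\rho_x^*=\rho_y^*$, offering the same $(\rho,f)$ as an equal-accuracy design removes the liability term and strictly raises demand, so the design switch happens earlier automatically; this is how the paper argues.
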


A key implication of \cref{prop:rho_f} is that endogenizing the per-use fee can make both price and utilization respond non-monotonically to liability. When $\ell$ rises from a low level, expected utilization among type-$y$ patients falls, weakening marginal demand; the firm then finds it profitable to cut the fee and to reallocate investment toward improving $\rho_y^*$ to sustain type-$y$ uptake. The lower fee, in turn, expands type-$x$ usage, which allows the firm to economize on $\rho_x^*$ while maintaining adoption. As $\ell$ becomes larger, however, the return to improving type-$y$ accuracy strengthens and type-$y$ utilization rebounds; the firm can then raise the fee, which dampens type-$x$ demand and makes it attractive to increase $\rho_x^*$ to restore type-$x$ uptake.

We also find that, within the disparate-design regime, type-$y$ utilization is non-monotone in $\ell$ for some parameter values: it decreases when $\ell$ is below a threshold, but increases once $\ell$ exceeds that threshold. Thus, the non-monotonicity in AI use for type-$y$ patients persists even when the firm endogenizes the per-use fee $f$ and patients bear that fee, so that $c=f$. As in the main model, an equal-accuracy requirement can still reduce welfare for both patient types when the AI firm endogenizes the per-use price.

Next, we highlight two additional insights from the model with endogenous pricing. First, when the AI firm endogenously chooses its per-use price $f$, the equilibrium in which only type-$x$ patients are served, which arises in the baseline model when $\ell$ is large and $\beta$ is moderate, no longer occurs. Instead, both patient types are served, and an equal-accuracy AI outcome can be sustained. As a result, both the firm's revenue and AI equity improve. The additional pricing instrument allows the firm to profitably serve some type-$y$ patients while preserving revenue from type-$x$ patients through the joint choice of price and accuracy. Moreover, equal-accuracy AI becomes easier to sustain profitably because it mitigates the physician's liability concerns, encourages greater AI use, and allows the firm to further expand utilization through pricing.

Second, when $f$ is endogenous, the welfare of type-$x$ patients is also affected by liability. In particular, when liability is low, mandating equal accuracy can reduce type-$x$ patients' welfare relative to the case with exogenous $f$. The reason is that, when liability is low, the equilibrium accuracy for type-$x$ patients can be high; see \cref{prop:rho_f}. In that case, an equal-accuracy requirement may substantially reduce type-$x$ accuracy, thereby lowering type-$x$ welfare.

\subsection{Aggregate-Welfare-Maximizing Liability}\label{sec:welfaremax_extn}
This subsection studies how the liability level $\ell$ should be set to maximize aggregate expected welfare. Two features of the model make the problem unusually sharp. First, holding the firm's AI design fixed, expected total welfare for type-$x$ patients under a disparate design does not depend on $\ell$, and aggregate welfare under an equal-accuracy design does not depend on $\ell$. Second, \cref{prop:developer_rho*} implies that increasing $\ell$ can change the firm's preferred design, inducing a switch from a disparate design to an equal-accuracy design at a threshold $\widetilde \ell$. As a result, liability affects aggregate welfare primarily through whether it triggers this design transition.
We present the main economic intuition below and refer readers to Section~\ref{sec:OA_welfare_max} of the Online Appendix for computational details and supplementary parameterizations.

Across the parameterizations we examine numerically, the welfare-maximizing liability lies at the boundary of this switch. Specifically, it is either the largest $\ell$ for which the firm still prefers a disparate design (see \cref{fig:fig1}) or the smallest $\ell$ for which the firm prefers an equal-accuracy design (see \cref{fig:fig2}). Thus, from the perspective of aggregate welfare, choosing $\ell$ amounts to choosing the design regime---keeping the system in the disparate-design region or inducing the shift to equal accuracy. Throughout, aggregate welfare refers to total patient health surplus, summed across both patient groups. We do not interpret it as full social welfare: doing so would require a common criterion encompassing patient health surplus, physician utility, firm profit, and any deadweight legal or compliance costs. We regard a full social-welfare analysis as a natural direction for future work.

\begin{figure}[htbp]
    \centering
    \begin{subfigure}[b]{0.4\textwidth}
        \centering
        \includegraphics[width=\textwidth]{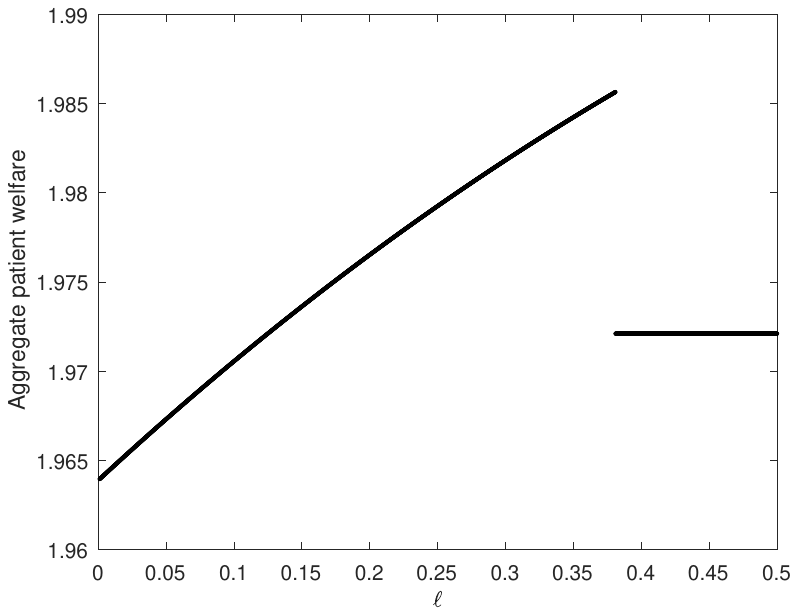}
        \caption{Case in which welfare is higher under a disparate design.}
       \label{fig:fig1}
    \end{subfigure}
    \quad
    \begin{subfigure}[b]{0.4\textwidth}
        \centering
        \includegraphics[width=\textwidth]{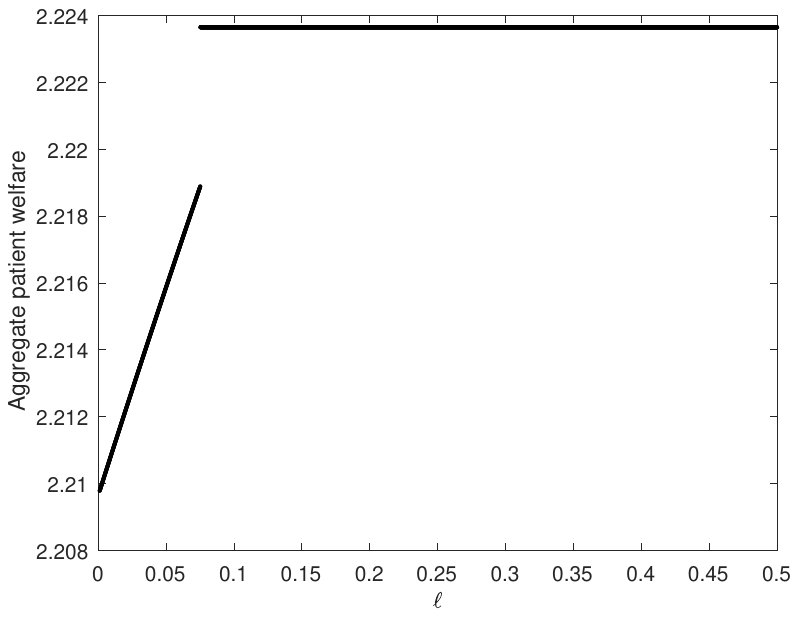}
        \caption{Case in which welfare is higher under an equal-accuracy design.}
        \label{fig:fig2}
    \end{subfigure}
    \caption{Aggregate welfare as a function of liability $\ell$. The upward-sloping segment is welfare under the firm's preferred disparate design; the flat segment is welfare once the firm switches to an equal-accuracy design, which does not vary with $\ell$.}
    \label{fig:welfare_max}
\end{figure}

\subsection{Type-Specific Priors}\label{sec:type-specific-priors}
In this section, we allow physicians to hold type-specific priors about whether treatment $T_1$ is appropriate. This extension addresses the concern that imposing $\alpha_x=\alpha_y$ may understate human-driven disparities in baseline clinical beliefs. %
Specifically, we assume $\alpha_x\sim U(0,\overline{\alpha})$ and $\alpha_y\sim U(\underline{\alpha},1)$, where $\underline{\alpha}<1/2<\overline{\alpha}$. All other assumptions remain as in the main model. In particular, we maintain parameter restrictions under which, at every optimum characterized in this extension (and, under the disparate design, at every liability level up to the design-switch cutoff), demand for each patient type and the firm's profit are strictly positive, and the optimal accuracies are interior in $(1/2,1)$. The numerical configurations reported in this extension satisfy these restrictions.
Throughout this extension, we define
$\tilde f := f/\overline{\alpha}$ and $\tilde{\beta} := \beta\overline{\alpha}/(1-\underline{\alpha})$
for the density-adjusted fee and composition. Because $\beta\in (0,1)$ as in the main model, we assume $\tilde{\beta}<1$ and $\overline{\alpha}+\underline{\alpha}>1$.

We summarize our key findings and relegate our technical details to Section~\ref{sec:OA_priors} of the Online Appendix.
The physician's optimal AI-use rule retains the same threshold form as in the baseline: AI is used only for an interior range of priors, and conditional on use, the physician follows the AI signal. Thus, downstream behavior is structurally unchanged, although the relevant thresholds shift with the group-specific prior distributions. Aggregating over these priors changes equilibrium AI-use volumes, but the firm's design problem remains qualitatively the same. The AI firm still chooses between a disparate design ($\rho_x>\rho_y$) and an equal-accuracy design ($\rho_x=\rho_y$), and there exists a liability cutoff $\widetilde{\ell}$ such that the firm prefers a disparate design for $\ell<\widetilde{\ell}$ and an equal-accuracy design otherwise. Comparative statics with respect to liability and reimbursement therefore continue to follow the baseline logic over the parameter regions we characterize in the Online Appendix: heterogeneous priors change the size of the adoption region rather than the underlying mechanism.

The welfare analysis carries over, with the integration limits adjusted to reflect the new prior supports, though the incidence of the welfare loss shifts. The main qualitative insights remain robust: at low levels, liability can reduce AI use for type-$y$ patients, whereas at higher levels it can increase use through stronger upstream accuracy incentives; and an equal-accuracy mandate can still induce overuse and reduce welfare. In short, allowing $\alpha_x\neq\alpha_y$ preserves the use-distortion mechanism, and an equal-accuracy mandate can again harm both patient groups over a substantial region of the parameter space.

\section{Concluding Remarks}\label{sec:conclusion}

Policymakers increasingly seek to address uneven algorithm performance across patient groups, yet the primary levers available in practice often operate at deployment: hospitals and physicians remain accountable for outcomes even when they rely on clinical decision-support tools. This paper studies the equilibrium consequences of such deployment-facing accountability by linking downstream physician reliance to upstream firm design. In a model with an AI firm and a physician, the Section~1557 clinical algorithm provision is captured as an asymmetric liability exposure that is triggered when reliance on a tool with unequal performance leads to inappropriate treatment for disadvantaged patients. Regulating use can reshape design, and regulating design can reshape use; welfare depends on the joint equilibrium.

Liability intended to protect disadvantaged patients can instead reduce their access to AI. By increasing the expected cost of relying on AI for type-$y$ patients, liability can induce physicians to use AI less for the very group the policy aims to safeguard, even when the same tool would be deployed for type-$x$ patients. The relationship between liability and use is also non-monotone. As liability rises, it can initially suppress reliance on AI for disadvantaged patients, but beyond a threshold it can induce the firm to reallocate investment toward their accuracy, improving performance and expanding adoption. A single instrument can therefore produce underuse at low levels of liability and renewed overuse at high ones, with a corrective range in between, depending on how strongly it feeds back to design incentives.

An equal-accuracy requirement is not an appropriate-use guarantee. Requiring equal accuracy across patient types necessarily reallocates model quality across groups; because improving type-$y$ accuracy is more costly, parity is often achieved largely by reducing type-$x$ accuracy. Equalizing measured performance also relaxes the liability channel that previously discouraged reliance for disadvantaged patients, while reimbursement continues to reward use. The result is a utilization response that can move patients onto (or off) AI precisely where clinical gains are small relative to the fixed cost of use. The mandate can therefore reduce aggregate welfare for both groups over parameter regions we characterize analytically: type-$x$ patients lose from a substantial accuracy decline, whereas type-$y$ patients can be harmed by expanded reliance on a still-imperfect tool.

When deployment incentives are shaped by reimbursement and accountability, standards aimed at algorithm performance need to be paired with instruments that govern use. In our setting, liability standards that encourage firms to invest in disadvantaged-group accuracy need not align physicians' deployment choices with clinical value. Several complementary levers are available: reimbursement rules that attenuate incentives for indiscriminate use, auditing and monitoring requirements aimed at clinically meaningful endpoints, and other accountability mechanisms that discipline overuse. Without them, better measured performance need not translate into better patient welfare.

\label{page:non-healthcare} Although the motivating application is healthcare, the mechanism is broader: whenever professionals remain accountable for decisions informed by algorithms, deployment-facing accountability can generate feedback from use to design and back again. Analogous forces arise when judges rely on risk scores, when lenders and managers use credit and screening systems, and when employers deploy hiring tools. In each case, performance differences across groups can trigger legal or reputational exposure and thereby distort equilibrium reliance. Future work could extend the analysis to richer organizational environments (for example, hospitals with multiple clinicians), alternative contracting and pricing arrangements, and empirical measurement of the predicted non-monotone responses of both adoption and design to liability exposure.

\label{page:futurework}Our analysis also points to several extensions that are especially relevant for AI. First, once an AI system has been trained, its marginal cost of use is conceivably low, whereas obtaining a second opinion from a human expert can entail a nontrivial cost. Allowing decision makers to choose between AI assistance and human peer review would introduce an additional margin of substitution and raise a distinct set of questions that we leave for future research. %
Second, clinical AI systems can be retrained as new data arrive. If liability discourages use for disadvantaged patients, those patients may become underrepresented in future training data, reducing subsequent accuracy and reinforcing the disparity the policy seeks to mitigate. Third, the AI signal may be more accurate than the physician's independent assessment for many patients. When physicians can adjust their own diagnostic effort, greater reliance on AI may reduce independent clinical scrutiny and create a new moral-hazard problem. Future work could also examine institutional arrangements that jointly determine reimbursement, pricing, utilization costs, and liability; allocate responsibility across physicians, hospitals, and developers through shared liability or ex ante subsidies; and link reputational returns to realized performance over time. A full social-welfare analysis is a further direction: treating our health-surplus measure as social welfare would require a common criterion spanning patient surplus, physician utility, firm profit, and deadweight legal and compliance costs.

\newpage

\section*{Funding and Competing Interests}
All authors certify that they have no affiliations with or involvement in any organization or entity
with any financial interest or non-financial interest in the subject matter or materials discussed in
this manuscript. The authors have no funding to report.

\newpage

\section*{Appendix}

\medskip

\setcounter{lemma}{1}
\setcounter{figure}{1}
\setcounter{section}{0}
\setcounter{proposition}{0}
\setcounter{equation}{0}
\setcounter{corollary}{0}

\renewcommand{\thelemma}{A\arabic{lemma}}
\renewcommand{\thefigure}{A\arabic{figure}}
\renewcommand{\thesection}{A\arabic{section}}
\renewcommand{\theproposition}{A\arabic{proposition}}
\renewcommand{\thecorollary}{A\arabic{corollary}}
\renewcommand{\theequation}{A\arabic{equation}}
\renewcommand{\thetable}{A\arabic{table}}

\normalfont

\noindent \textsc{Proof of \cref{lem:AI_decision_imaltruistic_xpatient}}.
Using the physician's expected payoff when not using AI (from \cref{equ:U}) and when using AI (from \cref{equ:Ux}), we compute the difference $U_x-U$ which is given by
{\footnotesize
\begin{align}\label{equ:uti_com_x}
    U_x-U=&
    \begin{cases}
\theta r-c, & \text{if }\alpha\le 1-\rho_x \\
(\rho_x+\alpha-1)\cdot  b-c+\theta r, & \text{if } 1-\rho_x  < \alpha \le 1/2\\
(\rho_x-\alpha)\cdot  b-c+\theta r, & \text{if } 1/2 < \alpha \le \rho_x\\
\theta r-c, & \text{otherwise.}
\end{cases}
\end{align}
}
 
Note that $\theta r-c<0$; therefore, the physician does not use AI for $\alpha\le1-\rho_x$ and for $\alpha>\rho_x$. In the second case of \cref{equ:uti_com_x}, that is, if $1-\rho_x  < \alpha \le 1/2$, the physician does not use AI when  
$\alpha\le \frac{(1-\rho_x)b+c-\theta r}{b}$  and 
uses AI when $\alpha\ge \frac{(1-\rho_x)b+c-\theta r}{b}$. In the third case of \cref{equ:uti_com_x}, that is, if $1/2 < \alpha \le \rho_x$, the physician does not use AI when  
$\alpha>\frac{\rho_x b-(c-\theta r)}{b}$  and 
uses AI when $\alpha\le \frac{\rho_x b-(c-\theta r)}{b}$. Therefore, we have 
\begingroup\footnotesize %
\begin{itemize}
    \item When $c-\theta r<(\rho_x-1/2)  b$, we have $\frac{(1-\rho_x)b+c-\theta r}{b}< 1/2$ and $\frac{\rho_x b-(c-\theta r)}{b}>1/2$. Then,     
    the physician does not use AI if $\alpha\le \frac{(1-\rho_x)b+c-\theta r}{b}$, uses AI if $\frac{(1-\rho_x)b+c-\theta r}{b}<\alpha\le \frac{\rho_x b-(c-\theta r)}{b}$, and does not use AI again if $\alpha> \frac{\rho_x b-(c-\theta r)}{b}$.
    \item When $c-\theta r\ge (\rho_x-1/2) b$, we have $\frac{(1-\rho_x)b+c-\theta r}{b}\ge  1/2$ and $\frac{\rho_x b-(c-\theta r)}{b}\le 1/2$. Then,     
    the physician does not use AI for all patients $\alpha\in [0,1]$.
\end{itemize}
\endgroup

Under the assumption that the physician uses AI for at least some type-$y$ patients (that is, ${d_y^{\mathrm D}}>0$, equivalently, $c -\theta 
 r<(\rho_y-\frac{1}{2})b - (1-\rho_y)\theta  \ell$)
and given  $\rho_x>\rho_y$, we have $c-\theta r<(\rho_x-1/2) b$. Therefore, we derive the physician's AI use for type-$x$ patients in this lemma.

Next, we check whether the physician will follow or reject the AI signal when using AI. In the second and third cases of \cref{equ:uti_com_x}, the physician follows the AI signal regardless of the signal.  These are also the only scenarios in which the physician uses AI. Therefore, the physician follows the AI signal for type-$x$ patients, whenever using AI. \hfill \emph{Q.E.D.}
\vspace{.1in}

\noindent \textsc{Proof of \cref{lem:AI_decision_imaltruistic_ypatient}}.
Using the physician's expected payoff when not using AI (from \cref{equ:U}) and when using AI (from \cref{equ:Uy}), we compute the difference $U_y-U$ which is given by:
{\footnotesize
\begin{align}\label{equ:uti_com_y}
    U_y-U=&
    \begin{cases}
\theta r-c-\alpha(1-\rho_y)\theta\ell, & \text{if }\alpha\le \frac{(1-\rho_y) (b+\theta \ell)}{b+(1-\rho_y)\theta \ell} \\
(\rho_y+\alpha-1)\cdot  b-c+\theta r-(1-\rho_y)\theta \ell, & \text{if } \frac{(1-\rho_y) (b+\theta \ell)}{b+(1-\rho_y)\theta \ell}< \alpha \le 1/2\\
(\rho_y-\alpha)\cdot  b-c+\theta r-(1-\rho_y)\theta \ell, & \text{if } 1/2 < \alpha \le \frac{b \rho_y}{b+(1-\rho_y)\theta \ell}\\
\theta r-c-(1-\alpha)(1-\rho_y)\theta \ell, & \text{otherwise.}
\end{cases}
\end{align}
}
 
Note that $\theta r-c-\alpha(1-\rho_y)\theta\ell<0$; therefore, the physician does not use AI for $\alpha\le \frac{(1-\rho_y) (b+\theta \ell)}{b+(1-\rho_y)\theta \ell}$. In addition, $\theta r-c-(1-\alpha)(1-\rho_y)\theta \ell<0$, which implies the physician does not use AI for $\alpha>\frac{b \rho_y}{b+(1-\rho_y)\theta \ell}$.

In the second case of \cref{equ:uti_com_y}, that is, if $\frac{(1-\rho_y) (b+\theta \ell)}{b+(1-\rho_y)\theta \ell}< \alpha \le 1/2$, the physician does not use AI when  
$\alpha\le \frac{(1-\rho_y) (b+\theta \ell) +c-\theta r}{b}$  and 
uses AI when $\alpha\ge \frac{(1-\rho_y) (b+\theta \ell) +c-\theta r}{b}$. In the third case of \cref{equ:uti_com_y}, that is, if $1/2 < \alpha \le \frac{b \rho_y}{b+(1-\rho_y)\theta \ell}$, the physician does not use AI when  
$\alpha>\frac{\rho_y b-(1-\rho_y)\theta \ell -c+\theta r}{b}$  and 
uses AI when $\alpha\le \frac{\rho_y b-(1-\rho_y)\theta \ell -c+\theta r}{b}$. Therefore, we have 
\begingroup\footnotesize %
\begin{itemize}
    \item When $c -\theta 
 r<(\rho_y-\frac{1}{2})b - (1-\rho_y)\theta  \ell$, we have $\frac{(1-\rho_y) (b+\theta \ell) +c-\theta r}{b}< 1/2<\frac{\rho_y b-(1-\rho_y)\theta \ell -c+\theta r}{b}$. Then,     
    the physician does not use AI if $\alpha\le \frac{(1-\rho_y) (b+\theta \ell) +c-\theta r}{b}$, uses AI if $\frac{(1-\rho_y) (b+\theta \ell) +c-\theta r}{b}<\alpha<\frac{\rho_y b-(1-\rho_y)\theta \ell -c+\theta r}{b}$, and does not use AI again if $\alpha> \frac{\rho_y b-(1-\rho_y)\theta \ell -c+\theta r}{b}$.
    \item When $c -\theta 
 r>(\rho_y-\frac{1}{2})b - (1-\rho_y)\theta  \ell$, we have $\frac{(1-\rho_y) (b+\theta \ell) +c-\theta r}{b}>1/2$ and $\frac{\rho_y b-(1-\rho_y)\theta \ell -c+\theta r}{b} <1/2$. Then,     
    the physician does not use AI for all patients $\alpha\in [0,1]$.
\end{itemize}
\endgroup

Under the assumption that the physician uses AI for at least some type-$y$ patients (that is, ${d_y^{\mathrm D}}>0$, equivalently, $c -\theta 
 r<(\rho_y-\frac{1}{2})b - (1-\rho_y)\theta  \ell$), we derive the physician's AI use for type-$y$ patients in this lemma.

Next, we check whether the physician will follow or reject the AI signal when using AI. In the second and third cases of \cref{equ:uti_com_y}, the physician follows the AI signal regardless of the signal.  These are also the only scenarios in which the physician might choose to use AI. Therefore, the physician follows the AI signal for type-$y$ patients whenever using AI. \hfill \emph{Q.E.D.}
\vspace{.1in}

\noindent \textsc{Proof of \cref{prop:developer_rho*}}.
(a) Suppose the AI firm supplies a disparate algorithm. It sets $\rho_x$ and $\rho_y$ to maximize the profit in \cref{equ:developer}, where $d_x^{\mathrm D}$ and $d_y^{\mathrm D}$ are given in \cref{equ:AI demand}. The objective is strictly concave in $\rho_x$ and $\rho_y$, so the first-order conditions are sufficient. Setting $\partial \pi^{\mathrm D}/\partial \rho_x=0$ and $\partial \pi^{\mathrm D}/\partial \rho_y=0$ yields
$\rho_x^*=\frac{1}{2}+\frac{f}{\kappa_x}$ and $\rho_y^*=\frac{1}{2}+\frac{\beta f(b+\theta \ell)}{b\kappa_y}$. The interior conditions $\rho_x^*<1$ and $\rho_y^*<1$ are equivalent to $f<\kappa_x/2$ and $f<\frac{b\kappa_y}{2\beta(b+\theta \ell)}$, respectively.

Next, to ensure that the firm earns a positive profit from serving type-$y$ patients at $\rho_y^*$, that is, $\pi_y(\rho_y^*)>0$, it suffices that
$f>\frac{b\kappa_y(2c-2\theta r+\theta \ell)}{\beta(b+\theta \ell)^2}$. This condition also guarantees that the physician uses AI for at least some type-$y$ patients under the induced accuracy choice (equivalently, $d_y^{\mathrm D}|_{\rho_y=\rho_y^*}>0$).  Otherwise, the optimal choice sets $\rho_y$ arbitrarily close to $\frac{1}{2}$.

We first note that no strictly reverse-disparate design ($\rho_y>\rho_x$) can be optimal. In that region the disparity-triggered liability channel is inactive, so the objective is the no-liability profit, which is strictly concave with unconstrained maximizers $\rho_x^{0}=\tfrac12+\tfrac{f}{\kappa_x}$ and $\rho_y^{0}=\tfrac12+\tfrac{\beta f}{\kappa_y}$. Because $\kappa_y>\kappa_x$ and $\beta\le1$, we have $\tfrac{f}{\kappa_x}>\tfrac{\beta f}{\kappa_y}$, so $\rho_x^{0}>\rho_y^{0}$ and the unconstrained maximizer lies outside this region. By strict concavity the constrained maximizer lies on the boundary $\rho_x=\rho_y$, which is the equal-accuracy design. It therefore suffices to compare the disparate and equal-accuracy designs. We then show that whenever a disparate design is optimal, it must satisfy $\rho_x^*>\rho_y^*$. If $\rho_x^*=\rho_y^*$ under the disparate-design best response, then the firm can instead supply the same accuracy level as an equal-accuracy design, which eliminates the liability channel and weakly increases physician demand; hence $\pi^{\mathrm D}(\rho_x^*,\rho_y^*)<\pi^{\mathrm E}(\rho_x^*)\le \max_{\rho}\pi^{\mathrm E}(\rho)=\pi^{\mathrm E}(\rho^*)$. This contradicts optimality of a disparate design.   To preclude the reversal $\rho_x^* < \rho_y^*$, we invoke a continuity argument. At the boundary $\ell = 0$, the optimal accuracies yield a positive gap, $\rho_x^* - \rho_y^* = f(1/\kappa_x - \beta/\kappa_y) > 0$. Because $\rho_x^* - \rho_y^*$ is continuous and strictly decreasing in $\ell$, the gap cannot change sign and become negative without passing through zero. Since exact equality $\rho_x^* = \rho_y^*$ is strictly ruled out by our previous contradiction argument, it follows by the Intermediate Value Theorem that $\rho_x^* > \rho_y^*$ must hold strictly throughout the entire feasible domain.  Therefore, if the disparate design is optimal, we must have $\rho_x^*>\rho_y^*$.

(b) Now suppose the firm supplies an equal-accuracy algorithm, so $\rho_x=\rho_y\equiv \rho$. Maximizing \cref{equ:optimal_rho_fair} and setting $\partial \pi^{\mathrm E}(\rho)/\partial \rho=0$ yields
$\rho^*=\frac{1}{2}+\frac{(1+\beta)f}{\kappa_x+\kappa_y}$. The interior condition $\rho^*<1$ is equivalent to $f<\frac{\kappa_x+\kappa_y}{2(1+\beta)}$.  
Note that $\frac{\kappa_x+\kappa_y}{2(1+\beta)}>\frac{\kappa_x}{2}$ because $\kappa_y>\beta\kappa_x$; hence the standing condition $f<\kappa_x/2$ (which ensures $\rho_x^*<1$) already implies $f<\frac{\kappa_x+\kappa_y}{2(1+\beta)}$, and the interior constraint for the equal-accuracy design can be omitted.

Next, we show that there exists a cutoff $\widetilde{\ell}$  such that a disparate design is optimal for $\ell<\widetilde{\ell}$ and an equal-accuracy design is optimal for $\ell\ge \widetilde{\ell}$. As $\ell\to 0$, the disparate-design profit converges to the equal-accuracy objective evaluated at potentially different accuracies across types, that is, $\pi^{\mathrm D}(\rho_x,\rho_y)|_{\ell\to 0}\to \pi^{\mathrm E}(\rho_x,\rho_y)$, while the equal-accuracy design restricts the firm to $\rho_x=\rho_y$. Because the feasible set under a disparate design weakly contains that under equal accuracy, we have $\pi^{\mathrm D}(\rho_x^*,\rho_y^*)|_{\ell\to 0}\ge \pi^{\mathrm E}(\rho^*)$, with strict inequality whenever the equality constraint $\rho_x=\rho_y$ binds.

Next, under a disparate design, the firm's optimal profit is decreasing in $\ell$. The type-$x$ component is independent of $\ell$, so it suffices to study the type-$y$ component. By the envelope theorem,
\[
\frac{d}{d\ell}\pi^{\mathrm D}(\rho_x^*,\rho_y^*)=\left.\frac{\partial \pi^{\mathrm D}(\rho_x,\rho_y)}{\partial \ell}\right|_{\rho_x=\rho_x^*,\,\rho_y=\rho_y^*}
=-\frac{2\theta f\beta}{b}\,(1-\rho_y^*)<0,
\]
where the negative sign strictly holds in the interior region in which $\rho_y^* < 1$. By contrast, $\pi^{\mathrm E}(\rho^*)$ does not depend on $\ell$. Because $\pi^{\mathrm D}(\rho_x^*,\rho_y^*)-\pi^{\mathrm E}(\rho^*)$ is strictly decreasing in $\ell$ under this interior condition, there is at most one cutoff $\widetilde{\ell}$ at which $\pi^{\mathrm D}(\rho_x^*,\rho_y^*)=\pi^{\mathrm E}(\rho^*)$, 
and the profit ranking switches at that point. 
Outside this interior region (where $\rho_y^* \geq 1$), although $\pi^{\mathrm D}(\rho_x^*,\rho_y^*)-\pi^{\mathrm E}(\rho^*)$ becomes a convex parabola in $\ell$ that could theoretically admit a second crossing, such a region is structurally infeasible and can be disregarded, thereby establishing the unique design cutoff.

We conclude the proof by establishing the existence of the cutoff $\widetilde{\ell}$ within the feasible domain. A sufficient condition for $\widetilde{\ell}$ to be interior is $\pi^{\mathrm E}(\rho^*)> \pi_x(\rho^*_x)$, which simplifies to $\beta(\beta+2)\kappa_x > \kappa_y$ and 
$f > \frac{2\beta(c-\theta r)\kappa_x(\kappa_x+\kappa_y)}{b(\beta(\beta+2)\kappa_x-\kappa_y)}$.
This parameter restriction ensures that the profit from type-$y$ patients remains strictly positive ($\pi_y(\rho^*_y) > 0$) at the point of indifference where $\pi^{\mathrm D}(\rho_x^*,\rho_y^*)=\pi^{\mathrm E}(\rho^*)$ (i.e., at $\ell = \widetilde{\ell}$). Consequently, both $\pi_y(\rho^*_y) > 0$ and $\rho^*_y < 1$ hold for all $\ell < \widetilde{\ell}$, where the latter inequality follows from our earlier result that $\rho^*_y<\rho^*_x <1 $ whenever a disparate design strictly dominates. Conversely, if this condition is violated, an interior cutoff $\widetilde{\ell}$ fails to exist. In that scenario, an equal-accuracy design never becomes optimal as $\ell$ increases; instead, the firm continuously deploys a disparate algorithm while completely abandoning the type-$y$ market to serve type-$x$ patients exclusively. \hfill \emph{Q.E.D.}

\vspace{.1in}

\noindent \textsc{Proof of \cref{prop:disparity}}.
Under the disparate design, AI use for type-$y$ patients is given by \cref{equ:AI demand} as $d_y^{\mathrm D}=\beta\cdot \frac{(2\rho_y^*-1)b-2(1-\rho_y^*)\theta \ell-2c+2\theta r}{b}$, where $\rho_y^*=\frac{1}{2}+\frac{\beta f(b+\theta\ell)}{b\kappa_y}$ from \cref{prop:developer_rho*}. Note $d_y^{\mathrm D}$ is convex in $\ell$, implying that it has at most one turning point. Taking the first-order derivative of $d_y^{\mathrm D}$ with respect to $\ell$ yields $\frac{\partial d_y^{\mathrm D}}{\partial \ell}>0$ if and only if $\ell>\bar\ell:=\frac{b(\kappa_y-4\beta f)}{4\theta\beta f}$. We require $\bar \ell>0$ to make sure that $d_y^{\mathrm D}$ has one unique turning point at $\ell=\bar \ell$, which gives us  $f<\frac{\kappa_y}{4\beta}$. 
Thus, holding the AI firm's design fixed at the disparate regime, liability raises type-$y$ AI use when $\ell$ exceeds $\bar\ell$ and lowers it when $\ell<\bar\ell$.

To establish the claimed pattern around $\bar\ell$ in equilibrium, note that the parameter restrictions in \cref{prop:developer_rho*} imply: (i) $\frac{b \kappa_y (2c-2\theta r+\theta \ell)}{\beta (b+\theta \ell)^2}<f<\min\{\frac{\kappa_x}{2}, \frac{b\kappa_y}{2\beta(b+\theta\ell)}\}$, (ii) the AI firm strictly prefers the disparate design at $\ell=\bar\ell$ (that is, $\pi^{\mathrm D}(\rho_x^*,\rho_y^*)>\pi^{\mathrm E}(\rho^*)$ at $\ell=\bar\ell$),  
  and  (iii) $\beta(\beta+2)\kappa_x > \kappa_y$ and  $f>\frac{2\beta(c-\theta r)\kappa_x(\kappa_x+\kappa_y)}{b(\beta(\beta+2)\kappa_x-\kappa_y)}$.

We first establish conditions surrounding the restriction (i). We can substantiate that $f<\frac{b\kappa_y}{2\beta(b+\theta\ell)}|_{\ell=\bar\ell}$, which means $\rho_y^*<1$ at $\ell=\bar\ell$ (by \cref{prop:developer_rho*}), which also means that $\rho_y^*<1$ must hold for  $\ell<\bar\ell$ (as $\rho_y^*$ is increasing in $\ell$).  
Furthermore, 
given $f > \frac{b\kappa_y(2c-2\theta r+\theta\ell)}{\beta(b+\theta\ell)^2}$ is equivalent to $\pi_y(\rho_y^*) > 0$, we can omit the consideration of  $f > \frac{b\kappa_y(2c-2\theta r+\theta\ell)}{\beta(b+\theta\ell)^2}|_{\ell=\bar\ell}$. This is because  $\pi_y(\rho_y^*)|_{\ell=\bar\ell}=[\pi^{\mathrm D}(\rho_x^*,\rho_y^*)-\pi^{\mathrm E}(\rho^*)]|_{\ell=\bar\ell}+(\pi^{\mathrm E}(\rho^*)-\pi_x(\rho_x^*))>0$, where the inequality holds because the first term is greater than 0 (we will guarantee this by discussing the restriction (ii) in what follows) and  the second term is also greater than 0 by the restriction (iii) above. 
Moreover, $\frac{\partial \pi_y(\rho_y^*)}{\partial \ell}=\frac{2\beta f\theta}{b}\left(\rho_y^*-1\right)<0$ whenever $\rho_y^*<1$, which holds on $[0,\bar\ell]$ as just established. Hence $\pi_y(\rho_y^*)|_{\ell}>\pi_y(\rho_y^*)|_{\ell=\bar\ell}>0$ for every $\ell<\bar\ell$, so the participation constraint is slack throughout. Therefore, for the restriction (i), the only constraint is $f<\frac{\kappa_x}{2}$.

We then establish conditions surrounding the restriction (ii). We obtain $[\pi^{\mathrm D}(\rho_x^*,\rho_y^*)-\pi^{\mathrm E}(\rho^*)]|_{\ell=\bar\ell}=A \cdot f^2 +\beta  f-\frac{3 \kappa_y}{16}$, where $A=\frac{\kappa_y-\beta (\beta+2) \kappa_x}{\kappa_x(\kappa_x+\kappa_y)}$. 
Letting $[\pi^{\mathrm D}(\rho_x^*,\rho_y^*)-\pi^{\mathrm E}(\rho^*)]|_{\ell=\bar\ell}=0$ yields $f=f_a$ or $f=f_b$,  where $f_a:=\frac{2 \beta  \kappa_x (\kappa_x+\kappa_y)}{4 \beta  (\beta +2) \kappa_x-4 \kappa_y}-\frac{\sqrt{\kappa_x (\kappa_x+\kappa_y) \left(\beta ^2 \kappa_x (4 \kappa_x+\kappa_y)-6 \beta  \kappa_x
   \kappa_y+3 \kappa_y^2\right)}}{4 \beta  (\beta +2) \kappa_x-4 \kappa_y}$ and $f_b:=\frac{2 \beta  \kappa_x (\kappa_x+\kappa_y)}{4 \beta  (\beta +2) \kappa_x-4 \kappa_y}+\frac{\sqrt{\kappa_x (\kappa_x+\kappa_y) \left(\beta ^2 \kappa_x (4 \kappa_x+\kappa_y)-6 \beta  \kappa_x
   \kappa_y+3 \kappa_y^2\right)}}{4 \beta  (\beta +2)\kappa_x -4 \kappa_y}$. 
   According to the restriction (iii), we have $A<0$ and we can further obtain that $f_a<f_b$.  Then $[\pi^{\mathrm D}(\rho_x^*,\rho_y^*)-\pi^{\mathrm E}(\rho^*)]|_{\ell=\bar\ell}>0$ is equivalent to $f_a<f<f_b$. We can verify that $f_b>\min\{\frac{\kappa_x}{2}, \frac{\kappa_y}{4\beta}\}$. Thus, to ensure $[\pi^{\mathrm D}(\rho_x^*,\rho_y^*)-\pi^{\mathrm E}(\rho^*)]|_{\ell=\bar\ell}>0$, we only need to impose an additional condition $f>f_a$. 
   Therefore, under the aforementioned conditions,  we can obtain $\pi^{\mathrm D}(\rho_x^*,\rho_y^*)|_{\ell<\bar \ell}>\pi^{\mathrm D}(\rho_x^*,\rho_y^*)|_{\ell=\bar \ell}>\pi^{\mathrm E}(\rho^*)$, which means that the AI firm strictly prefers the disparate design if $\ell<\bar \ell$.

Define $f_l:= \max \left\{ 
 \frac{2 \beta (c -  \theta r) \kappa_x (\kappa_x+\kappa_y)} {b (\beta (\beta+2) \kappa_x-\kappa_y)}, \;
f_a\right\}$. So far, we have proved that when $\beta(\beta+2)\kappa_x > \kappa_y$ and  $f_l<f< \min\{\frac{\kappa_x}{2},   \frac{\kappa_y}{4\beta}\}$, AI use for disadvantaged patients is decreasing in $\ell$ for $\ell\le \bar \ell$.  We next prove that AI use for disadvantaged patients is (weakly) increasing in $\ell$ for $\ell>\bar \ell$.

From the preceding analysis, when $\ell$ increases marginally beyond $\bar{\ell}$, the interior conditions $\pi_y(\rho_y^*) > 0$, $\rho_y^* < 1$, and $\pi^{\mathrm D}(\rho_x^*,\rho_y^*)>\pi^{\mathrm E}(\rho^*)$  continue to hold. By continuity, there exists a neighborhood to the right of $\bar{\ell}$ where AI use among type-$y$ patients is strictly increasing in $\ell$. As $\ell$ increases further, one of these performance or participation constraints must eventually bind. 
From the proof of \cref{prop:developer_rho*}, $\pi^{\mathrm D}(\rho_x^*,\rho_y^*)>\pi^{\mathrm E}(\rho^*)$ implies $\rho_x^*>\rho_y^*$. Given the interior requirement $\rho_x^* < 1$, it follows immediately that $\rho_y^* < 1$ is satisfied whenever the disparate design remains strictly optimal. Hence, either the regime-switching condition 
$\pi^{\mathrm D}(\rho_x^*,\rho_y^*)>\pi^{\mathrm E}(\rho^*)$ or  the market-viability condition  
$\pi_y(\rho_y^*)>0$  must be the first to bind. Because parameter restriction (iii) ensures that $\pi^{\mathrm E}(\rho^*)>\pi_x(\rho_x^*)$, the profit under the equal-accuracy algorithm strictly dominates the profit from serving type-$x$ patients alone.  
Consequently, $\pi^{\mathrm D}(\rho_x^*,\rho_y^*)>\pi^{\mathrm E}(\rho^*)$ must be the first condition to fail. At this crossing point, the equal-accuracy algorithm becomes optimal, and AI use among type-$y$ patients remains constant thereafter. Because AI use may exhibit a discrete upward jump at this regime switch (by the same argument as in the proof of part~(a) of \cref{prop:mandate_fairness_worse}), AI use among type-$y$ patients is weakly increasing in $\ell$ for all $\ell > \bar{\ell}$. %
The proposition then follows\footnote{A numerical instance illustrates the existence of this region. When $b=2$, $\theta=1$, $c=0.60$, $r=0.5$, $\kappa_x=1.3$, $\kappa_y=1.705$, $f=0.6$, and $\beta=0.68$, the physician uses AI for fewer type-$y$ patients if $0<\ell<0.089$ and for more type-$y$ patients if $0.089<\ell<0.375$. The equal-accuracy design becomes optimal if $\ell\ge 0.375$.}. \hfill \emph{Q.E.D.}

\vspace{.1in}

\noindent \textsc{Proof of \cref{lem:AI_decision_altruistic}}.
Using the altruistic physician's expected payoff when not using AI (from \cref{equ:U}) and when using AI (from \cref{eq:Ual}), we compute the difference $U^{\text{al}}-U$, given by
{\footnotesize
\begin{align}\label{equ:al_uti_com}
    U^{\text{al}}-U=&
    \begin{cases}
-c, & \text{if }\alpha\le 1-\rho_t \\
(\rho_t+\alpha-1)\cdot  b-c, & \text{if } 1-\rho_t  < \alpha \le 1/2\\
(\rho_t-\alpha)\cdot  b-c, & \text{if } 1/2 < \alpha \le \rho_t\\
-c, & \text{otherwise.}
\end{cases}
\end{align}
}

It is straightforward that the altruistic physician does not use AI in the first and fourth cases, that is, when $\alpha\le 1-\rho_t$ or $\alpha>\rho_t$. In the second case of \cref{equ:al_uti_com}, the physician does not use AI when $\alpha\le \frac{(1-\rho_t)b+c}{b}$ and uses AI otherwise. In the third case of \cref{equ:al_uti_com}, the physician does not use AI when $\alpha>\frac{\rho_t b-c}{b}$ and uses AI otherwise. Therefore, we have
\begin{itemize}
    \item When $c<(\rho_t-1/2) \cdot b$, we have $\frac{(1-\rho_t)b+c}{b}< 1/2$ and $\frac{\rho_t b-c}{b}>1/2$. Then,     
    the physician does not use AI if $\alpha\le \frac{(1-\rho_t)b+c}{b}$, uses AI if $\frac{(1-\rho_t)b+c}{b}<\alpha\le \frac{\rho_t b-c}{b}$, and does not use AI again if $\alpha> \frac{\rho_t b-c}{b}$.
    \item When $c\ge (\rho_t-1/2) \cdot b$, we have $\frac{(1-\rho_t)b+c}{b}\ge  1/2$ and $\frac{\rho_t b-c}{b}\le 1/2$. Then,     
    the physician does not use AI for all patients $\alpha\in (0,1)$.
\end{itemize}

Next, we check whether the physician will follow or reject the AI signal when using AI. In the second and third cases of \cref{equ:al_uti_com}, the physician follows the AI signal regardless of the signal.  These are also the only scenarios in which the physician might choose to use AI. Therefore, the physician follows the AI signal for type-$t$ patients whenever using AI. \hfill \emph{Q.E.D.}
\vspace{.1in}

\noindent \textsc{Proof of \cref{prop:fairness_appropriateUse}}.
{(a)} Under an equal-accuracy design, the disparity-triggered liability channel does not apply. An impurely altruistic physician therefore bases AI use for both patient types on the decision rule in \cref{lem:AI_decision_imaltruistic_xpatient}. Comparing this rule with the corresponding benchmark for an altruistic physician in \cref{lem:AI_decision_altruistic} implies that the impurely altruistic physician uses AI for a (weakly) larger set of patients, and hence overuses AI.

{(b)} Under a disparate design, comparing \cref{lem:AI_decision_imaltruistic_ypatient} with \cref{lem:AI_decision_altruistic} shows that the impurely altruistic physician uses AI appropriately for type-$y$ patients if and only if the reimbursement exactly offsets the expected liability cost on the margin, that is, $(1-\rho_y^*)\ell=r$, which is equivalent to $\frac{1}{2}-\frac{\beta f(b+\theta\ell)}{b\kappa_y}=\frac{r}{\ell}$. By part (a), the physician overuses AI for type-$x$ patients.\footnote{A numerical example is given by  $b=1.8$, $\theta=1$, $r=0.1$, $\kappa_x=2$, $\kappa_y=3$, $f=0.8$, $\beta=0.6$, $c=0.11$, and $\ell=0.32$, under which AI is used appropriately for type-$y$ patients.}

{(c)} From the argument in part (b), under the disparate design the physician overuses AI for type-$y$ patients when $\Big(\frac{1}{2}-\frac{\beta f(b+\theta\ell)}{b\kappa_y}\Big)\ell<r$ and underuses AI when $\Big(\frac{1}{2}-\frac{\beta f(b+\theta\ell)}{b\kappa_y}\Big)\ell>r$. The left-hand side is concave in $\ell$, so the equality $\Big(\frac{1}{2}-\frac{\beta f(b+\theta\ell)}{b\kappa_y}\Big)\ell=r$ has (at most) two solutions; let $\ell_s$ denote the smaller root and $\ell_h$ denote the larger root,
\begin{align*}
    \ell_s=\frac{b\kappa_y\left(1-\sqrt{\frac{b(\kappa_y-2\beta f)^2-16\beta f\kappa_y \theta r}{b\kappa_y^2}}\right)}{4\beta f\theta}-\frac{b}{2\theta}
    ~\text{and } ~
     \ell_h=\frac{b\kappa_y\left(1+\sqrt{\frac{b(\kappa_y-2\beta f)^2-16\beta f\kappa_y \theta r}{b\kappa_y^2}}\right)}{4\beta f\theta}-\frac{b}{2\theta}.
\end{align*}
In what follows, we prove that the physician first overuses AI when $\ell<\ell_s$ and then underuses AI when $\ell_s<\ell<\ell_m$ and overuses AI  again when $\ell>\ell_m$, where $\ell_m$ is a threshold.

We start by establishing the claimed pattern around $\ell_s$ in equilibrium.  Letting $\ell_s>0$ yields $f < \frac{ b \kappa_y + 4 \kappa_y \theta r}{2 b \beta} - 
  \frac{\kappa_y \sqrt{2b   \theta r+ 4  \theta^2 r^2}}{b \beta}$.  In addition, note that the parameter restrictions in \cref{prop:developer_rho*} imply: (i) $\frac{b \kappa_y (2c-2\theta r+\theta \ell)}{\beta (b+\theta \ell)^2}<f<\min\{\frac{\kappa_x}{2}, \frac{b\kappa_y}{2\beta(b+\theta\ell)}\}$,  (ii) the AI firm strictly prefers the disparate design at $\ell=\ell_s$ (that is, $\pi^{\mathrm D}(\rho_x^*,\rho_y^*)>\pi^{\mathrm E}(\rho^*)$ at $\ell=\ell_s$), and  (iii) $\beta(\beta+2)\kappa_x > \kappa_y$ and  $f>\frac{2\beta(c-\theta r)\kappa_x(\kappa_x+\kappa_y)}{b(\beta(\beta+2)\kappa_x-\kappa_y)}$.

We first establish conditions surrounding the restriction (i). We can verify that $f<\frac{b\kappa_y}{2\beta(b+\theta\ell)}|_{\ell=\ell_s}$, which is equivalent to  $\rho_y^*|_{\ell=\ell_s}=1-r/\ell_s<1$; this also means that $\rho_y^*<1$ must hold for  $\ell<\ell_s$ (as $\rho_y^*$ is increasing in $\ell$). In addition, when $\ell=\ell_s$, considering $f>\frac{b \kappa_y (2c-2\theta r+\theta \ell)}{\beta (b+\theta \ell)^2}$ or equivalently $\pi_y(\rho_y^*)>0$ (by \cref{prop:developer_rho*}) yields 
\begin{align}\label{eq:pos_profit_y_ls}
\frac{4 \beta^2 f^2}{\kappa_y} + 4 \beta f + (\kappa_y - 2 \beta f) \sqrt{\frac{b(\kappa_y-2\beta f)^2-16\beta f\kappa_y \theta r}{b\kappa_y^2}}>\kappa_y + \frac{8 \beta f (2 c - \theta r)}{b};
\end{align}
therefore, under the aforementioned condition, we can derive $\pi_y(\rho_y^*)|_{\ell<\ell_s}>\pi_y(\rho_y^*)|_{\ell=\ell_s}>0$, which means that the firm earns a positive profit from serving type-$y$ patients at $\rho_y^*$ if $\ell<\ell_s$. 

We then establish conditions surrounding the restriction (ii). Considering  $[\pi^{\mathrm D}(\rho_x^*,\rho_y^*)-\pi^{\mathrm E}(\rho^*)]|_{\ell=\ell_s}$ yields 
{\small\begin{align}\label{eq:optimal_bias_ls}
\frac{8 f^2}{\kappa_x} + \frac{4 \beta f (\beta f+\kappa_y) }{\kappa_y} + (\kappa_y - 2 \beta f) \sqrt{\frac{b(\kappa_y-2\beta f)^2-16\beta f\kappa_y \theta r}{b\kappa_y^2}}>\kappa_y + \frac{8 \beta f \theta r}{b} + \frac{8 (1+\beta)^2 f^2 }{\kappa_x+\kappa_y}.
\end{align}}
Therefore, under the aforementioned condition,  we can obtain
\[
\pi^{\mathrm D}(\rho_x^*,\rho_y^*)|_{\ell<\ell_s}>\pi^{\mathrm D}(\rho_x^*,\rho_y^*)|_{\ell=\ell_s}>\pi^{\mathrm E}(\rho^*),
\]
which means that the AI firm strictly prefers the disparate design if $\ell<\ell_s$.

So far, we have the following conditions: \eqref{eq:pos_profit_y_ls}, \eqref{eq:optimal_bias_ls},  and  $f < \min\{\frac{\kappa_x}{2},  \frac{ b \kappa_y + 4 \kappa_y \theta r}{2 b \beta} - 
  \frac{\kappa_y \sqrt{2b   \theta r+ 4  \theta^2 r^2}}{b \beta}\}$, under which the physician overuses AI when $\ell<\ell_s$.   We next prove that the physician underuses AI when $\ell_s<\ell<\ell_m$ and overuses AI again when $\ell>\ell_m$.

From the above proof, when $\ell$ increases marginally from $\ell_s$, the conditions $\pi_y(\rho_y^*)>0$, $\rho_y^*<1$, and $\pi^{\mathrm D}(\rho_x^*,\rho_y^*)>\pi^{\mathrm E}(\rho^*)$ continue to hold. By continuity, there therefore exists a neighborhood to the right of $\ell_s$ in which the physician underuses AI as $\ell$ increases from $\ell_s$. As $\ell$ increases further, one of the preceding conditions must eventually fail.  By an argument similar to that in the proof of \cref{prop:disparity}, the restriction (iii) (i.e., $\beta (\beta+2) \kappa_x > \kappa_y$ and $f >\frac{2 \beta (c -  \theta r) \kappa_x (\kappa_x+\kappa_y)} {b (\beta (\beta+2) \kappa_x-\kappa_y)}$) ensures that   $\pi^{\mathrm D}(\rho_x^*,\rho_y^*)>\pi^{\mathrm E}(\rho^*)$ is the first and only condition that can be violated as  $\ell$ increases. Once this condition is violated, Part (a) implies that physician overuse of AI reappears. Depending on the relationship between the violation point and $\ell_h$, two cases arise:
\begin{itemize}
    \item  Suppose $\pi^{\mathrm D}(\rho_x^*,\rho_y^*)>\pi^{\mathrm E}(\rho^*)$  is violated before $\ell=\ell_h$. Then $\ell_m$ coincides with the violation point. In this case, the physician overuses AI for $\ell<\ell_s$, underuses AI for $\ell_s<\ell<\ell_m$, and overuses AI again for $\ell>\ell_m$. The first two regions arise under the disparate algorithm, whereas the final overuse region is induced by the equal-accuracy algorithm.
    \item  Suppose $\pi^{\mathrm D}(\rho_x^*,\rho_y^*)>\pi^{\mathrm E}(\rho^*)$  is violated after $\ell=\ell_h$. Then $\ell_m=\ell_h$. In this case, the physician overuses AI for $\ell<\ell_s$, underuses AI for $\ell_s<\ell<\ell_m$, and overuses AI again for $\ell>\ell_m$. The re-emergence of overuse initially occurs under the disparate algorithm and subsequently persists under the equal-accuracy algorithm after the algorithmic switch.
\end{itemize}

Combining the above two cases establishes the proposition under the conditions $\beta (\beta+2) \kappa_x > \kappa_y$,  \eqref{eq:pos_profit_y_ls}, \eqref{eq:optimal_bias_ls}, and  $\frac{2 \beta (c -  \theta r) \kappa_x (\kappa_x+\kappa_y)} {b (\beta (\beta+2) \kappa_x-\kappa_y)} <f < \min\{\frac{\kappa_x}{2},   \frac{ b \kappa_y + 4 \kappa_y \theta r}{2 b \beta} - 
  \frac{\kappa_y \sqrt{2b   \theta r+ 4  \theta^2 r^2}}{b \beta}\}$. This completes the proof\footnote{For example, when $b=1.8$, $\theta=1$, $r=0.1$, $\kappa_x=2$, $\kappa_y=3$, $f=0.8$, $\beta=0.6$, and $c=0.11$, the physician overuses AI when $\ell<0.32$, underuses AI when $0.32<\ell<0.40$, and overuses AI again when $\ell\ge 0.40$, where the equal-accuracy design becomes optimal.}. \hfill \emph{Q.E.D.}
     
\vspace{.1in}

\noindent \textsc{Proof of \cref{prop:mandate_fairness_worse}}.
Using expected utilities for different patient types in \cref{table:payoff_AI} and AI accuracy expressions in \cref{prop:developer_rho*}, we compute the expected patient welfare. By \cref{lem:AI_decision_imaltruistic_xpatient,lem:AI_decision_imaltruistic_ypatient}, when the physician relies on an accuracy $\rho$ without the disparity-liability channel, she uses AI exactly on $(\alpha^{-}(\rho),\alpha^{+}(\rho))$, where
\[
\alpha^{-}(\rho):=\frac{(1-\rho) b +c-\theta r}{b},\qquad \alpha^{+}(\rho):=\frac{\rho b -c+\theta r}{b};
\]
under a disparate design she uses AI for type-$y$ patients exactly on $(\alpha_y^{-},\alpha_y^{+})$, where
\[
\alpha_y^{-}:=\frac{(1-\rho_y^*) (b+\theta \ell) +c-\theta r}{b},\qquad \alpha_y^{+}:=\frac{\rho_y^* b-(1-\rho_y^*)\theta \ell -c+\theta r}{b}.
\]
Accordingly,
\begin{align}\label{equ:wel}
W_x(\rho)&=\int_{0}^{\alpha^{-}(\rho)} \Big((1-\alpha)b\Big) d\alpha
+\int_{\alpha^{-}(\rho)}^{\alpha^{+}(\rho)} \Big(\rho b-c\Big) d\alpha
+\int_{\alpha^{+}(\rho)}^1 \Big(\alpha b\Big) d\alpha, \notag\\
W_y(\rho_y^*)&=\beta\cdot \Big[\int_{0}^{\alpha_y^{-}} \Big((1-\alpha)b\Big) d\alpha
+\int_{\alpha_y^{-}}^{\alpha_y^{+}} \Big(\rho_y^*b-c\Big) d\alpha
+\int_{\alpha_y^{+}}^1 \Big(\alpha b\Big) d\alpha\Big].
\end{align}
The four objects we compare are $W_x(\rho_x^*)$, $W_y(\rho_y^*)$, $W_x(\rho^m)$, and, because the mandate deactivates the disparity-liability channel for type-$y$ patients, $W_y(\rho^m)=\beta\, W_x(\rho^m)$,
where $\rho_x^*=\frac{1}{2} +  \frac{f}{\kappa_x}$, $\rho_y^*=\frac{1}{2} + \frac{\beta f (b + \theta \ell)}{b \kappa_y}$ and $\rho^m=\frac{1}{2} + \frac{(1 + \beta) f}{\kappa_x+\kappa_y}$.  
In each of the above welfare expressions, the first and third terms capture the expected welfare of patients for whom the physician does not use AI, whereas the second term represents the expected welfare of patients for whom the physician uses AI.

(a) Comparing the equilibrium accuracies gives \(\rho^m \leq \rho_x^*\), so it suffices to prove \(\rho_y^* \leq \rho^m\).  

Define \(\Delta\) as the difference between the AI firm's optimal expected profit under a disparate and an equal-accuracy algorithm: $\Delta:=\pi^{\mathrm D}(\rho_x^*,\rho_y^*)-\pi^{\mathrm E}(\rho^m)$, with the profit functions in \cref{equ:developer,equ:optimal_rho_fair} evaluated at the equilibrium accuracies above.
Let \(\widehat{\ell}\) be the value satisfying \(\rho_y^* = \rho^m\), and we can derive $\widehat{\ell} = \frac{b (\kappa_y-\beta \kappa_x)}{\beta (\kappa_x + \kappa_y) \theta}$.  

Because \(\rho_x^*\) and \(\rho^m\) are independent of \(\ell\), and \(\rho_y^*\) is increasing in \(\ell\), we  only need to prove  \(\widehat{\Delta} := \Delta|_{\ell = \widehat{\ell}} < 0\). Substituting \(\ell = \widehat{\ell}\) and differentiating,
\begin{align*}
\widehat{\Delta} &= \frac{f (\kappa_y-\beta \kappa_x ) (f ((2 + \beta) \kappa_x + \kappa_y)-\kappa_x (\kappa_x + \kappa_y) )}{ \kappa_x (\kappa_x + \kappa_y)^2},\\
\frac{\partial \widehat{\Delta}}{\partial f} &= -\frac{ (\kappa_y-\beta \kappa_x) (\kappa_x (\kappa_x + \kappa_y) -2f ((2 + \beta) \kappa_x + \kappa_y))}{ \kappa_x (\kappa_x + \kappa_y)^2}.
\end{align*}
Since \(\frac{\partial \widehat{\Delta}}{\partial f}\) is increasing in \(f\), $\widehat{\Delta}$ is convex in $f$. Because $\widehat{\Delta}|_{f=0}=0$ and direct substitution yields $\widehat{\Delta}|_{f=\kappa_x/2}<0$, convexity implies $\widehat{\Delta}<0$ on the entire admissible range $0<f<\frac{\kappa_x}{2}$. Finally, because $\Delta$ is decreasing in $\ell$ and $\widehat{\Delta}<0$, the firm abandons the disparate design strictly before $\ell$ reaches $\widehat{\ell}$; since $\rho_y^*$ is increasing in $\ell$ with $\rho_y^*|_{\ell=\widehat{\ell}}=\rho^m$, it follows that $\rho_y^*<\rho^m$ whenever the disparate design is optimal, completing the proof of (a).  

 (b) We first restate this part formally, writing $\alpha_x^{\pm}:=\alpha^{\pm}(\rho_x^*)$ and $\alpha_m^{\pm}:=\alpha^{\pm}(\rho^m)$: under a disparate algorithm the physician uses AI for type-$x$ patients exactly when $\alpha\in(\alpha_x^{-},\alpha_x^{+})$ and for type-$y$ patients exactly when $\alpha\in(\alpha_y^{-},\alpha_y^{+})$, and under the mandate she uses AI for both types exactly when $\alpha\in(\alpha_m^{-},\alpha_m^{+})$.

Comparing these cutoffs yields the ordering
\[
\alpha_x^{-}<\alpha_m^{-}<\alpha_y^{-}<\alpha_y^{+}<\alpha_m^{+}<\alpha_x^{+}.
\]
Moreover, each patient-indifference point sits a wedge $\tfrac{\theta r}{b}$ inside the corresponding use cutoff, the wedge being the physician's private benefit: $(1-\alpha)b=\rho_x^* b-c$ at $\alpha=\alpha_x^{-}+\tfrac{\theta r}{b}$, $\alpha b=\rho_x^* b-c$ at $\alpha=\alpha_x^{+}-\tfrac{\theta r}{b}$, and likewise for $\rho^m$. The formal statement is:

\emph{The expected patient welfare is affected as follows. For type-$x$ patients, welfare remains unchanged if $\alpha<\alpha_x^{-}$ or $\alpha>\alpha_x^{+}$; improves if $\alpha_x^{-}<\alpha<\min\{\alpha_m^{-},\,\alpha_x^{-}+\tfrac{\theta r}{b}\}$ or $\max\{\alpha_m^{+},\,\alpha_x^{+}-\tfrac{\theta r}{b}\}<\alpha<\alpha_x^{+}$; and worsens if $\min\{\alpha_m^{-},\,\alpha_x^{-}+\tfrac{\theta r}{b}\}<\alpha<\max\{\alpha_m^{+},\,\alpha_x^{+}-\tfrac{\theta r}{b}\}$. For type-$y$ patients, welfare remains unchanged if $\alpha<\alpha_m^{-}$ or $\alpha>\alpha_m^{+}$; worsens if $\alpha_m^{-}<\alpha<\min\{\alpha_y^{-},\,\alpha_m^{-}+\tfrac{\theta r}{b}\}$ or $\max\{\alpha_y^{+},\,\alpha_m^{+}-\tfrac{\theta r}{b}\}<\alpha<\alpha_m^{+}$; and improves if $\min\{\alpha_y^{-},\,\alpha_m^{-}+\tfrac{\theta r}{b}\}<\alpha<\max\{\alpha_y^{+},\,\alpha_m^{+}-\tfrac{\theta r}{b}\}$.}

We verify the statement interval by interval. For type-$x$ patients, the mandate can only remove AI use: the use interval shrinks from $(\alpha_x^{-},\alpha_x^{+})$ to $(\alpha_m^{-},\alpha_m^{+})$. Outside $(\alpha_x^{-},\alpha_x^{+})$ the physician never uses AI, so welfare is unchanged. On $(\alpha_x^{-},\alpha_m^{-})$, where use stops, welfare changes from $\rho_x^* b-c$ to $(1-\alpha)b$, an improvement precisely when $\alpha<\alpha_x^{-}+\tfrac{\theta r}{b}$; on $(\alpha_m^{+},\alpha_x^{+})$ it changes from $\rho_x^* b-c$ to $\alpha b$, an improvement precisely when $\alpha>\alpha_x^{+}-\tfrac{\theta r}{b}$. On the middle interval, where use continues, welfare falls from $\rho_x^* b-c$ to $\rho^m b-c\le \rho_x^* b-c$. Intersecting each comparison with the interval on which it applies yields exactly the type-$x$ intervals in the statement, with each $\min$ and $\max$ recording whether the indifference point falls inside that interval.

For type-$y$ patients the pattern reverses: the mandate can only add AI use, the use interval expanding from $(\alpha_y^{-},\alpha_y^{+})$ to $(\alpha_m^{-},\alpha_m^{+})$. Outside $(\alpha_m^{-},\alpha_m^{+})$ welfare is unchanged. On $(\alpha_m^{-},\alpha_y^{-})$, where use begins, welfare changes from $(1-\alpha)b$ to $\rho^m b-c$, an improvement precisely when $\alpha>\alpha_m^{-}+\tfrac{\theta r}{b}$; on $(\alpha_y^{+},\alpha_m^{+})$ it changes from $\alpha b$ to $\rho^m b-c$, an improvement precisely when $\alpha<\alpha_m^{+}-\tfrac{\theta r}{b}$. Where use already occurred, welfare rises from $\rho_y^* b-c$ to $\rho^m b-c\ge \rho_y^* b-c$ by part~(a). Intersecting as before yields the type-$y$ intervals in the statement.

(c) We prove this part using \cref{prop:developer_rho*}. Substituting the equilibrium accuracies stated below \cref{equ:wel} into the welfare expressions there, we obtain $W_x(\rho^m)>W_x(\rho_x^*) \Longleftrightarrow f < f_x$, where $f_x:=\frac{2 c \kappa_x (\kappa_x +\kappa_y)}{b ((2 + \beta) \kappa_x + \kappa_y)}$. 
Analogously, when  \(\ell \to 0\),  we have  $W_y(\rho^m)>W_y(\rho_y^*)|_{\ell\to 0} \Longleftrightarrow f > f_y$, where $f_y:=\frac{2 c \kappa_y (\kappa_x +\kappa_y)}{b ((2\beta+1) \kappa_y + \beta\kappa_x)}$.
We can verify that $f_x<f_y$.  It is straightforward to verify $\Delta|_{\ell\to 0}>0$: the disparate design strictly dominates near $\ell=0$, so we may drop the parameter restrictions of \cref{prop:developer_rho*} that otherwise guarantee profit dominance ($\beta(\beta+2)\kappa_x>\kappa_y$ and the associated lower bound on $f$). Write $f_\pi(\ell):=\frac{b \kappa_y (2c-2\theta r+\theta \ell)}{\beta (b+\theta \ell)^2}$ for the type-$y$ participation bound from \cref{prop:developer_rho*}. Then, 
\begin{align*}
    f_x> f_\pi(0) & \iff c<\frac{\kappa_y ((2 + \beta) \kappa_x +\kappa_y)  \theta r}{\kappa_y (2 \kappa_x +\kappa_y)-\beta \kappa_x^2 }\\ 
    f_y<\frac{\kappa_x}{2} & \iff  b> \frac{4 c \kappa_y (\kappa_x +\kappa_y)}{\kappa_x (\kappa_y+ \beta (\kappa_x + 2 \kappa_y))}\\    
  f_y<\frac{b \kappa_y}{2\beta (b+\theta \ell)}  & \iff b>\frac{4 \beta c (\kappa_x +\kappa_y)}{\kappa_y + \beta (\kappa_x +2 \kappa_y)}.
\end{align*}
We can verify that 
$\frac{4 c \kappa_y (\kappa_x +\kappa_y)}{\kappa_x (\kappa_y+ \beta (\kappa_x + 2 \kappa_y))}>\frac{4 \beta c (\kappa_x +\kappa_y)}{\kappa_y + \beta (\kappa_x +2 \kappa_y)}$.
Therefore, suppose the two preconditions stated in part~(c) hold; then all three inequalities above are satisfied. 
In the limit $\ell\to 0$, mandating equal accuracy yields three regimes:
 \begin{itemize}
     \item if $f_\pi(0)<f<f_x$, then $W_x(\rho^m)>W_x(\rho_x^*)$ while $W_y(\rho^m)<W_y(\rho_y^*)$;
     \item if $f_x < f< f_y$, then both types are worse off under the mandate, that is, $W_x(\rho^m)<W_x(\rho_x^*)$ and $W_y(\rho^m)<W_y(\rho_y^*)$ (each endpoint leaves one type exactly indifferent);
     \item if $f_y<f<\min\{\frac{\kappa_x}{2}, \frac{b \kappa_y}{2\beta (b+\theta \ell)}\}|_{\ell\to 0}$, then $W_x(\rho^m)<W_x(\rho_x^*)$ while $W_y(\rho^m)>W_y(\rho_y^*)$. 
 \end{itemize}
Every welfare expression and equilibrium condition above is continuous in $\ell$. Hence, for each parameter tuple satisfying these conditions there exists $\varepsilon>0$ such that, for all $0\le\ell<\varepsilon$, the same three regimes persist, with the boundaries evaluated at that $\ell$ and the type-$y$ indifference point replaced by a threshold $\bar f$, defined by $W_y(\rho^m)=W_y(\rho_y^*)$ (so that $\bar f\to f_y$ as $\ell\to 0$), exactly as stated in part~(c) of the proposition.
This completes the proof. \hfill \emph{Q.E.D.}

\newpage

\newpage

\bibliographystyle{informs2014} %
\bibliography{reference} %

\newpage
\section*{\large Online Appendix for ``Algorithm Design and Physician Liability''}

\newcounter{OAsection}
\renewcommand{\theOAsection}{OA\arabic{OAsection}}

\makeatletter
\newcommand{\OAsection}[1]{%
  \refstepcounter{OAsection}%
  \def\@currentlabel{\theOAsection}%
  \subsection*{\normalsize\bfseries \theOAsection.\ #1}}
\makeatother

This Online Appendix is organized as follows. The first part extends the welfare comparisons under equal-accuracy requirements. The second part provides technical derivations for the endogenous-pricing extension. The third part reports the welfare-maximizing-liability computations. The fourth part develops the extension with type-specific priors and its additional comparative statics.

\medskip

\setcounter{lemma}{0}
\setcounter{figure}{0}
\setcounter{section}{0}
\setcounter{proposition}{0}
\setcounter{equation}{0}
\setcounter{corollary}{0}
\setcounter{page}{1}

\renewcommand{\thelemma}{OA\arabic{lemma}}
\renewcommand{\thefigure}{OA\arabic{figure}}
\renewcommand{\thesection}{OA\arabic{section}}
\renewcommand{\theproposition}{OA\arabic{proposition}}
\renewcommand{\thecorollary}{OA\arabic{corollary}}
\renewcommand{\theequation}{OA\arabic{equation}}
\renewcommand{\thetable}{OA\arabic{table}}
\renewcommand{\thepage}{oa\arabic{page}}

\normalfont

\OAsection{Effect of Mandating Equal Accuracy on Patient Welfare (\cref{sec:welfare})}\label{sec:OA_welfare}

To assess how an equal-accuracy requirement affects aggregate welfare over a broader parameter range, we report numerical comparisons across $(\beta,\ell,f)$ in \cref{fig:mandating_impact}.

\begin{figure}[htbp]
    \centering
    \begin{subfigure}[b]{0.48\textwidth}
        \centering
        \includegraphics[width=\textwidth]{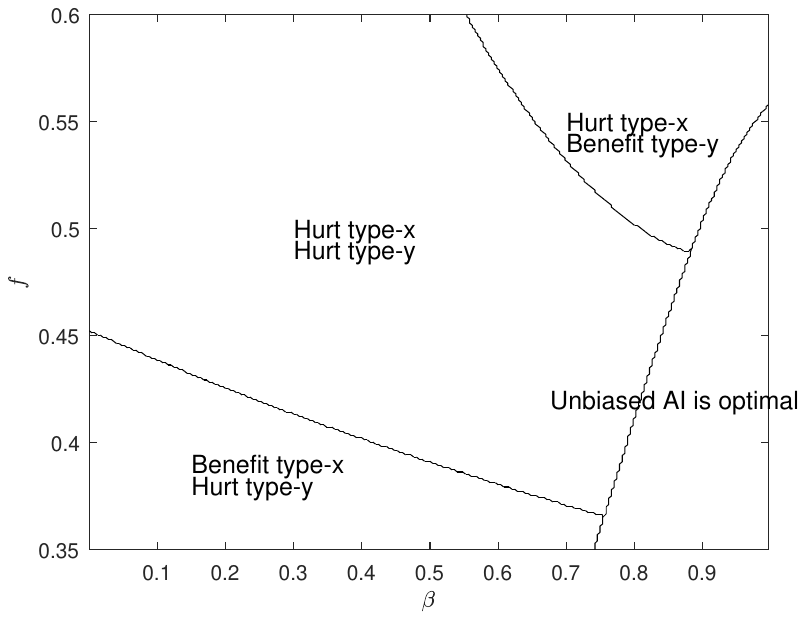}
        \caption{Variation in $(\beta,f)$}
        \label{fig:f_beta}
    \end{subfigure}
    \hfill
    \begin{subfigure}[b]{0.48\textwidth}
        \centering
        \includegraphics[width=\textwidth]{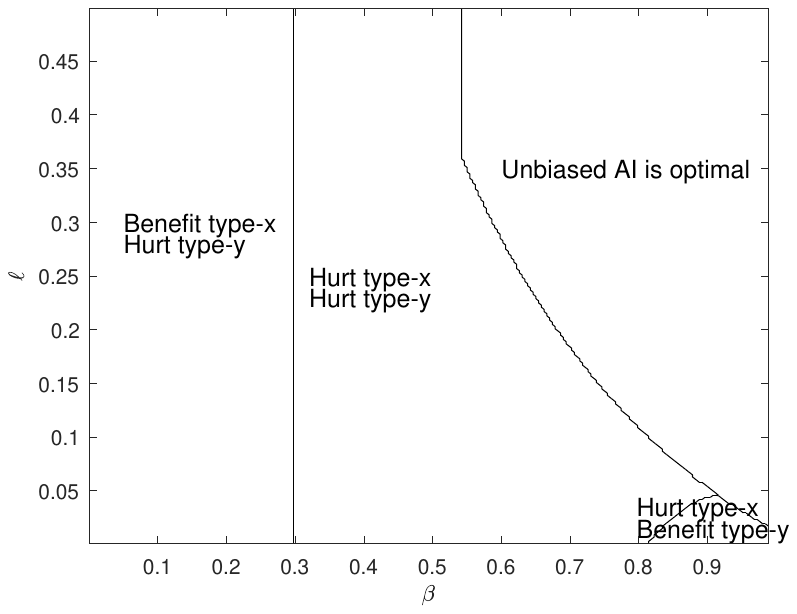}
        \caption{Variation in $(\ell,\beta)$}
        \label{fig:l_beta}
    \end{subfigure}
    \caption{Welfare effects of an equal-accuracy requirement for each patient type. Regions labeled ``Unbiased AI is optimal'' are those in which the equal-accuracy design is optimal.}
    \label{fig:mandating_impact}
\end{figure}

\cref{fig:f_beta} complements \cref{prop:mandate_fairness_worse}(c). As $f$ increases, the welfare effect of the equal-accuracy mandate typically transitions from harming only type-$y$ patients, to harming both types, and then to harming only type-$x$ patients. Intuitively, higher $f$ strengthens the AI firm's incentive to invest in accuracy for type-$y$ patients under the equal-accuracy requirement; once these gains become sufficiently large, the probability that type-$y$ patients are harmed falls.

The same figure shows how these welfare patterns vary with the relative size of the disadvantaged segment. As $\beta$ increases, the equal-accuracy requirement again tends to move from harming only type-$y$ patients, to harming both types, and eventually to harming primarily type-$x$ patients. When $\beta$ is large, the AI firm internalizes a larger market on the type-$y$ side, which increases the return to improving their accuracy under the requirement and limits the scope for welfare loss in that group. When $\beta$ is small, the requirement can instead improve welfare for type-$x$ patients by tempering overuse on the margin, partially offsetting the loss from reduced accuracy.

Finally, \cref{fig:l_beta} shows that higher liability can increase the likelihood that type-$y$ patients are harmed by the equal-accuracy requirement. When $\ell$ is large, the disparate-design equilibrium already induces substantial investment in type-$y$ accuracy, reducing the incremental benefit of further improvements under the requirement, while utilization effects remain. As a result, combining a high-liability environment with an equal-accuracy requirement---two instruments intended to protect disadvantaged patients---can be counterproductive in equilibrium.

\OAsection{AI Firm's Pricing Decision (\cref{sec:pricing_extn})}\label{sec:OA_pricing}
\label{sec:tech-details-pricing-extn}

\noindent \textsc{Proof of \cref{prop:rho_f}}.
The AI firm's profit functions under disparate and equal-accuracy designs are given in \cref{equ:developer,equ:optimal_rho_fair}. We first consider the case in which the firm offers a disparate design. 
 The Hessian matrix $\mathbb{H}^{\mathrm D}$ of the profit function $\pi^{\mathrm D}(\rho_x,\rho_y,f)$ is given by:
\begin{align}
    \mathbb{H}^{\mathrm D}:=\begin{bmatrix}
-\frac{4(1 + \beta)}{b} & 2 & \frac{2\beta(b + \theta \ell)}{b} \\
2 & -2\kappa_x & 0 \\
\frac{2\beta(b + \theta \ell)}{b} & 0 & -2\kappa_y 
\end{bmatrix}
\end{align}
The corresponding leading principal minors of $\mathbb{H}^{\mathrm D}$ are evaluated as:
\begin{align*}
    \mathbb D_1=-\frac{4(1 + \beta)}{b}, ~~
       \mathbb D_2=\frac{8(1 + \beta)\kappa_x}{b}-4 ~\text{and}~
       \mathbb D_3=\frac{4D(\ell)}{b^2},
\end{align*}
where \[
D(\ell)
:=2 b^2(\beta^2\kappa_x+\kappa_y)-4 b\kappa_x\big((\beta+1)\kappa_y-\beta^2\theta\ell\big)+2\beta^2\kappa_x\theta^2\ell^2 .
\]
To ensure the profit function $\pi^{\mathrm D}(\rho_x,\rho_y,f)$ is jointly concave in $(f, \rho_x, \rho_y)$, the Hessian matrix must be negative definite, which requires $\mathbb D_1< 0$, $\mathbb D_2> 0$, and $\mathbb D_3< 0$. These conditions yield the parameter space boundaries  $b < \frac{2 (1+\beta)\kappa_x \kappa_y}{\beta^2 \kappa_x + \kappa_y} $ and $\ell < \sqrt{\frac{ 2b \kappa_x \kappa_y -b^2 \kappa_y + 2b\beta \kappa_x \kappa_y}{\beta^2 \theta^2 \kappa_x }} - \frac{b}{\theta}$. 

Under these concavity conditions, we derive the optimal decision vector $(f, \rho_x, \rho_y)$.  The first-order condition $\frac{\partial \pi^{\mathrm D}(\rho_x,\rho_y,f)}{\partial f}=0$ yields the unique best-response price
\[
f(\rho_x,\rho_y)
=\frac{b\big(\beta(2\rho_y-1)+2\rho_x-1\big)-2\theta\beta\ell(1-\rho_y)+2(\beta+1)\theta r}{4(\beta+1)}.
\]
Substituting $f(\rho_x,\rho_y)$ back into $\pi^{\mathrm D}$ and imposing the first-order conditions with respect to $\rho_x$ and $\rho_y$ yields the optimal accuracies $(\rho_x^*,\rho_y^*)$ and the induced optimal price $f^{\mathrm D}:=f(\rho_x^*,\rho_y^*)$: 
\[
\rho_x^*
=\frac{1}{2}+\frac{b\kappa_y\theta\big(\beta\ell-2(\beta+1)r\big)}{D(\ell)},
\qquad
\rho_y^*
=\frac{1}{2}+\frac{\beta\kappa_x\theta(b+\theta\ell)\big(\beta\ell-2(\beta+1)r\big)}{D(\ell)},
\]
and
\[
f^{\mathrm D}
=\frac{b\kappa_x\kappa_y\theta\big(\beta\ell-2(\beta+1)r\big)}{D(\ell)}.
\]
To simplify exposition, we define the auxiliary variable $\mathcal{T} = \frac{\theta(\beta\ell-2(\beta+1)r)}{D(\ell)}$, allowing the equilibrium expressions to be rewritten compactly as $\rho_x^* = \frac{1}{2}+b\kappa_y \mathcal{T}$, $\rho_y^* = \frac{1}{2}+\beta\kappa_x(b+\theta\ell)\mathcal{T}$, and $f^{\mathrm D}=b\kappa_x\kappa_y \mathcal{T}$.

We next identify the parameter regimes that guarantee the validity of these interior optimal solutions. First, given $D(\ell)<0$, the requirements $\rho_x^* > 1/2$, $\rho_y^* > 1/2$, and $f^{\mathrm D} > 0$ hold if and only if $\mathcal{T}>0$, that is, if and only if $\ell < \frac{2(1+\beta) r}{\beta}$. Additionally, imposing the logical upper bounds on accuracy levels ($\rho_x^* < 1$ and $\rho_y^* < 1$) requires $\mathcal{T} < \frac{1}{2b \kappa_y}$ and $\mathcal{T} < \frac{1}{2\beta \kappa_x (b +  \theta \ell)}$, respectively. To guarantee that $f^{\mathrm D} > \theta r$, we require $\mathcal{T} > \frac{\theta r}{b \kappa_x \kappa_y}$.  
Finally, substituting $(\rho_y^*, f^{\mathrm D})$ into the profit function isolated to type-$y$ patients yields $\pi_y(\rho_y^*, f^{\mathrm D}) = y_1 \mathcal{T}^2 + y_2 \mathcal{T}$, where:
\[
y_1 = \beta \kappa_x^2 \kappa_y \left(b^2 \beta - 2 b \kappa_y + 2 b \beta \theta \ell + \beta  \theta^2 \ell^2\right) \quad \text{and} \quad y_2 = -\beta \kappa_x \kappa_y (\ell - 2 r) \theta.
\]
To ensure a well-behaved optimization problem, we impose $y_1 < 0$, which holds if and only if $\kappa_y > \frac{b^2 \beta + 2b \beta \theta\ell  + \beta  \theta^2 \ell^2}{2b}$ (the standing assumption $\kappa_y>\kappa_x$ is maintained throughout). Setting $y_2 > 0$ yields the condition $\ell < 2r$. Under these restrictions, the unique non-zero root of $\pi_y(\rho_y^*, f^{\mathrm D}) = 0$ with respect to $\mathcal{T}$ is strictly positive:
\[
\mathcal{T} = \frac{(\ell - 2 r) \theta}{\kappa_x \left(b^2 \beta - 2b \kappa_y + 2b \beta  \theta \ell + \beta  \theta^2 \ell^2\right)} > 0.
\]
Consequently, ensuring positive profits from the type-$y$ segment ($\pi_y(\rho_y^*, f^{\mathrm D}) > 0$) is mathematically equivalent to bounding the auxiliary variable such that $\mathcal{T} < \frac{(\ell - 2r)\theta}{\kappa_x \left(b^2 \beta - 2b \kappa_y + 2b \beta \theta \ell  + \beta  \theta^2 \ell^2\right)}$.

Next consider the equal-accuracy design, imposing $\rho_x=\rho_y\equiv\rho$. 
The Hessian matrix $\mathbb{H}^{\mathrm E}$ of the profit function $\pi^{\mathrm E}(\rho,f)$ is given by:
\begin{align}
    \mathbb{H}^{\mathrm E}:=\begin{bmatrix}
-\frac{4(1 + \beta)}{b}  & 2(1+\beta) \\
2(1+\beta) & -2(\kappa_x+\kappa_y)
\end{bmatrix}
\end{align}
The corresponding leading principal minors of $\mathbb{H}^{\mathrm E}$ are evaluated as:
\begin{align*}
    \mathbb D_1=-\frac{4(1 + \beta)}{b}  ~\text{and}~
       \mathbb D_2=-\frac{4(1+\beta)(b(1+\beta)-2(\kappa_x+\kappa_y))}{b}.
\end{align*}
To ensure the profit function $\pi^{\mathrm E}(\rho,f)$ is jointly concave in $(f, \rho)$, the Hessian matrix must be negative semi-definite, which requires $\mathbb D_1\le 0$ and $\mathbb D_2\ge 0$, which yield $b < \frac{2 (\kappa_x +\kappa_y)}{1+\beta} $. 
Under these concavity conditions, we derive the optimal decision vector $(f, \rho)$. The first-order condition $\frac{\partial\pi^{\mathrm E}(\rho,f)}{\partial f}=0$ gives
\[
f(\rho)=\Big(\frac{\rho}{2}-\frac{1}{4}\Big)b+\frac{1}{2}\theta r.
\]
Substituting into $\pi^{\mathrm E}$ and imposing $\frac{\partial \pi^{\mathrm E}(\rho,f(\rho))}{\partial \rho}=0$ yields
\[
\rho^*
=\frac{1}{2}+\frac{(\beta+1)\theta r}{2(\kappa_x+\kappa_y)-b(\beta+1)},
\qquad
f^{\mathrm E}:=f(\rho^*)
=\frac{(\kappa_x+\kappa_y)\theta r}{2(\kappa_x+\kappa_y)-b(\beta+1)}.
\]
 Ensuring that the equilibrium accuracy levels are interior, $1/2 < \rho^* < 1$, and that the condition $f^{\mathrm E} > \theta r$ holds yields the following joint parameter restrictions:
\[
r < \frac{\kappa_x + \kappa_y}{2\theta(1 + \beta)} \quad \text{and} \quad \frac{\kappa_x + \kappa_y}{1 + \beta} < b < \frac{2(\kappa_x + \kappa_y) - 2r \theta(1 + \beta)}{1 + \beta}.
\]

In addition, requiring $\pi^{\mathrm D}(\rho_x^*,\rho_y^*,f^{\mathrm D})>\pi^{\mathrm E}(\rho^*,f^{\mathrm E})$ leads to (the left-hand side below equals $4\big[\pi^{\mathrm D}(\rho_x^*,\rho_y^*,f^{\mathrm D})-\pi^{\mathrm E}(\rho^*,f^{\mathrm E})\big]$): 
\begin{align*}
 & 4b \kappa_x \kappa_y \left(  b\left(\beta^2 \kappa_x + \kappa_y \right) -2(1+\beta)\kappa_x \kappa_y \right) \mathcal{T}^2 \\
& + 4\theta\kappa_x \kappa_y \mathcal{T} \left( 2r(1 + \beta) + \beta \ell ( 2b \beta \kappa_x \mathcal{T} -1) \right)  \\
& + 4 \theta^2\left( \frac{(1+\beta)(\kappa_x+\kappa_y)r^2}{b\left(b(1+\beta) - 2(\kappa_x+\kappa_y)\right)} + \beta^2 \kappa_x^2 \kappa_y \ell^2 \mathcal{T}^2 \right)  \ > 0.
\end{align*}
Taken together, the conditions above characterize when a disparate AI design is optimal; see \cref{fig:welfare_equalacc_endogf} for a numerical illustration.

We now compare the maximized payoffs as $\ell$ varies. By the envelope theorem, the maximized payoff under the equal-accuracy design, $\pi^{\mathrm E}(\rho^*,f^{\mathrm E})$, is independent of $\ell$. Under the disparate design, the envelope theorem implies
\[
\frac{\partial \pi^{\mathrm D}(\rho_x^*,\rho_y^*,f^{\mathrm D})}{\partial \ell}
=\left.\frac{\partial \pi^{\mathrm D}(\rho_x,\rho_y,f)}{\partial \ell}\right|_{(\rho_x,\rho_y,f)=(\rho_x^*,\rho_y^*,f^{\mathrm D})}
=-\frac{2\beta\theta f^{\mathrm D}}{b}\,(1-\rho_y^*)<0,
\]
where the inequality holds within the feasible region we discussed earlier; thus,  the maximized disparate-design payoff is strictly decreasing in $\ell$. Moreover, as $\ell\to 0$, the disparate-design problem converges to the environment without the liability channel, and allowing $(\rho_x,\rho_y)$ to differ is (weakly) valuable under the cost asymmetry. In particular, the disparate-design optimum yields a strictly higher payoff than the best equal-accuracy design for $\ell$ near zero. Because $\pi^{\mathrm E}(\rho^*,f^{\mathrm E})$ is constant whereas $\pi^{\mathrm D}(\rho_x^*,\rho_y^*,f^{\mathrm D})$ declines in $\ell$, there exists at most one cutoff $\ell^\dagger$ within the  feasible region such that the firm prefers a disparate design for $\ell<\ell^\dagger$. If no such $\ell^\dagger$  falls within the feasible region, the disparate design remains globally optimal in the region.  Conversely, when $\ell\ge \ell^\dagger$,  the equal-accuracy design can become optimal; see \cref{fig:welfare_equalacc_endogf} for a numerical illustration.

We next show that whenever a disparate design is optimal, it must satisfy $\rho_x^*>\rho_y^*$. At $\ell\to 0$, the expressions above imply $\rho_x^*>\rho_y^*$. Indeed, $\rho_x^*-\rho_y^*=\mathcal{T}\big(b\kappa_y-\beta\kappa_x(b+\theta\ell)\big)$; with $\mathcal{T}>0$, the bracket is strictly decreasing in $\ell$ and positive at $\ell=0$ because $\kappa_y>\beta\kappa_x$, so the gap changes sign at most once. Suppose instead that for some $\hat\ell$ the disparate-design optimum satisfies $\rho_x^*=\rho_y^*$. Then the candidate solution is itself equal-accuracy, and the firm can (weakly) improve by switching to the equal-accuracy problem; moreover, because $\pi^{\mathrm D}(\rho_x^*,\rho_y^*,f^{\mathrm D})$ is decreasing in $\ell$ while $\pi^{\mathrm E}(\rho^*,f^{\mathrm E})$ is independent of $\ell$, once $\rho_x^*=\rho_y^*$ holds at some $\hat\ell$, the equal-accuracy design is optimal at $\hat\ell$ and for all larger $\ell$. Hence, if the disparate design is optimal, it must be that $\rho_x^*>\rho_y^*$.

Finally, $\rho_x^*$ (and similarly $f^{\mathrm D}$) can be non-monotone in $\ell$. Differentiating yields
\[
\frac{\partial \rho_x^*}{\partial \ell}
=\frac{b\beta\kappa_y\theta\cdot E_1(\ell)}
{2\Big(b^2(\beta^2\kappa_x+\kappa_y)-2b\kappa_x\big((\beta+1)\kappa_y-\beta^2\theta\ell\big)+\beta^2\kappa_x\theta^2\ell^2\Big)^2},
\]
where
\begin{align}\label{equ:E_1}
    E_1(\ell)
=b^2(\beta^2\kappa_x+\kappa_y)-2b(\beta+1)\kappa_x(\kappa_y-2\beta\theta r)
+\beta\kappa_x\theta^2\ell\big(4(\beta+1)r-\beta\ell\big).
\end{align}
Because $E_1(\ell)$ is  concave in $\ell$, there exist parameter values for which $E_1(0)<0$ but $E_1(\epsilon)>0$ for some small $\epsilon>0$ below the minimum of the design-switch cutoff $\ell^\dagger$ and $\frac{4(\beta+1)r}{\beta}$. To establish this, it suffices to ensure that $b^2(\beta^2\kappa_x+\kappa_y)-2b(\beta+1)\kappa_x(\kappa_y-2\beta\theta r)$ is strictly negative but sufficiently close to zero, so that  $E_1(\epsilon)>0$ for a small $\epsilon>0$.  This implies that $\rho_x^*$ initially decreases and subsequently increases with $\ell$. The same logic applies to $f^{\mathrm D}$. \hfill \emph{Q.E.D.}

\vspace{.1in}

\begin{proposition}\label{prop:disparity_f_c}
There exist parameter values under which, as $\ell$ increases, equilibrium AI use for type-$y$ patients first decreases and then increases.
\end{proposition}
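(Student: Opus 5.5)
The plan is to work entirely inside the disparate-design regime of \cref{prop:rho_f}(a). There I will write type-$y$ utilization as an explicit rational function of $\ell$, show that its derivative is negative at small $\ell$ and positive at some larger $\ell$ below $\ell^\dagger$, and then exhibit one parameter configuration satisfying every feasibility restriction from the proof of \cref{prop:rho_f}. The statement is existential, so one verified instance suffices. I would still organize the argument around sign conditions on derivatives, so that the instance is transparent and a whole open neighborhood of parameters works.

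\emph{Step 1 (reduced form).} By \cref{lem:AI_decision_imaltruistic_ypatient} with $c=f$, type-$y$ use is $d_y^{\mathrm D}=\beta g(\ell)/b$ with $g=(2\rho_y-1)b-2(1-\rho_y)\theta\ell-2f+2\theta r$. Rewrite this as $(2\rho_y-1)(b+\theta\ell)-\theta\ell-2f+2\theta r$. Substituting $\rho_y^*=\tfrac12+\beta\kappa_x(b+\theta\ell)\mathcal T$ and $f^{\mathrm D}=b\kappa_x\kappa_y\mathcal T$ gives
\[
g(\ell)=2\kappa_x\mathcal T(\ell)\big[\beta(b+\theta\ell)^2-b\kappa_y\big]-\theta\ell+2\theta r ,
\]
where $\mathcal T(\ell)=\theta(\beta\ell-2(\beta+1)r)/D(\ell)$ and $D$ is quadratic in $\ell$. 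Differentiating gives $g'(\ell)$ as a ratio whose denominator $D(\ell)^2$ is positive. Its numerator $N(\ell)$ is a polynomial of low degree in $\ell$, so the sign of $g'$ is the sign of $N$.

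\emph{Step 2 (sign change).} I will evaluate $N(0)$ and impose $N(0)<0$, which says the direct deterrence term $-\theta$ dominates the upstream accuracy and price response at $\ell=0$. Next I will find an $\ell_1>0$ with $N(\ell_1)>0$. The natural mechanism is that $\beta(b+\theta\ell)^2-b\kappa_y$ and $D(\ell)$ both move toward zero. Here $D<0$ on the feasible set, so $|\mathcal T|$ grows with $\ell$ and the upstream term strengthens, mirroring the mechanism of \cref{prop:disparity}. I would first pick $b$, $\kappa_x$, $\kappa_y$ and $\beta$ so that $D(\ell)$ is negative but small near the relevant $\ell$, which makes the accuracy response steep. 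I would then choose $r$ so that $N(0)<0$ holds only mildly. By continuity, $g$ then decreases on $[0,\ell_0)$ and increases on some interval $(\ell_0,\ell_1]$, where $\ell_0$ is the first root of $N$.

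\emph{Step 3 (feasibility).} At the chosen parameters I will check, over all of $[0,\ell_1]$, the restrictions listed in the proof of \cref{prop:rho_f}:
\begin{itemize}
\item the concavity bounds on $b$ and $\ell$, and $D(\ell)<0$;
\item $\mathcal T>0$, that is, $\ell<2(1+\beta)r/\beta$;
\item the interior-accuracy bounds $\rho_x^*,\rho_y^*<1$;
\item $f^{\mathrm D}>\theta r$;
\item $y_1<0$ and $\ell<2r$, which together give positive type-$y$ profit;
\item $g>0$;
\item the displayed inequality $\pi^{\mathrm D}>\pi^{\mathrm E}$, which places $\ell_1$ below $\ell^\dagger$.
\end{itemize}
Since $\pi^{\mathrm D}-\pi^{\mathrm E}$ is strictly decreasing in $\ell$ by the envelope argument already given, it is enough to verify it at $\ell=\ell_1$. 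All the remaining conditions are strict and continuous in $\ell$, so checking them at the endpoints plus monotonicity where it is available, or else on a fine grid, suffices. I would also record a concrete numerical instance in a footnote, as the paper does for \cref{prop:disparity}.

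The main obstacle is Step 3, and in particular its interaction with Step 2. The window for $\ell$ is tight, because $\mathcal T>0$ and $\ell<2r$ cap $\ell$ at a multiple of $r$. At the same time the upturn in $g$ needs strong upstream responsiveness, which pushes $D(\ell)$ toward zero and so toward the boundary of concavity and of $\rho_x^*<1$. My first attempt would be to keep $r$ moderately large, giving a wide $\ell$-window, and to tune $\kappa_y$ close to the concavity boundary in $b$. If that does not work, I would instead allow $\ell_1$ to coincide with the design switch and use the weak-increase-through-the-jump argument from the proof of \cref{prop:disparity}.
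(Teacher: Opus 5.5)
Your plan is correct and is essentially the paper's own proof. The paper likewise stays in the disparate regime of \cref{prop:rho_f}, differentiates $d_y^{\mathrm D}$, and obtains
$\partial d_y^{\mathrm D}/\partial\ell=\beta\kappa_y\theta\big((\beta+2)\kappa_x-b\big)E_1(\ell)/\big(D(\ell)/2\big)^2$,
so your numerator $N$ reduces to the concave quadratic $E_1$. On the feasible range $E_1$ is increasing; in fact $E_1=\tfrac{D}{2}(3-4\rho_y^*)$, which is the same $\rho_y^*=3/4$ turning point as in \cref{prop:disparity}. The paper then imposes $(\beta+2)\kappa_x>b$, $E_1(0)<0$, and $\ell_{\min}<\ell^\dagger$.

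Your explicit feasibility check is more complete than the paper's proviso. It closes most easily by tuning $r$ so that $E_1(0)$ is only slightly negative, rather than by pushing $D$ toward zero. The turning point then sits near $\ell=0$, where $\pi^{\mathrm D}>\pi^{\mathrm E}$ holds strictly. For instance, $\beta=\theta=\kappa_x=1$, $\kappa_y=1.9$, $b=1.7$, $r=0.33$ gives $\ell_{\min}\approx 0.02<\ell^\dagger\approx 0.21$ with every restriction met. Check $\pi_y>0$ itself, though: it needs $\mathcal T$ below the positive root, not just $y_1<0$ and $\ell<2r$.
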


\noindent \textsc{Proof of \cref{prop:disparity_f_c}}.
\noindent With endogenous pricing and a disparate design,
\[
d_y^{\mathrm D}=\beta\cdot \frac{(2\rho_y^*-1)b-2(1-\rho_y^*)\theta \ell-2 f^{\mathrm D}+2\theta r}{b},
\]
where $(\rho_y^*,f^{\mathrm D})$ are given in \cref{prop:rho_f}. Differentiating $d_y^{\mathrm D}$ with respect to $\ell$ and simplifying yields
\[
\frac{\partial d_y^{\mathrm D}}{\partial \ell}
=\frac{\beta\kappa_y\theta\big((\beta+2)\kappa_x-b\big)\cdot E_1(\ell)}
{\Big(b^2(\beta^2\kappa_x+\kappa_y)-2b\kappa_x\big((\beta+1)\kappa_y-\beta^2\theta\ell\big)+\beta^2\kappa_x\theta^2\ell^2\Big)^2},
\]
where $E_1(\ell)$ is defined in \cref{equ:E_1} in the proof of \cref{prop:rho_f} and 
it is a concave quadratic in $\ell$.
Let $\ell_{\min}$ denote the smaller root of $E_1(\ell)=0$:
\[
\ell_{\min}
=\frac{2(\beta+1) r}{\beta}
-\frac{\sqrt{\,b^2\!\left(\beta^2+\kappa_y/\kappa_x\right)-2 b(\beta+1)\left(\kappa_y-2\beta r \theta\right)+4(\beta+1)^2 r^2 \theta^2\,}}{\beta \theta}.
\]
If $(\beta+2)\kappa_x>b$, $E_1(0)<0$, and $E_1(0)>-4(\beta+1)^2\kappa_x\theta^2r^2$ (so that $E_1$ admits a real root), then $\frac{\partial d_y^{\mathrm D}}{\partial \ell}<0$ for $\ell$ near zero and becomes positive for $\ell$ just above $\ell_{\min}$. The last condition cannot be dispensed with: $E_1$ is concave with vertex at $\ell=2(\beta+1)r/\beta$, which coincides with the boundary of the feasible region (where $\mathcal{T}>0$ fails), so $E_1$ is strictly increasing throughout the feasible region and its maximum there equals $E_1(0)+4(\beta+1)^2\kappa_x\theta^2r^2$. Consequently the crossing at $\ell_{\min}$ is unique and the sign change is global rather than local. Provided $\ell_{\min}$ lies below the design-switch cutoff $\ell^\dagger$ (so the firm still supplies a disparate design in this neighborhood), the comparative static in \cref{prop:disparity_f_c} follows. \hfill \emph{Q.E.D.}
\vspace{.1in}

The mechanism behind \cref{prop:disparity_f_c} parallels that of \cref{prop:disparity}: liability changes the physician's marginal willingness to rely on AI for type-$y$ patients, and the firm responds by jointly adjusting price and accuracy, which can reverse the direction of utilization as $\ell$ increases.

We next examine whether an equal-accuracy requirement can still reduce welfare for both patient types when the AI firm endogenizes the per-use price (that is, when $f$ becomes a choice variable and we impose $c=f$). \cref{fig:welfare_equalacc_fequalc} reports the comparison. Panel (a) corresponds to the baseline model with exogenous $(c,f)$, whereas panel (b) corresponds to the extension in which the firm chooses $f$ and $c$ moves one-for-one with it. The two panels use the same parameter values and differ only in whether $(c,f)$ are fixed or endogenously determined. The qualitative welfare regions are unchanged: the central patterns from the baseline carry over, and the mandate can still generate the same welfare reversals.

\begin{figure}[htbp]
    \centering
    \begin{subfigure}[b]{0.48\textwidth}
        \centering
        \includegraphics[width=\textwidth]{fig/welfare_beta_l_base.pdf}
        \caption{Exogenous $f$}
        \label{fig:welfare_equalacc_exogf}
    \end{subfigure}
    \hfill
    \begin{subfigure}[b]{0.48\textwidth}
        \centering
        \includegraphics[width=\textwidth]{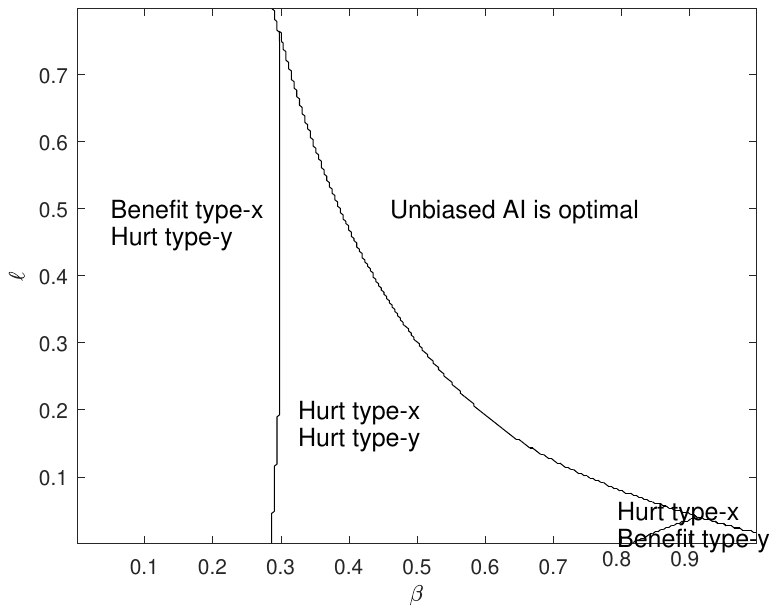}
        \caption{Endogenous $f$}
        \label{fig:welfare_equalacc_endogf}
    \end{subfigure}
    \caption{Welfare effects of an equal-accuracy requirement under exogenous and endogenous pricing. Panel (a) reproduces the baseline with exogenous $(c,f)$; panel (b) endogenizes $f$ and imposes $c=f$. Regions labeled ``Unbiased AI is optimal'' are those in which the equal-accuracy design is optimal.}
    \label{fig:welfare_equalacc_fequalc}
\end{figure}

\medskip

\OAsection{Welfare-Maximizing Liability (\cref{sec:welfaremax_extn})}\label{sec:OA_welfare_max}
\label{sec:tech-details-welfaremax-extn}

For each parameter tuple, we compute equilibrium aggregate welfare under the disparate-design and equal-accuracy-design regimes as functions of liability and then compare these values at the relevant design boundary. Operationally, we first identify the design-switch cutoff implied by \cref{prop:developer_rho*}, and then evaluate welfare on each side of that cutoff using the corresponding equilibrium design.

Across the numerical configurations reported in the main text, the welfare-maximizing liability is attained at the regime boundary: either at the largest $\ell$ for which the disparate design remains optimal or at the smallest $\ell$ that induces the equal-accuracy design. This procedure yields the two benchmark patterns illustrated in \cref{fig:fig1,fig:fig2}.

\OAsection{Type-Specific Priors (\cref{sec:type-specific-priors})}\label{sec:OA_priors}
\label{sec:tech-details-type-specific-priors}

As in the baseline, the physician's AI-use rule is characterized by two prior cutoffs, and conditional on use the physician follows the AI signal.

\begin{lemma}\label{lem:AI_decision_imaltruistic_xpatient_alphat}
    (a) For type-$x$ patients, the physician uses AI if and only if
    \[
    \frac{(1-\rho_x)b+c-\theta r}{b}
    < \alpha_x <
    \min\left\{\overline{\alpha},\frac{\rho_x b-c+\theta r}{b}\right\}.
    \]
    Moreover, conditional on AI use, the physician follows the AI signal.
    
    (b) For type-$y$ patients, the physician uses AI if and only if
    \[
    \max\left\{\underline{\alpha},\frac{(1-\rho_y)(b+\theta \ell)+c-\theta r}{b}\right\}
    < \alpha_y <
    \frac{\rho_y b-(1-\rho_y)\theta \ell-c+\theta r}{b}.
    \]
    Moreover, conditional on AI use, the physician follows the AI signal.
\end{lemma}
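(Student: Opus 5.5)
The plan is to reuse the pointwise payoff comparisons from the proofs of \cref{lem:AI_decision_imaltruistic_xpatient} and \cref{lem:AI_decision_imaltruistic_ypatient}. Those comparisons are made for a fixed prior $\alpha$ and never use the uniform distribution on $[0,1]$. For each type, the payoff from not using AI is still \cref{equ:U}, and the payoff from using AI is still \cref{equ:Ux} or \cref{equ:Uy}. The only thing that changes is the support of the prior. So I would take the differences \cref{equ:uti_com_x} and \cref{equ:uti_com_y} as given, and use the fact that the region where they are positive is exactly the open interval found in the baseline lemmas. The answer is that interval intersected with the new support: $(0,\overline{\alpha})$ for type $x$ and $(\underline{\alpha},1)$ for type $y$.

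For type $x$, \cref{equ:uti_com_x} shows that AI is used if and only if $\alpha_x$ lies strictly between $\frac{(1-\rho_x)b+c-\theta r}{b}$ and $\frac{\rho_x b-c+\theta r}{b}$. This requires the nondegeneracy condition $c-\theta r<(\rho_x-\tfrac12)b$, which I would carry over from the standing positive-demand restrictions exactly as in the baseline proof. Intersecting with $(0,\overline{\alpha})$ gives the interval in part (a). The lower endpoint needs no $\max$ with $0$ because it is positive: $\rho_x<1$ and $c>\theta r$ make $(1-\rho_x)b+c-\theta r>0$. The upper endpoint exceeds $1/2<\overline{\alpha}$ but may exceed $\overline{\alpha}$, so the $\min$ is kept.

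For type $y$, the argument is symmetric using \cref{equ:uti_com_y}. The upper cutoff $\frac{\rho_y b-(1-\rho_y)\theta\ell-c+\theta r}{b}$ is below $1$ because $\rho_y<1$ and $c>\theta r$, so no $\min$ with $1$ is needed. The lower cutoff may fall below $\underline{\alpha}$, hence the $\max$.

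The claim that the physician follows the signal whenever she uses AI carries over unchanged. In the baseline proofs, use only occurs in the middle cases of \cref{equ:uti_com_x,equ:uti_com_y}, where the posterior-based prescription rule coincides with following the signal, and that case structure is pointwise in $\alpha$. For type $y$, I would briefly check that the posterior thresholds $\frac{(1-\rho_y)(b+\theta\ell)}{b+(1-\rho_y)\theta\ell}<\tfrac12<\frac{b\rho_y}{b+(1-\rho_y)\theta\ell}$ still hold under the positive-demand condition, as in the baseline.

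The main obstacle, and it is a mild one, is making sure the intersected intervals are nonempty and that the endpoint claims ($\min$/$\max$ placement) are right. I would argue this from the maintained parameter restrictions of this extension, which ensure strictly positive demand for each type, together with $\underline{\alpha}<1/2<\overline{\alpha}$.
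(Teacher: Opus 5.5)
Your proposal is correct and matches the paper's approach: the paper omits this proof, saying it follows the same logic as \cref{lem:AI_decision_imaltruistic_xpatient,lem:AI_decision_imaltruistic_ypatient}. Concretely, the pointwise payoff comparisons in $\alpha$ are unchanged, and the baseline use intervals are simply intersected with the new supports $(0,\overline{\alpha})$ and $(\underline{\alpha},1)$. Your endpoint checks are what justify the $\min$/$\max$ placement in the statement: using $c>\theta r$ and $\rho_t<1$, you show no $\max$ with $0$ is needed for type $x$ and no $\min$ with $1$ is needed for type $y$.
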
 

The proof of \cref{lem:AI_decision_imaltruistic_xpatient_alphat} follows the same logic as the proofs of \cref{lem:AI_decision_imaltruistic_xpatient,lem:AI_decision_imaltruistic_ypatient} and is therefore omitted. Relative to the baseline, the ordering of type-specific AI use need not be uniform because use is jointly determined by type-dependent priors and payoff trade-offs. Nevertheless, the baseline pattern remains feasible: there are parameter values for which the physician uses AI less for type-$y$ patients than for type-$x$ patients.
Under a disparate algorithm, integrating the physician's AI use rule in \cref{lem:AI_decision_imaltruistic_xpatient_alphat} against the prior densities ($1/\overline{\alpha}$ on $(0, \overline{\alpha})$ for type-$x$ patients and $1/(1-\underline{\alpha})$ on $(\underline{\alpha},1)$ for type-$y$ patients) yields expected AI demand for each patient type. To ease notation, for type-$x$ patients, let $\bar d_x(\rho_x)=\overline{\alpha}-((1-\rho_x)b+c-\theta r)/b$ and $\tilde d_x(\rho_x)=((2\rho_x-1)b-2c+2\theta r)/b$. For type-$y$ patients, let $\bar d_y(\rho_y)=((\rho_y b-(1-\rho_y)\theta \ell-c+\theta r)/b)-\underline{\alpha}$ and $\tilde d_y(\rho_y)=((2\rho_y-1)b-2(1-\rho_y)\theta \ell-2c+2\theta r)/b$.
\begin{align}\label{equ:AI demand prior}
d_x^{\mathrm D}&=\min\{\bar d_x(\rho_x),\tilde d_x(\rho_x)\}, \notag\\
d_y^{\mathrm D}&=\tilde{\beta}\min\{\bar d_y(\rho_y),\tilde d_y(\rho_y)\}.
\end{align}
Note that all optima characterized in this extension lie in the region of strictly positive demand
(this is verified in every numerical configuration we report), so the
positive-part operator is omitted from the piecewise expressions without
affecting the analysis.
The AI firm then chooses $(\rho_x,\rho_y)$ to maximize expected profit in \cref{equ:developer}. Under an equal-accuracy algorithm, define $\bar d_x^{\mathrm E}(\rho)=\overline{\alpha}-((1-\rho)b+c-\theta r)/b$, $\bar d_y^{\mathrm E}(\rho)=(\rho b-c+\theta r)/b-\underline{\alpha}$, and $\tilde d^{\mathrm E}(\rho)=((2\rho-1)b-2c+2\theta r)/b$. Then
\begin{align*}
d_x^{\mathrm E}&=\min\{\bar d_x^{\mathrm E}(\rho),\tilde d^{\mathrm E}(\rho)\}, \notag\\
d_y^{\mathrm E}&=\tilde{\beta}\min\{\bar d_y^{\mathrm E}(\rho),\tilde d^{\mathrm E}(\rho)\}.
\end{align*}
The AI firm chooses $\rho\in(1/2,1)$ to maximize expected profit in \cref{equ:optimal_rho_fair}.

The next proposition characterizes the resulting equilibrium accuracy choices.
\begin{proposition}\label{prop:developer_rho*_alpha_t}  
There exist parameter regions in which the AI firm develops a disparate algorithm when $\ell$ is below a cutoff and an equal-accuracy algorithm when $\ell$ exceeds that cutoff.  Moreover, if $b\kappa_y > 2\tilde{\beta}\kappa_x(b + \theta\ell)$, then the associated accuracies satisfy $\rho^{*}_x > \rho^{*}_y$ whenever the disparate design is optimal. The proof characterizes the optimal accuracies under both designs. %
\end{proposition}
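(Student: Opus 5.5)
The plan is to show that the downstream structure of the baseline survives the type-specific priors, and then to rerun the argument from the proof of \cref{prop:developer_rho*} on piecewise-linear demands.

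\textbf{Step 1: the firm's objective.} I take \cref{lem:AI_decision_imaltruistic_xpatient_alphat} and the demands in \eqref{equ:AI demand prior} as given. Each of $d_x^{\mathrm D}$ and $d_y^{\mathrm D}$ is a minimum of two affine functions of its own accuracy. For type $x$, $\bar d_x$ has slope $1$ in $\rho_x$ and $\tilde d_x$ has slope $2$. For type $y$, $\bar d_y$ has slope $(b+\theta\ell)/b$ and $\tilde d_y$ has slope $2(b+\theta\ell)/b$. A minimum of affine functions is concave and the cost terms are strictly concave quadratics, so $\pi^{\mathrm D}$ is strictly concave and additively separable in $(\rho_x,\rho_y)$. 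Likewise $\pi^{\mathrm E}$ is strictly concave in $\rho$. Hence each design problem has a unique maximizer, characterized by one-sided first-order conditions at the kink.

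\textbf{Step 2: bracketing the maximizers.} For a strictly concave function formed as a minimum of two smooth concave branches, the maximizer lies between the two branch-wise unconstrained maximizers; it equals one of them or sits at the kink. This gives, with $\tilde f=f/\overline{\alpha}$,
\[
\rho_x^*\in\Bigl[\tfrac12+\tfrac{\tilde f}{2\kappa_x},\ \tfrac12+\tfrac{\tilde f}{\kappa_x}\Bigr],\qquad
\rho_y^*\in\Bigl[\tfrac12+\tfrac{\tilde\beta\tilde f(b+\theta\ell)}{2b\kappa_y},\ \tfrac12+\tfrac{\tilde\beta\tilde f(b+\theta\ell)}{b\kappa_y}\Bigr].
\]
The same argument applies to $\rho^*$ under the equal-accuracy design. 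Here I need to verify the density normalization carefully. My reading is that $\tilde f$ and $\tilde\beta$ are defined exactly so that the type-$y$ marginal revenue, per unit slope, is $\tilde\beta\tilde f$.

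\textbf{Step 3: the ordering $\rho_x^*>\rho_y^*$.} It then suffices to compare the lower end of the $\rho_x^*$ bracket with the upper end of the $\rho_y^*$ bracket:
\[
\frac{\tilde f}{2\kappa_x}>\frac{\tilde\beta\tilde f(b+\theta\ell)}{b\kappa_y}
\iff b\kappa_y>2\tilde\beta\kappa_x(b+\theta\ell).
\]
This is exactly the stated condition. It delivers $\rho_x^*>\rho_y^*$ whenever the disparate design is chosen, with no need for the continuity argument used in \cref{prop:developer_rho*}. I would also repeat the reverse-disparity step from that proof. When $\rho_y>\rho_x$ the liability channel is off, and the no-liability unconstrained optimum has $\rho_x>\rho_y$ by the same bracketing. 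So the constrained optimum lies on $\rho_x=\rho_y$, and only the disparate and equal-accuracy designs need to be compared.

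\textbf{Step 4: the single design cutoff.} Liability enters only through $d_y^{\mathrm D}$, and both branches decrease in $\ell$ with slope proportional to $(1-\rho_y)$. By the envelope theorem, applied on whichever branch is active (or at the kink, via the one-sided argument), the maximized disparate profit is strictly decreasing in $\ell$ while $\rho_y^*<1$. The value $\max_\rho\pi^{\mathrm E}$ does not depend on $\ell$. As $\ell\to0$ the disparate feasible set contains the equal-accuracy one, so disparate profit is at least equal-accuracy profit, strictly whenever the brackets in Step 2 separate. Strict monotonicity then yields at most one crossing $\widetilde\ell$.

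\textbf{Main obstacle.} The hard part is existence: showing the crossing actually occurs inside the region where demands stay positive, accuracies stay interior, and $\pi_y>0$. This is the analogue of the condition $\pi^{\mathrm E}(\rho^*)>\pi_x(\rho_x^*)$ in \cref{prop:developer_rho*}, and the kinks make a clean closed form unlikely. I would first look for a sufficient condition on a single branch. Taking $\overline{\alpha}$ near $1$ and $\underline{\alpha}$ near $0$ keeps the $\tilde d$ branches active, which reduces the problem to the baseline inequality with $(f,\beta)$ replaced by $(\tilde f,\tilde\beta)$, up to density constants. I would then confirm with an explicit numerical parameter tuple that the crossing lies below the point where $\pi_y$ turns negative. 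Since the statement asserts only that such regions exist, a verified numerical instance combined with continuity suffices.
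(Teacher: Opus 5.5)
Your proposal is correct and follows essentially the same route as the paper. The ordering comes from comparing the lower bound $\tfrac12+\tilde f/(2\kappa_x)$ on $\rho_x^*$ with the upper bound $\tfrac12+\tilde\beta\tilde f(b+\theta\ell)/(b\kappa_y)$ on $\rho_y^*$. The cutoff comes from the disparate payoff being strictly decreasing in $\ell$ while the equal-accuracy payoff does not depend on $\ell$, and existence is certified by a concrete parameter configuration.

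The one place to tighten is Step~4. When $\rho_y^*$ sits at the kink $\bar\rho_y$, which itself moves with $\ell$, the envelope theorem is awkward to apply. The paper instead notes that $\pi_y(\rho_y;\ell)$ is strictly decreasing in $\ell$ for every fixed $\rho_y$, so its maximum is too, and this covers all branches at once.
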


\noindent \textsc{Proof of \cref{prop:developer_rho*_alpha_t}}. The proof proceeds in three steps. Step~1 characterizes the optimal accuracies under the disparate design and establishes the ranking $\rho^{*}_x > \rho^{*}_y$ under $b\kappa_y > 2\tilde{\beta}\kappa_x(b + \theta\ell)$. Step~2 characterizes the optimal accuracy under the equal-accuracy design. Step~3 establishes the liability cutoff.

\noindent\textbf{Step 1: Disparate design.} We begin with the disparate design. Because the physician's usage rule changes discretely at an accuracy cutoff, the AI firm's objective is piecewise in $\rho_t$. Conditional on supplying a disparate design, the problem separates by patient type. We therefore solve for $\rho_x^*$ and $\rho_y^*$ in turn. For type-$x$ patients, define the cutoff accuracy as $\bar\rho_x:=\frac{\overline{\alpha}b+c-\theta r}{b}$.
The firm chooses $\rho_x\in(1/2,1)$ to maximize
\begin{align*}
\pi_x(\rho_x)=
\begin{cases}
\tilde{f}\cdot \tilde d_x(\rho_x)-\kappa_x(\rho_x-\tfrac12)^2,
& \text{if } \tfrac12<\rho_x\le \bar\rho_x,\\[6pt]
\tilde{f}\cdot\bar d_x(\rho_x)-\kappa_x(\rho_x-\tfrac12)^2,
& \text{if } \bar\rho_x<\rho_x<1.
\end{cases}
\end{align*}
On each region, $\pi_x(\rho_x)$ is strictly concave in $\rho_x$. The first-order condition yields the interior candidate
$\rho_x=\tfrac12+\tfrac{\tilde{f}}{\kappa_x}$ in the first region and $\rho_x=\tfrac12+\tfrac{\tilde{f}}{2\kappa_x}$ in the second. The global maximizer is obtained by checking whether the relevant candidate lies in its region; otherwise the optimum is attained at the boundary $\bar\rho_x$. This yields
\begin{align}\label{eq:rhoxstar}
\rho_x^*=
\begin{cases}
\tfrac12+\dfrac{\tilde{f}}{\kappa_x}, & \text{if } \tfrac12+\dfrac{\tilde{f}}{\kappa_x}\le \bar\rho_x,\\[6pt]
\bar\rho_x, & \text{if } \tfrac12+\dfrac{\tilde{f}}{2\kappa_x}\le \bar\rho_x<\tfrac12+\dfrac{\tilde{f}}{\kappa_x},\\[6pt]
\tfrac12+\dfrac{\tilde{f}}{2\kappa_x}, & \text{if } \bar\rho_x<\tfrac12+\dfrac{\tilde{f}}{2\kappa_x}.
\end{cases}
\end{align}

For type-$y$ patients, define the corresponding cutoff as $\bar\rho_y:=1-\frac{\underline{\alpha}b-(c-\theta r)}{b+\theta\ell}$.
The AI firm chooses $\rho_y\in(1/2,1)$ to maximize
\begin{align*}
\pi_y(\rho_y;\ell)=
\begin{cases}
\tilde{f}\tilde{\beta}\cdot \tilde d_y(\rho_y)-\kappa_y(\rho_y-\tfrac12)^2,
& \text{if } \tfrac12<\rho_y\le \bar\rho_y,\\[6pt]
\tilde{f}\tilde{\beta}\cdot\bar d_y(\rho_y)-\kappa_y(\rho_y-\tfrac12)^2,
& \text{if } \bar\rho_y<\rho_y<1.
\end{cases}
\end{align*}
Again, strict concavity holds on each region. The first-order condition yields the interior candidate
$\rho_y=\tfrac12+\dfrac{\tilde{\beta} \tilde{f}(b+\theta\ell)}{b\kappa_y}$ in the first region and
$\rho_y=\tfrac12+\dfrac{\tilde{\beta} \tilde{f}(b+\theta\ell)}{2b\kappa_y}$ in the second. Comparing these candidates with $\bar\rho_y$ gives
\begin{align}\label{eq:rhoystar}
\rho_y^*=
\begin{cases}
\tfrac12+\dfrac{\tilde{\beta} \tilde{f}(b+\theta\ell)}{b\kappa_y},
& \text{if } \tfrac12+\dfrac{\tilde{\beta} \tilde{f}(b+\theta\ell)}{b\kappa_y}\le \bar\rho_y,\\[6pt]
\bar\rho_y,
& \text{if } \tfrac12+\dfrac{\tilde{\beta} \tilde{f}(b+\theta\ell)}{2b\kappa_y}\le \bar\rho_y<\tfrac12+\dfrac{\tilde{\beta} \tilde{f}(b+\theta\ell)}{b\kappa_y},\\[6pt]
\tfrac12+\dfrac{\tilde{\beta} \tilde{f}(b+\theta\ell)}{2b\kappa_y},
& \text{if } \bar\rho_y<\tfrac12+\dfrac{\tilde{\beta} \tilde{f}(b+\theta\ell)}{2b\kappa_y}.
\end{cases}
\end{align}

\emph{Accuracy ranking.} We now show that the assumption $b\kappa_y > 2\tilde{\beta}\kappa_x(b + \theta\ell)$ 
implies $\rho^{*}_x > \rho^{*}_y$ across all branch combinations of
\eqref{eq:rhoxstar} and \eqref{eq:rhoystar}. First, \eqref{eq:rhoxstar} implies
\begin{equation*}
\rho^{*}_x \;\ge\; \tfrac12 + \tfrac{\tilde{f}}{2\kappa_x};
\end{equation*}
this holds with equality on the third branch, holds on the second branch
by its defining condition
$\bar\rho_x \ge \tfrac12 + \tfrac{\tilde{f}}{2\kappa_x}$, and holds on the first
branch because $\tfrac{\tilde{f}}{\kappa_x} > \tfrac{\tilde{f}}{2\kappa_x}$. Second, \eqref{eq:rhoystar} implies
\begin{equation*}
\rho^{*}_y \;\le\; \tfrac12 + \tfrac{\tilde{\beta} \tilde{f}(b+\theta\ell)}{b\kappa_y};
\end{equation*}
this holds with equality on the first branch, holds on the second branch
by its defining condition
$\bar\rho_y < \tfrac12 + \tfrac{\tilde{\beta} \tilde{f}(b+\theta\ell)}{b\kappa_y}$, and
holds on the third branch because
$\tfrac{\tilde{\beta} \tilde{f}(b+\theta\ell)}{2b\kappa_y}
< \tfrac{\tilde{\beta} \tilde{f}(b+\theta\ell)}{b\kappa_y}$.
The assumption
$b\kappa_y > 2\tilde{\beta}\kappa_x(b+\theta\ell)$ is equivalent to
\begin{equation*}
\frac{\tilde{\beta} \tilde{f}(b+\theta\ell)}{b\kappa_y} \;<\; \frac{\tilde{f}}{2\kappa_x},
\end{equation*}
and therefore
\begin{equation*}
\rho^{*}_y \;\le\; \tfrac12 + \tfrac{\tilde{\beta} \tilde{f}(b+\theta\ell)}{b\kappa_y}
\;<\; \tfrac12 + \tfrac{\tilde{f}}{2\kappa_x} \;\le\; \rho^{*}_x,
\end{equation*}
so that $\rho^{*}_x > \rho^{*}_y$ whenever the disparate design is
supplied. In particular, the disparity-liability channel, which is
defined for $\rho_x > \rho_y$, is well defined at the firm's optimum.

\noindent\textbf{Step 2: Equal-accuracy design.}
Impose $\rho_x = \rho_y \equiv \rho$.
The relevant usage cutoffs are $\bar\rho_x$ defined in Step~1 and
\begin{equation*}
\bar\rho^{(y)} := \frac{(1-\underline{\alpha})b + c - \theta r}{b},
\end{equation*}
so that $d^{E}_x = \tilde d^{E}(\rho)$ if and only if
$\rho \le \bar\rho_x$ and $d^{E}_y = \tilde{\beta}\,\tilde d^{E}(\rho)$ if and
only if $\rho \le \bar\rho^{(y)}$. Note that
$\bar\rho^{(y)} < \bar\rho_x$ if and only if
$\overline{\alpha} + \underline{\alpha} > 1$.

\emph{Case A: $\overline{\alpha} + \underline{\alpha} > 1$
(so $\bar\rho^{(y)} < \bar\rho_x$).}
The firm's objective is
\begin{equation*}
\pi^{E}(\rho) =
\begin{cases}
\tilde{f}(1+\tilde{\beta})\,\tilde d^{E}(\rho) - (\kappa_x + \kappa_y)\big(\rho-\tfrac12\big)^2,
  & \text{if } \tfrac12 < \rho \le \bar\rho^{(y)},\\[1ex]
\tilde{f}\big[\tilde d^{E}(\rho) + \tilde{\beta}\,\bar d^{E}_y(\rho)\big]
  -(\kappa_x + \kappa_y)\big(\rho-\tfrac12\big)^2,
  & \text{if } \bar\rho^{(y)} < \rho \le \bar\rho_x,\\[1ex]
\tilde{f}\big[\bar d^{E}_x(\rho) + \tilde{\beta}\,\bar d^{E}_y(\rho)\big]
  - (\kappa_x + \kappa_y)\big(\rho-\tfrac12\big)^2,
  & \text{if } \bar\rho_x < \rho < 1.
\end{cases}
\end{equation*}
The marginal revenues on the three regions are $2\tilde{f}(1+\tilde{\beta})$,
$\tilde{f}(2+\tilde{\beta})$, and $\tilde{f}(1+\tilde{\beta})$, which are strictly decreasing across
regions; combined with the strictly convex cost, $\pi^{E}(\rho)$ is globally
concave in $\rho$. The region-specific first-order candidates are
\begin{equation*}
\rho_1 := \tfrac12 + \tfrac{\tilde{f}(1+\tilde{\beta})}{\kappa_x + \kappa_y}, \qquad
\rho_2 := \tfrac12 + \tfrac{\tilde{f}(2+\tilde{\beta})}{2(\kappa_x + \kappa_y)}, \qquad
\rho_3 := \tfrac12 + \tfrac{\tilde{f}(1+\tilde{\beta})}{2(\kappa_x + \kappa_y)},
\end{equation*}
with $\rho_1 > \rho_2 > \rho_3$, and the unique maximizer is
\begin{equation}\label{eq:rhoEstarA}
\rho^{*} =
\begin{cases}
\rho_1, & \text{if } \rho_1 \le \bar\rho^{(y)},\\
\bar\rho^{(y)}, & \text{if } \rho_2 \le \bar\rho^{(y)} < \rho_1,\\
\rho_2, & \text{if } \bar\rho^{(y)} < \rho_2 \le \bar\rho_x,\\
\bar\rho_x, & \text{if } \rho_3 \le \bar\rho_x < \rho_2,\\
\rho_3, & \text{if } \bar\rho_x < \rho_3,
\end{cases}
\end{equation}
truncated, if necessary, to the admissible interval $(1/2,1)$; as in
the baseline, we maintain parameter restrictions under which the
optimum is interior.

\emph{Case B: $\overline{\alpha} + \underline{\alpha} \le 1$
(so $\bar\rho_x \le \bar\rho^{(y)}$).}
The middle region becomes
$(\bar\rho_x, \bar\rho^{(y)}]$, on which
$\pi^{E}(\rho) = \tilde{f}\big[\bar d^{E}_x(\rho) + \tilde{\beta}\,\tilde d^{E}(\rho)\big]
- (\kappa_x+\kappa_y)(\rho-\tfrac12)^2$ with marginal revenue $\tilde{f}(1+2\tilde{\beta})$ and candidate
$\rho_2' := \tfrac12 + \tfrac{\tilde{f}(1+2\tilde{\beta})}{2(\kappa_x + \kappa_y)}$. Since
$2\tilde{f}(1+\tilde{\beta}) > \tilde{f}(1+2\tilde{\beta}) > \tilde{f}(1+\tilde{\beta})$, global concavity again holds,
and $\rho^{*}$ is given by 
\begin{equation*}
\rho^{*} =
\begin{cases}
\rho_1, & \text{if } \rho_1 \le \bar\rho_x,\\
\bar\rho_x, & \text{if } \rho_2' \le \bar\rho_x < \rho_1,\\
\rho_2', & \text{if } \bar\rho_x< \rho_2' \le \bar\rho^{(y)} ,\\
\bar\rho^{(y)}, & \text{if } \rho_3 \le \bar\rho^{(y)}  < \rho_2',\\
\rho_3, & \text{if } \bar\rho^{(y)}  < \rho_3.
\end{cases}
\end{equation*}

Let $\pi^{E*}:=\pi^{E}(\rho^*)$. Because no term in the equal-accuracy problem involves $\ell$, the
maximized payoff $\pi^{E*}$ is independent of $\ell$.

\noindent\textbf{Step 3: Existence of the liability cutoff.}
Let $\pi^{D*}(\ell) := \pi_x(\rho^{*}_x) + \pi_y(\rho^{*}_y;\ell)$
denote the maximized payoff under the disparate design. The type-$x$
component does not depend on $\ell$. For the type-$y$ component, observe
that for every fixed $\rho_y \in (1/2,1)$,
\begin{equation*}
\frac{\partial \tilde d_y(\rho_y)}{\partial\ell}
= -\frac{2(1-\rho_y)\theta}{b} < 0,
\qquad
\frac{\partial \bar d_y(\rho_y)}{\partial\ell}
= -\frac{(1-\rho_y)\theta}{b} < 0,
\end{equation*}
so both branches of $\pi_y(\rho_y;\cdot)$, and hence their pointwise
minimum-based composition, are strictly decreasing in $\ell$ at every
$\rho_y$. (We restrict attention to parameters for which type-$y$ demand is strictly
positive at the disparate optimum for all $\ell\le\widetilde\ell$, as holds in
every numerical configuration we report, so that $\pi^{D*}(\ell)$ is strictly
decreasing on the relevant range.) Because the objective is strictly decreasing in $\ell$
pointwise in $\rho_y$, its maximum over $\rho_y$ is strictly decreasing
in $\ell$, on interior and boundary branches alike; continuity of
$\pi^{D*}(\ell)$ in $\ell$ follows from the maximum theorem. Since
$\pi^{E*}$ is constant in $\ell$, the difference
$\pi^{D*}(\ell) - \pi^{E*}$ is continuous and strictly decreasing, and
therefore crosses zero at most once. Define
$\widetilde\ell := \inf\{\ell \ge 0 : \pi^{D*}(\ell) \le \pi^{E*}\}$ (with
$\widetilde\ell = \infty$ if the set is empty). Then the firm strictly
prefers the disparate design for $\ell < \widetilde\ell$ and the
equal-accuracy design for $\ell > \widetilde\ell$; the parameter regions in
the statement are those for which $\widetilde\ell \in (0,\infty)$, which is
nonempty for the numerical configurations reported in the proof of \cref{prop:disparity_alphaxy}.
Finally, by Step~1, whenever the disparate design is chosen the
optimal accuracies satisfy $\rho^{*}_x > \rho^{*}_y$ under
the assumption $b\kappa_y > 2\tilde{\beta}\kappa_x(b + \theta\ell)$\footnote{Absent the assumption $b\kappa_y > 2\tilde{\beta}\kappa_x(b + \theta\ell)$, the accuracy ranking under the optimal disparate algorithm can reverse, that is, $\rho_x^*<\rho_y^*$. The mechanism is that without the assumption, the type-$x$ cap $\bar\rho_x=(\overline{\alpha}b+c-\theta r)/b$ can bind, pushing $\rho_x^*$ to the half-return branch $\tfrac12+\tfrac{f}{2\kappa_x}$, whereas $\bar\rho_y$ cannot bind when its value exceeds one.  %
}.\hfill \emph{Q.E.D.}

\medskip

\begin{proposition}\label{prop:disparity_alphaxy}
There exist scenarios in which, as $\ell$ increases, the physician uses AI for fewer type-$y$ patients when $\ell$ is below a threshold, and for more type-$y$ patients when $\ell$ is above that threshold. Depending on which branch of \cref{equ:AI demand prior} is active, the threshold is either
 \[
 \frac{b(\kappa_y-2\tilde{\beta} \tilde{f})}{2\theta\tilde{\beta} \tilde{f}}
 \quad \text{or} \quad
 \frac{b(\kappa_y-4\tilde{\beta} \tilde{f})}{4\theta\tilde{\beta} \tilde{f}}.
 \]
\end{proposition}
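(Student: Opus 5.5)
The plan mirrors the proof of \cref{prop:disparity}, applied branch by branch. Fix the disparate design and use Step~1 of the proof of \cref{prop:developer_rho*_alpha_t}. On the two interior branches of \eqref{eq:rhoystar}, the optimal type-$y$ accuracy has the closed form $\rho_y^*=\tfrac12+\tfrac{\tilde\beta\tilde f(b+\theta\ell)}{k\,b\kappa_y}$, where $k=1$ when the $\tilde d_y$ branch of \cref{equ:AI demand prior} is active and $k=2$ when the $\bar d_y$ branch is active. I substitute this into the corresponding demand, $\tilde\beta\tilde d_y(\rho_y^*)$ or $\tilde\beta\bar d_y(\rho_y^*)$, and study the result as a function of $\ell$ alone.

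For the $\tilde d_y$ branch the computation is identical to \cref{prop:disparity} with $(\beta,f)$ replaced by $(\tilde\beta,\tilde f)$. Demand is a convex quadratic in $\ell$, and its derivative changes sign at $\ell=\frac{b(\kappa_y-4\tilde\beta\tilde f)}{4\theta\tilde\beta\tilde f}$.

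For the $\bar d_y$ branch I write $b\,\bar d_y=\rho_y(b+\theta\ell)-\theta\ell-c+\theta r-\underline\alpha b$. With $\rho_y^*=\tfrac12+\tfrac{\tilde\beta\tilde f(b+\theta\ell)}{2b\kappa_y}$, this becomes
\[
b\,\bar d_y=\tfrac{b+\theta\ell}{2}+\tfrac{\tilde\beta\tilde f(b+\theta\ell)^2}{2b\kappa_y}-\theta\ell-c+\theta r-\underline\alpha b .
\]
Its derivative in $\ell$ is $-\tfrac{\theta}{2}+\tfrac{\tilde\beta\tilde f\theta(b+\theta\ell)}{b\kappa_y}$. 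This is negative exactly when $b+\theta\ell<\tfrac{b\kappa_y}{2\tilde\beta\tilde f}$, that is, when $\ell<\frac{b(\kappa_y-2\tilde\beta\tilde f)}{2\theta\tilde\beta\tilde f}$. Again demand is convex in $\ell$, with a unique turning point. In each branch I require the turning point to be positive, which means $\tilde f<\kappa_y/(4\tilde\beta)$ or $\tilde f<\kappa_y/(2\tilde\beta)$ respectively.

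Because the statement is existential, the remaining task is to exhibit parameter configurations satisfying four conditions on an interval $[0,\ell_1]$ with $\ell_1$ strictly above the relevant threshold:
(i) the chosen branch of \eqref{eq:rhoystar} stays active, i.e.\ the branch condition comparing the candidate with $\bar\rho_y=1-\frac{\underline\alpha b-(c-\theta r)}{b+\theta\ell}$ holds throughout;
(ii) $\rho_y^*\in(1/2,1)$ and type-$y$ demand is strictly positive;
(iii) the ranking condition $b\kappa_y>2\tilde\beta\kappa_x(b+\theta\ell)$ holds, so the disparity channel is well defined;
(iv) $\pi^{D*}(\ell)>\pi^{E*}$, so that $\ell_1<\widetilde\ell$ from Step~3 of the proof of \cref{prop:developer_rho*_alpha_t}.

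Since $\pi^{D*}$ is strictly decreasing in $\ell$ and $\pi^{E*}$ does not depend on $\ell$, condition (iv) only needs to be checked at $\ell_1$. Conditions (i)--(iii) are monotone or continuous in $\ell$, so they reduce to checks at the two endpoints. I will then report one numerical tuple $(b,\theta,c,r,\kappa_x,\kappa_y,f,\beta,\overline\alpha,\underline\alpha)$ for each branch, in the style of the footnote to \cref{prop:disparity}, and evaluate the demand path explicitly.

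The main obstacle is condition (i). The cutoff $\bar\rho_y$ moves with $\ell$, and the candidate accuracy rises with $\ell$, so the active branch may switch inside the interval. If the path crosses onto the corner branch $\rho_y^*=\bar\rho_y$, demand has a kink and the single-threshold description breaks down. I would first try to choose $\underline\alpha$ so that the branch inequality holds with slack at both $\ell=0$ and $\ell=\ell_1$, and then rely on continuity. If that fails, I would shrink $\ell_1$ toward the threshold, since only a neighborhood on each side of the turning point is needed. Separately, I must confirm that the cutoff $\widetilde\ell$ lies above the threshold, so that the increasing segment is actually realized under the disparate design.
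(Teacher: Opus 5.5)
Your branch-by-branch computation is correct and is the paper's own argument. You substitute the branch-specific $\rho_y^*$ into $\tilde d_y$ or $\bar d_y$ and differentiate to get the two cutoffs. The paper then settles existence with one numerical instance, which lies on the $\bar d_y$ branch.

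The step that would fail is your condition (iii). On the $\bar d_y$ branch, every $\ell$ at or above $\frac{b(\kappa_y-2\tilde\beta\tilde f)}{2\theta\tilde\beta\tilde f}$ satisfies $b+\theta\ell\ge\frac{b\kappa_y}{2\tilde\beta\tilde f}$. So requiring $b\kappa_y>2\tilde\beta\kappa_x(b+\theta\ell)$ there forces $\tilde f>\kappa_x$. But every branch of \eqref{eq:rhoxstar} gives $\rho_x^*\ge\frac12+\frac{\tilde f}{2\kappa_x}$, so $\tilde f>\kappa_x$ implies $\rho_x^*>1$, outside the admissible range. Your checklist omits this interiority requirement.

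So the sufficient ranking condition of \cref{prop:developer_rho*_alpha_t} is incompatible with the increasing segment of the $\bar d_y$ branch, and no admissible tuple meets your checklist there. The paper's own instance already violates (iii) at $\ell=0$: $b\kappa_y=0.495<0.648=2\tilde\beta\kappa_x b$. Yet it has $\rho_x^*=\bar\rho_x\approx 0.947$, while $\rho_y^*$ rises only from about $0.729$ to $0.837$ on $[0,0.4224]$.

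Replace (iii) by a direct check of $\rho_y^*<\rho_x^*<1$. Also add $\pi_y(\rho_y^*)>0$, so the firm does not abandon type $y$. Because $\rho_x^*$ does not depend on $\ell$ and $\rho_y^*$ increases in $\ell$, checking these at $\ell_1$ suffices.

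There are two smaller problems with the claim that (i)--(iii) reduce to endpoint checks.

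First, type-$y$ demand is convex in $\ell$ with its minimum at the threshold itself. Positivity must therefore be verified at the threshold, not at $0$ and $\ell_1$.

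Second, on the $\bar d_y$ branch the activity condition $\bar\rho_y<\frac12+\frac{\tilde\beta\tilde f(b+\theta\ell)}{2b\kappa_y}$ can be rewritten as $\frac{au}{2}+\frac{K}{u}>\frac12$, where $u=b+\theta\ell$, $a=\frac{\tilde\beta\tilde f}{b\kappa_y}$ and $K=\underline\alpha b-c+\theta r$. The left side is convex in $u$, so slack at both endpoints does not certify the interior. However, by AM--GM the condition holds for every $\ell$ whenever $aK>\frac18$; this is about $0.153$ in the paper's instance. That removes your ``main obstacle'' for this branch.
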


\noindent \textsc{Proof of \cref{prop:disparity_alphaxy}}.
When the relevant threshold is $\frac{b(\kappa_y-4\tilde{\beta} \tilde{f})}{4\theta\tilde{\beta} \tilde{f}}$, the argument is identical to that in the proof of \cref{prop:disparity}. We therefore focus on the case in which the threshold is $\frac{b(\kappa_y-2\tilde{\beta} \tilde{f})}{2\theta\tilde{\beta} \tilde{f}}$. In this case, \cref{equ:AI demand prior} implies $d_y^{\mathrm D}=\tilde{\beta}\Big(\frac{\rho_y^* b-(1-\rho_y^*)\theta\ell-c+\theta r}{b}-\underline{\alpha}\Big)$, where $\rho_y^*=\tfrac12+\dfrac{\tilde{\beta} \tilde{f}(b+\theta\ell)}{2b\kappa_y}$ from \cref{prop:developer_rho*_alpha_t} (Note the AI demand for type-$y$ patients is actually $d_y^D/\overline{\alpha}$, but it suffices to analyze  $d_y^D$ as $\overline{\alpha}$ is a constant value). Differentiating with respect to $\ell$ and simplifying yields $\frac{\partial d_y^{\mathrm D}}{\partial \ell}\ge 0$ if and only if $\ell \ge \frac{b(\kappa_y-2\tilde{\beta} \tilde{f})}{2\theta\tilde{\beta} \tilde{f}}$.
Thus, holding fixed the disparate-design regime, liability reduces type-$y$ AI use for $\ell$ below this cutoff and increases it above the cutoff. Non-monotonicity in equilibrium arises when the firm switches to an equal-accuracy design at a higher value of $\ell$; the numerical instance in the footnote illustrates this pattern.%
\footnote{For example, when  $\underline{\alpha}=0.4$, $\overline{\alpha}=0.88$, $b=0.9$, $\theta=1$, $c=0.36$, $r=0.3$, $\kappa_x=0.4$, $\kappa_y=0.55$, $\tilde{f}=0.28$, and $\tilde{\beta}=0.9$, the physician uses AI for fewer type-$y$ patients if $0<\ell<0.0821$ and for more type-$y$ patients if $0.0821<\ell<0.4224$; the equal-accuracy design is optimal if $\ell\ge 0.4224$.} \hfill \emph{Q.E.D.}

The two thresholds correspond to the cases in which $d_y^{\mathrm D}$ is determined by the first and second expressions in \cref{equ:AI demand prior}, respectively. The intuition from our base model still applies: liability exposure trades off against the accuracy gains that higher liability induces. When liability is small, exposure concerns dominate, and the physician uses AI for fewer patients. In contrast, when liability is large, the accuracy effect dominates, and the physician uses AI for more patients.

We next examine the welfare effect of mandating equal-accuracy AI. For ease of presentation, we define the adoption cutoffs:
\begin{align*}
\underline a_x^{\mathrm D} &= \frac{(1-\rho_x^*)b+c-\theta r}{b}, &
\overline a_x^{\mathrm D} &= \min\left\{\overline{\alpha},\frac{\rho_x^*b-c+\theta r}{b}\right\}, \\
\underline a_y^{\mathrm D} &= \max\left\{\underline{\alpha},\frac{(1-\rho_y^*)(b+\theta\ell)+c-\theta r}{b}\right\}, &
\overline a_y^{\mathrm D} &= \frac{\rho_y^*b-(1-\rho_y^*)\theta\ell-c+\theta r}{b}, \\
\underline a_x^m &= \frac{(1-\rho^m)b+c-\theta r}{b}, &
\overline a_x^m &= \min\left\{\overline{\alpha},\frac{\rho^mb-c+\theta r}{b}\right\}, \\
\underline a_y^m &= \max\left\{\underline{\alpha},\frac{(1-\rho^m)b+c-\theta r}{b}\right\}, &
\overline a_y^m &= \frac{\rho^mb-c+\theta r}{b}.
\end{align*}
Then patient welfare under disparate and equal-accuracy algorithms is
\begin{align*}
W_x(\rho_x^*)
&=\frac{1}{\overline{\alpha}}
\Bigg[
\int_{0}^{\underline a_x^{\mathrm D}}(1-\alpha)b\,d\alpha
+\int_{\underline a_x^{\mathrm D}}^{\overline a_x^{\mathrm D}}(\rho_x^*b-c)\,d\alpha
+\int_{\overline a_x^{\mathrm D}}^{\overline{\alpha}}\alpha b\,d\alpha
\Bigg], \notag\\
W_y(\rho_y^*)
&=\frac{\tilde{\beta}}{\overline{\alpha}}
\Bigg[
\int_{\underline{\alpha}}^{\underline a_y^{\mathrm D}}(1-\alpha)b\,d\alpha
+\int_{\underline a_y^{\mathrm D}}^{\overline a_y^{\mathrm D}}(\rho_y^*b-c)\,d\alpha
+\int_{\overline a_y^{\mathrm D}}^{1}\alpha b\,d\alpha
\Bigg], \notag\\
W_x(\rho^m)
&=\frac{1}{\overline{\alpha}}
\Bigg[
\int_{0}^{\underline a_x^m}(1-\alpha)b\,d\alpha
+\int_{\underline a_x^m}^{\overline a_x^m}(\rho^mb-c)\,d\alpha
+\int_{\overline a_x^m}^{\overline{\alpha}}\alpha b\,d\alpha
\Bigg], \notag\\
W_y(\rho^m)
&=\frac{\tilde{\beta}}{\overline{\alpha}}
\Bigg[
\int_{\underline{\alpha}}^{\underline a_y^m}(1-\alpha)b\,d\alpha
+\int_{\underline a_y^m}^{\overline a_y^m}(\rho^mb-c)\,d\alpha
+\int_{\overline a_y^m}^{1}\alpha b\,d\alpha
\Bigg].
\end{align*}

\cref{fig:welfare_alphat} reports the welfare effects at the parameter values of the preceding footnote. The central message of the baseline persists: over a substantial region of the parameter space, mandating equal accuracy harms both patient groups. Bounded prior supports add two features. When the type-$y$ segment is small, the mandate lowers type-$x$ accuracy enough to curb type-$x$ overuse, so type-$x$ patients benefit; and when liability is high, removing the disparity-contingent exposure raises type-$y$ use from inefficiently low levels, so type-$y$ patients benefit. Both groups are harmed in the intermediate region in which neither corrective force is strong.

The intuition parallels the baseline overuse logic. Because the physician's private return $\theta r$ tilts use beyond the altruistic benchmark, welfare effects are governed by utilization as much as by accuracy. The mandate moderates type-$x$ accuracy, which trims type-$x$ overuse, and it removes the liability wedge on type-$y$ use, which raises type-$y$ use; each effect helps the corresponding group when the initial distortion it corrects is large, and hurts that group otherwise.

\begin{figure}
    \centering
    \includegraphics[width=0.5\linewidth]{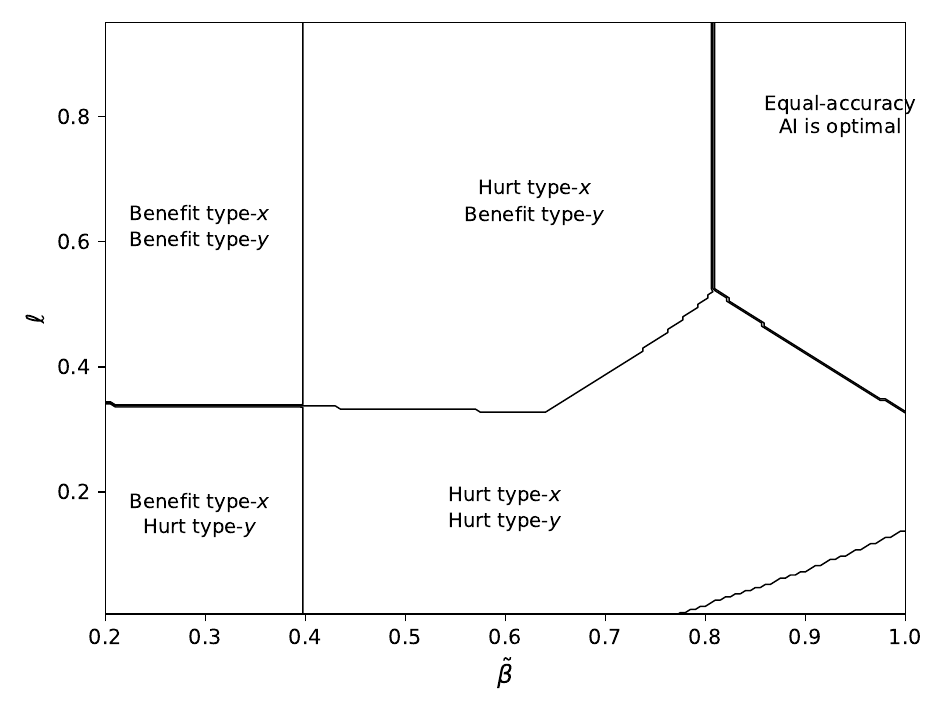}
    \caption{Impact of mandating equal-accuracy AI on expected welfare for each patient type ($\underline{\alpha}=0.4$, $\overline{\alpha}=0.88$, $b=0.9$, $\theta=1$, $c=0.36$, $r=0.3$, $\kappa_x=0.4$, $\kappa_y=0.55$, $\tilde f=0.28$).}
    \label{fig:welfare_alphat}
\end{figure}

\newpage

\end{document}